\newtheorem{theorem}{Theorem}
\newtheorem{corollary}{Corollary}
\newtheorem{lemma}{Lemma}
\newcommand{\D}{\mathcal{D}}
\newcommand{\X}{\mathcal{X}}
\newcommand{\Y}{\mathcal{Y}}
\newcommand{\R}{\mathbb{R}}
\newcommand{\RR}{\mathcal{R}}
\newcommand{\norm}[1]{\left\lVert#1\right\rVert}
\newcommand{\abs}[1]{\left\lvert#1\right\rvert}
\DeclareMathOperator{\1}{\mathbb{1}}
\title{The Double-Edged Nature of the Rashomon Set for Trustworthy Machine Learning}
\author{
  Ethan Hsu$^{1*}$ \ \ \
  Harry Chen$^2$\thanks{Equal contributions} \ \ \
  Chudi Zhong$^3$ \ \ \
  Lesia Semenova$^4$ \\
  \vspace{0.1cm}
  $^1$Duke University \ \ \
  $^2$MIT \ \ \
  $^3$UNC-Chapel Hill \ \ \
  $^4$Rutgers University
}
\date{ }
\begin{document}

\maketitle 

\begin{abstract}
\begin{center}
\begin{minipage}{0.85\linewidth} 
Real-world machine learning (ML) pipelines rarely produce a single model; instead, they produce a Rashomon set of many near-optimal ones. We show that this multiplicity reshapes key aspects of trustworthiness. At the individual-model level, sparse interpretable models tend to preserve privacy but are fragile to adversarial attacks. In contrast, the diversity within a large Rashomon set enables reactive robustness: even when an attack breaks one model, others often remain accurate. Rashomon sets are also stable under small distribution shifts. However, this same diversity increases information leakage, as disclosing more near-optimal models provides an attacker with progressively richer views of the training data. Through theoretical analysis and empirical studies of sparse decision trees and linear models, we characterize this robustness–privacy trade-off and highlight the dual role of Rashomon sets as both a resource and a risk for trustworthy ML.
\end{minipage}
\end{center}
\end{abstract}

\section{Introduction}

In high-stakes domains such as lending, criminal justice, and healthcare, the standard goal of finding a single “best’’ predictive model is no longer enough. A long-standing observation in statistical modeling challenges the idea that such a best model exists. Breiman’s Rashomon Effect \citep{breiman2001statistical} and follow-up work \citep{SemenovaRuPa2022, marx2020predictive, black2022model, ganesh2025curious, paes2023inevitability} show that many distinct models can achieve nearly indistinguishable predictive performance while relying on different features, logic, or decision boundaries. This multiplicity matters in modern high-stakes settings, where institutions require models that are not only accurate but also interpretable, stable under distribution shifts, and privacy-preserving. Recent algorithms \citep{xin2022exploring, zhong2023exploring, LiuEtAlFasterRisk2022, hsu2024dropout, donnelly2025rashomon, hsu2024rashomongb} now make it possible to construct or approximate these sets of near-optimal models, known as Rashomon sets, and to study and use them in practice.

Crucially, modern machine learning pipelines routinely produce such multiplicity even when practitioners do not think of themselves as computing a Rashomon set. Hyperparameter sweeps, fairness constraints, random seeds, feature restrictions, and automated model search all generate many near-optimal models. These models are often inspected, for example, in robustness audits or regulatory reporting.
This perspective motivates a shift. Rather than viewing the Rashomon set as a theoretical construct, in realistic governance scenarios the \textit{Rashomon set itself is the natural policy object}. It is the set of near-optimal models that institutions already generate during model development, that shape downstream decisions, and that regulators, auditors, or internal teams may query, compare, stress-test, or even partially disclose.
Once the Rashomon set is treated as a policy object, a natural question arises: \textit{What are the positive and negative trustworthiness consequences of having a large, diverse Rashomon set?}

Although interpretability and fairness have been extensively studied within Rashomon sets \citep{SemenovaRuPa2022, SemenovaEtAl2023, boner2024using, CostonRaCh21, laufer2025constitutes, black2024less, dai2025intentional}, the relationships between model multiplicity and robustness, privacy, and stability remain far less understood. Recent work has begun to explore privacy in this context through differential privacy \citep{ kulynych2023arbitrary} and on robustness through active learning over Rashomon sets \citep{nguyen2025unique}, but a systematic understanding of these properties in large, diverse Rashomon sets is still missing. In this work, we focus on these under-explored aspects of trustworthiness and examine how large, diverse Rashomon sets shape robustness, stability, and information leakage. Importantly, these sets introduce both opportunities and risks.

On the positive side, diversity within the Rashomon set can be a resource. When monitoring, audits, or red-teaming reveal a problematic region of the input space, institutions need not retrain from scratch. Instead, they can select a different near-optimal model in the Rashomon set that behaves more favorably on the flagged inputs. Furthermore, the Rashomon set can be leveraged for moving target techniques \cite{amich2021morphence, 10962442}, where models can be rotated out either on a regular basis or in response to adversarial attacks. A diverse Rashomon set can guarantee that attacks do not transfer between released models or other models in the wild. We refer to this ability to switch to a differently behaved, near-optimal model as \textit{reactive robustness}. It arises because the Rashomon set contains many models that are equally accurate yet rely on different features or decision boundaries, increasing the chance that at least one model avoids the vulnerability or failure mode discovered in deployment.

On the negative side, the same diversity can be a liability. Releasing or internally exposing many near-optimal models, even individually sparse and interpretable ones, can accelerate information leakage about the training data. Each additional model offers a new ``view'' of the dataset, tightening an attacker’s ability to reconstruct features or approximate the underlying distribution.

\begin{wrapfigure}{r}{0.4\linewidth}
    \centering
    \includegraphics[width=\linewidth]{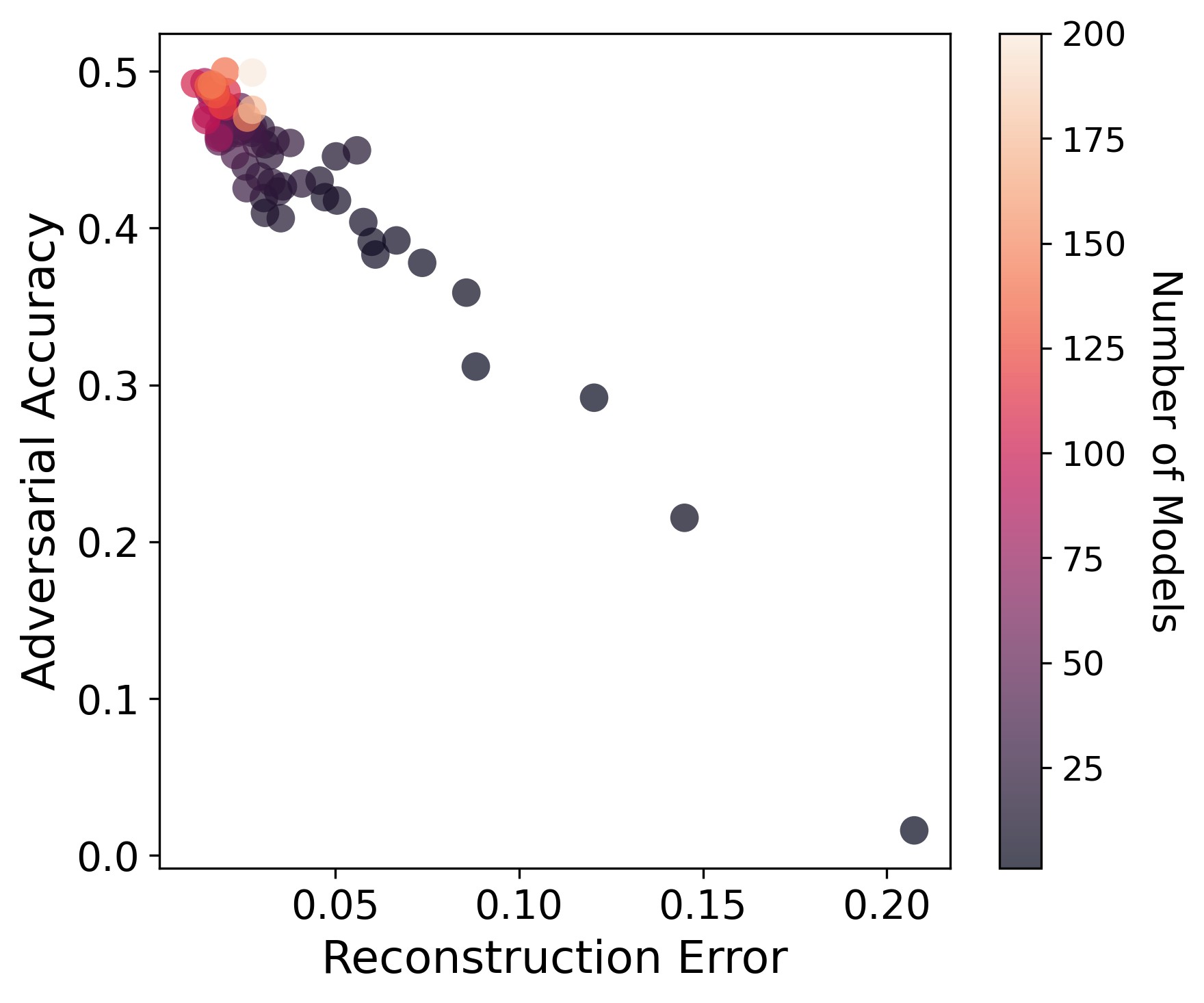}
    \caption{The robustness–privacy trade-off on the COMPAS dataset. As more diverse models from the Rashomon set are included, adversarial accuracy increases (greater reactive robustness), while reconstruction error decreases (greater information leakage).}
    \label{fig:into_compas}
\end{wrapfigure}

An empirical illustration of this duality appears in Figure \ref{fig:into_compas}, which uses the COMPAS recidivism dataset. Each point represents a collection of sparse decision trees selected from the Rashomon set, varying both how many and which trees are included. As we select more diverse models, adversarial accuracy improves as we can find models that withstand attacks targeted at the original tree. Yet, the dataset reconstruction error monotonically decreases, indicating increased privacy risk. Together, these trends reveal a \textit{robustness–privacy trade-off induced by model multiplicity}: diversity provides robustness but simultaneously increases information leakage. This trade-off is not specific to COMPAS and we observe similar behavior across multiple datasets (Section \ref{section:experiments}). The remainder of this paper explains \textit{how this phenomenon arises} and discusses its implications. 

To do so, we take a conceptual approach supported by theoretical analysis and empirical evidence, studying reactive robustness, stability, and information-theoretic privacy using sparse decision trees and linear models. 
Our main findings are:
(1) We show that inherently interpretable models are fundamentally fragile to adversarial attacks, while sparsity naturally limits information leakage (Section \ref{section:single_model}). In other words, a single sparse model can be relatively private but not robust, motivating a move beyond the single-model view.
(2) When the Rashomon set is sufficiently diverse, it contains near-optimal models that remain accurate under adversarial attacks targeted at a specific model in the set (Section \ref{section:robustness}).
(3) We prove that Rashomon sets are stable under small distribution and dataset shifts. Models that are near-optimal before the shift remain near-optimal within a slightly relaxed tolerance after the shift (Section \ref{section:small_changes}).
(4) We show that as more models from the Rashomon set are disclosed, information leakage grows, which means that the attacker can more accurately approximate the data distribution and reconstruct training examples (Section \ref{section:privacy}). 
(5) We support these findings with experiments on sparse decision trees across multiple datasets, illustrating the robustness–privacy trade-off due to model multiplicity.

Taken together, these results reveal a dual role of the Rashomon set: it is stable and provides a source for reactive robustness, yet it also presents a risk of increased privacy leakage when too many models are released. Our analysis offers a conceptual basis for understanding how diverse Rashomon sets shape robustness, stability, and privacy, and provides a foundation for future work on selecting, managing, or disclosing near-optimal models in practice, with the ultimate goal of supporting governance practices that operate at the Rashomon set level rather than the single-model level.

\section{Related Work}

\textbf{Rashomon Effect.} Rashomon Effect \cite{breiman1984classification} occurs when there are multiple models that can explain data equally well. When these models produce different classifications, it is also known as model multiplicity \cite{marx2020predictive, black2022model}. Recently, multiple methods have been proposed to measure the Rashomon Effect \cite{SemenovaRuPa2022, hsu2022rashomon, marx2020predictive} or compute the Rashomon set (the set of near-optimal models) \cite{xin2022exploring, hsu2024dropout, hsu2024rashomongb, LiuEtAlFasterRisk2022}. In terms of trustworthy measures, the Rashomon Effect has been studied for sparsity \cite{SemenovaEtAl2023, boner2024using}, fairness \cite{langlade2025fairness, dai2025intentional, meyer2024perceptions}, explainability \cite{muller2023empirical}, variable importance \cite{fisher2019all, donnelly2023the, dong2020exploring}, robustness \cite{nguyen2025unique} and differential privacy \cite{kulynych2023arbitrary}. Among them, \citet{nguyen2025unique} introduces an active learning approach that selectively ensembles distinct, high-performing models from the Rashomon set, and \citet{kulynych2023arbitrary} shows that the randomization inherent in differentially private training mechanisms leads to predictive multiplicity. Instead, we focus on how the existence of a diverse Rashomon set influences robust model selection and information leakage risks.

\textbf{Adversarial Robustness.} Many machine learning models are sensitive to perturbations in the inputs that lead to changes in the model outputs \cite{nguyen2015deep, finlayson2019adversarial}. Much work has been done to analyze and provide methods to reduce this vulnerability to adversarial perturbations. One technique is to introduce random noise either to input data \cite{li2019certifiedadversarialrobustnessadditive, guo2018counteringadversarialimagesusing} or to the model weights \cite{pinot2019theoreticalevidenceadversarialrobustness,He_2019_CVPR}. Other methods include explicit regularization to encourage robust behavior \cite{goodfellow2015explainingharnessingadversarialexamples, zhang2019theoretically}, the removal of adversarial noise within the inputs \cite{liao2018defenseadversarialattacksusing}, and the detection of adversarial inputs \cite{metzen2017detectingadversarialperturbations}. Theoretically, \citet{zhang2019theoretically} decomposes the adversarial error into a natural error as well as a boundary error controlled by the distance between data and the decision boundary, and \citet{wu2021widerneuralnetworksreally} demonstrates that wider and more complex neural networks are more vulnerable to robust attacks. \citet{bousquet2002stability, feldman2018generalization} study stability of learning algorithms and its impact on generalization. Of particular interest to our work is the observation that adversarial perturbations transfer well between models \cite{hosseini2017blocking, szegedy2013intriguing}. In Section \ref{section:robustness}, we demonstrate that having a Rashomon set with diverse models is necessary to block this transferability of adversarial attacks. 

\textbf{Privacy and information leakage.} 
Privacy in ML refers to the requirement that models do not leak sensitive information about individual data points during training or inference \cite{liu2021machine}. Several mechanisms such as k-anonymity \cite{sweeney2002k}, L-diversity \cite{machanavajjhala2007diversity}, and differential privacy \cite{dwork2006differential} are used to address this issue. Empirically, privacy risks are often evaluated using membership inference attacks \cite{shokri2017membership} and model inversion attacks \cite{fredrikson2015model, song2021systematic, murakonda2020ml}. From an information-theoretic perspective, privacy can be measured by the mutual information between the model and its training data \cite{bassily2018learners, wang2021improving}. A research area explores the relationship between differential privacy and information leakage \cite{alvim2011relation, alvim2011differential, rassouli2019optimal}. In this paper, we study how the structure of interpretable models within the Rashomon set affects information leakage, and how sparsity constraints naturally reduce such leakage.

\textbf{Robustness-Privacy trade-off.}
Studying the trade-offs among properties such as accuracy, robustness, and privacy is an active area of research in trustworthy ML \cite{gittens2022adversarial}, where the trade-offs between accuracy and privacy and between accuracy and robustness have especially been widely studied \cite{bassily2014private, wang2017differentially, shafahi2018adversarial, tsipras2018robustness, zhang2019theoretically}. The trade-off between robustness and privacy remains an open problem. Some studies suggest that the two properties can benefit each other and that it is possible to achieve both simultaneously \cite{dwork2009differential,lecuyer2018connection, lecuyer2019certified,phan2020scalable}. However, other work has shown that training models with differential privacy can reduce adversarial robustness \cite{tursynbek2020robustness, boenisch2021gradient}, and adversarial training, which improves robustness, has been found to increase privacy risks \cite{song2019privacy, he2020robustness}. Most prior work focuses on neural networks, whereas in this paper, we explore the interplay between robustness and privacy for near-optimal inherently interpretable models.

The rest of the paper is organized as follows.  Section \ref{section:notation} introduces necessary notations. Section \ref{section:single_model} analyzes the robustness and privacy properties of individual models. Section \ref{section:multiple_models} studies how the existence of a Rashomon set influences stability, robustness, and information leakage. 

\section{Notations}\label{section:notation}

Consider $n$ i.i.d. samples $S = \{(x_i, y_i)\}$ from an unknown distribution $\mathcal{D}$ on $\mathcal{X} \times \mathcal{Y}$, where $\mathcal{X} \subset \mathbb{R}^p$ and $\mathcal{Y} \subset \mathbb{R}$  are  an input  and an output space respectively. 
Let $\mathcal{F}$ be a hypothesis space. In this work we will mainly focus on the hypothesis space of interpretable models, including but not limited to linear models, sparse decision trees or rule lists, however, some of our results are hypothesis space agnostic and apply to any hypothesis space (such as theorems in Sections \ref{section:small_changes} and  \ref{section:privacy}). Denote $\Omega(f)$ as a regularization term with a parameter $\lambda \in \R_{\ge 0}$. For example, $\Omega(\cdot)$ can be sparsity constraints that penalize the number of leaves in the decision tree or the length of the rule lists, or $\ell_0$, $\ell_1$, $\ell_2$ norms. As a true risk, consider $L_{\D}(f) = \mathbb{E}_{\mathcal{D}}[\phi(f(x),y)]$, where $\phi:\R^2 \rightarrow \R_{\ge 0}$ is a loss function bounded on some fixed region (e.g. $[0, 1]$).
We will mainly consider 0-1 loss ($\phi(f(x), y) =\mathbb{1}_{[f(x) \neq y]}$) for discrete hypothesis spaces and exponential loss ($\phi(f(x), y) = e^{-yf(x)}$) for the hypothesis space of linear models.

We aim to learn a model $f^*$ from a hypothesis space $\mathcal{F}$ that minimizes the objective 
$obj_{\D}(f) = L_{\D}(f) + \lambda\Omega(f).$
 This is approximated by minimizing the empirical objective, $\hat{obj}_S(f) = \hat{L}_{S}(f) + \lambda\Omega(f)$, where
 $\hat{L}_S(f) = \frac{1}{n}\sum \phi(f(x_i),y_i)$ is the empirical risk. Correspondingly, $\hat{f} \in \arg\min_{f\in\mathcal{F}}\hat{L}_S(f) $ is an empirical risk minimizer (ERM). When the distribution or dataset is clear, we use the shorthand notation $L(f)$, $\hat L(f)$, $obj(f)$, $\hat{obj}(f)$. When $\lambda = 0$, we will directly optimize the true or empirical risks without the regularization penalty.

Following \cite{SemenovaRuPa2022, SemenovaEtAl2023, xin2022exploring, fisher2019all, boner2024using}, given $\epsilon > 0$, we define Rashomon set $\hat{\mathcal{R}}(\epsilon)$ as a set of near-optimal models, such that:
\[\hat{\mathcal{R}}(\epsilon) = \{f \in \mathcal{F}: \hat{obj}(f)\leq \hat{obj}(\hat{f}) + \epsilon\},\]
where $\epsilon$ is the Rashomon parameter that determines the tolerance for near-optimality.
Generally, $\epsilon$ is a small value. It might correspond to a 1\% relative increase over the optimal model's objective value (i.e., $\epsilon = 0.01 \cdot \hat{obj}(\hat{f})$), or an absolute increase that corresponds to a small drop in accuracy (e.g., 1-3\%). Note that when regularization parameter $\lambda = 0$, then $\hat{\mathcal{R}}(\epsilon) = \{f \in \mathcal{F}: \hat{L}(f)\leq \hat{L}(\hat{f}) + \epsilon\}$. 
Correspondingly, we also define the true Rashomon set, based on the true objective: $\mathcal{R}(\epsilon) = \{f \in \mathcal{F}: obj(f)\leq obj(f^*) + \epsilon\}$. Likewise, when $\lambda=0$, we have $\mathcal{R}(\epsilon) = \{f \in \mathcal{F}: L(f)\leq L(f^*) + \epsilon\}$.

\begin{table}[h]
\centering
\small
\renewcommand{\arraystretch}{1.2}
\begin{tabular}{lcc}
\toprule
\textbf{Criterion} & \textbf{Single Sparse Model} & \textbf{Rashomon Set} \\
\midrule
Robustness & \textcolor{red}{\ding{55}}\; vulnerable (no alternatives) & \textcolor{ForestGreen}{\ding{51}}\; reactive robustness (has alternatives) \\
Stability  & \textcolor{ForestGreen}{\ding{51}} algorithmic stability (via regularization) & \textcolor{ForestGreen}{\ding{51}}\; stable set (models remain near-optimal under shifts) \\
Privacy    & \textcolor{ForestGreen}{\ding{51}}\; private (sparse model) & \textcolor{red}{\ding{55}}\; leakier (more models reveal more information) \\
\bottomrule
\end{tabular}
\caption{Comparison of trustworthiness criteria at the single-model level versus the Rashomon-set level.}
\label{tab:single_vs_set}
\end{table}

In this paper, we study how the existence of large Rashomon sets influences stability, privacy, and robustness.  To motivate the set-level analysis, we first analyze how these criteria behave for a single near-optimal model. Our results are illustrative rather than exhaustive. They identify the mechanisms behind robustness, stability, and privacy patterns using interpretable model classes, rather than delivering fully general characterizations. Table~\ref{tab:single_vs_set} summarizes the conceptual differences that our analysis will formalize.

\section{A Sparse Model Can Be Private and Stable, but It May Not Be Robust}\label{section:single_model}

When relying on a single model, a data practitioner may hope to achieve both robustness and privacy. This section shows that achieving both simultaneously is difficult, even for interpretable or sparse models. While sparsity, which is often linked to interpretability on tabular data, can serve as a built-in privacy mechanism, we prove that these models are nonetheless inherently vulnerable to adversarial attacks.

\subsection{Sparser Models are More Private and Algorithmically Stable}

A model leaks information about its training data through the parameters or decision paths. We consider the information-theoretic perspective, where the leakage is quantified by mutual information between the learned model and the training data, denoted $I(f; S)$ \cite{bassily2018learners}. Prior work shows that regularization can decrease $I(f; S)$ \cite{xu2017information}, motivating us to explore similar phenomena in sparse decision trees.

\newcommand{\TheoremSparseTreePriacy}
{
Let $S = \{(x_i, y_i)\}_{i=1}^n$ be a dataset of $n$ i.i.d. samples from distribution $\mathcal{D}$ over $\mathcal{X} \times \mathcal{Y}$, where $\mathcal{X} = \{0,1\}^d$ and $\mathcal{Y} = \{0, 1\}$. Let $\mathcal{F}$ be the class of binary classification decision trees with $l_f$ leaves, and let $f \in \mathcal{F}$ be a tree fit on $S$ through a possibly-random training algorithm. Then the mutual information between the learned tree $f$ and the dataset $S$ satisfies:
$I(f; S) = O(l_f \log d).$
}

\begin{theorem}[Sparsity controls mutual information in a single tree] \label{thm:sparsity_mi}
\TheoremSparseTreePriacy
\end{theorem}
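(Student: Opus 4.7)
The plan is to bound the mutual information by the entropy of $f$ and then bound the entropy of $f$ by the log-cardinality of the hypothesis class of decision trees with $l_f$ leaves. Concretely, since $I(f;S) = H(f) - H(f \mid S) \le H(f)$, and since $H(f) \le \log |\mathcal{F}_{l_f}|$ whenever $f$ is supported on the set $\mathcal{F}_{l_f} \subseteq \mathcal{F}$ of trees with $l_f$ leaves, it suffices to produce a counting bound on $|\mathcal{F}_{l_f}|$. The same argument extends naturally to the case where $l_f$ is a random upper bound by conditioning and using the chain rule.

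Next I would enumerate the possible trees with exactly $l_f$ leaves by decomposing a tree into three independent structural components: (i) the shape of the full binary tree, which has $l_f - 1$ internal nodes and is counted by the Catalan number $C_{l_f - 1} \le 4^{l_f - 1}$; (ii) the feature assignment at each of the $l_f - 1$ internal nodes, with at most $d$ choices per node since $\mathcal{X} = \{0,1\}^d$, giving $d^{l_f - 1}$ possibilities; and (iii) the binary label at each of the $l_f$ leaves, giving $2^{l_f}$ possibilities. Multiplying these yields
\[
|\mathcal{F}_{l_f}| \;\le\; 4^{l_f - 1} \cdot d^{l_f - 1} \cdot 2^{l_f}.
\]
Taking logarithms gives $\log |\mathcal{F}_{l_f}| \le (l_f - 1)\log 4 + (l_f - 1)\log d + l_f \log 2 = O(l_f \log d)$, and the theorem follows by chaining with $I(f;S) \le H(f) \le \log |\mathcal{F}_{l_f}|$.

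I do not expect a single hard step here; the argument is essentially a Catalan-type counting combined with the standard information-theoretic inequality $I(f;S)\le H(f)$. The only mild subtlety is that the bound is stated in terms of $l_f$, which is itself a function of the (random) output tree rather than a fixed parameter of the algorithm. I would address this either by interpreting the statement as a conditional bound ``given that $f$ has $l_f$ leaves'', or by noting that if one a priori restricts attention to trees with at most $L$ leaves then the counting argument above is uniform and the bound $O(L \log d)$ applies. A second minor point is whether the algorithm is allowed to reuse features along a root-to-leaf path: the counting bound above is generous enough to cover both cases, since we bound each internal node independently by $d$.
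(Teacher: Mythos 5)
Your proposal is correct and follows essentially the same route as the paper's proof: bound $I(f;S)\le H(f)\le \log|\mathcal{F}|$ and count trees as (Catalan shape count) $\times$ (per-node feature choices) $\times$ (per-leaf labels), yielding $O(l_f\log d)$. The minor differences (bounding $C_{l_f-1}\le 4^{l_f-1}$ explicitly, and using $d$ rather than the paper's $2d$ per internal node) are immaterial to the big-$O$ conclusion.
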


Theorem \ref{thm:sparsity_mi} shows that mutual information increases roughly linearly with the number of leaves, so sparser trees leak less information and are therefore more privacy-preserving.
Intuitively, fewer splits mean fewer distinct models a deterministic learner can output, which limits the entropy of $f$. An analogous counting argument applies to linear models with an $\ell_0$ penalty, where the number of nonzero coefficients plays the same role.

Another lens on privacy is membership inference attacks, where an adversary tries to decide whether a point was in the training set. \citet{yeom2018privacy} show that the adversary’s advantage is bounded by the model’s generalization gap scaled by a constant. Because sparse models often generalize better than their non-regularized counterparts \citep{hastie2015statistical, vapnik1999overview,shalev2014understanding, xu2017information}, they naturally offer stronger protection.

Note that low mutual information (high privacy) and high algorithmic stability \cite{bousquet2002stability} are often observed together because they are a shared consequence of regularization. Indeed, \citet{bousquet2002stability} shows that, for reproducing kernel Hilbert spaces, regularization strength bounds stability, meaning that stronger regularization leads to better uniform stability (see Theorem 22 \cite{bousquet2002stability} for more details). Despite these privacy and stability benefits, a single sparse model remains inherently fragile to adversarial perturbations.

\subsection{A Single Model is Inherently Vulnerable to the Adversarial Attack}

Adversarial robustness revolves around an adversarial example, which, given a sample $(x_i,y_i)$, is defined as $x_i' = x_i + \delta$ such that $f(x_i') \neq y_i$. 
To prevent trivial samples, perturbations are constrained within a bounded set defined by a $L_p$- norm. Formally, the set of permissible perturbation $\mathcal{S}_p = \{\delta \mid \lVert \delta \rVert_p \leq \eta \}$, where $\eta$ specifies the maximum allowed perturbation magnitude. In this work, we will use $\mathcal{S}_2$ or $\mathcal{S}_\infty$ when considering continuous samples and $\mathcal{S}_0$ for binary samples. The adversarial data $\mathcal{D}'$ then can be constructed by taking $x_i \in \mathcal{X}$ and perturbed into adversarial sample $x_i'$.

In this setting, we analyze the vulnerability of a single rule list, a type of logical model composed of if-then-else statements. It's also viewed as a one-sided decision tree. Formally, a rule list with $K$ rules is defined as a quadruple $d=(d_p, \delta_p, q_0, K)$, 

where $d_p=(p_1, \ldots, p_K)$ is the vector of antecedents (decision split nodes on the rule list path), $\delta_p=(q_1, \ldots, q_K)$ is the vector of predictions corresponding to each decision split, and $q_0$ is the default prediction for samples that are captured by none of the antecedents (see \citet{angelino2018learning} for more details). The following theorem characterizes the adversarial risk of such models under simple binary perturbations.

\newcommand{\TheoremRuleList}
{
    Consider a binary dataset $S=\{(x_i,y_i)\}_{i=1}^n$ that has binary features and binary labels, where $n_+$ denotes the number of data points with positive labels in $S$. Let $d=(d_p,\delta_p,q_0,K)$ be a rule list such that $q_1=0$ and each rule predicts the majority label of the points captured by that rule. Further, let $I$ be the smallest index $i$ such that $q_i=1$. 
    Let $S'=\{x'_i, y_i\}_{i=1}^n$ be an adversarial dataset constructed by flipping up to one feature in each $x_i$ (i.e., an $L_0$-bounded perturbation with $\eta=1$ restricted to binary features).
    Let $\hat{L}$ be the 0-1 loss. If $\bar{n}_+$ is the number of positive data points captured by one of the first $I-1$ leaves, then 
    $\hat{L}_{S'}(d) 
    - \hat{L}_S(d) \geq  \frac{n_+ - \bar{n}_+}{n}.$
}

\begin{theorem}[Inherent vulnerability of single models] \label{th:rule_list_attack}
\TheoremRuleList
\end{theorem}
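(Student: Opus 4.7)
The plan is to construct an explicit adversarial perturbation that flips at most one feature per sample and then lower bound the resulting loss increase by counting positive misclassifications. Two observations drive the argument: every positive captured by leaves $1, \ldots, I-1$ is already misclassified, since those leaves all predict $0$; and every remaining positive can be routed into one of those leaves by flipping a single binary feature.

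For the $\bar n_+$ positives captured by leaves $1, \ldots, I-1$, the adversary takes $\delta_i = 0$, so they remain misclassified in $S'$. For each of the remaining $n_+ - \bar n_+$ positives $x_i$, the sample fails antecedent $p_1$ (else it would already lie in leaf $1$); on binary inputs, failing $p_1$ means $x_i$ disagrees with $p_1$ on at least one coordinate, so flipping that coordinate is an $L_0$-perturbation of magnitude one that sends $x_i'$ into leaf $1$ with prediction $0$, misclassifying the positive. After the perturbation all $n_+$ positives are misclassified by $d$, contributing $n_+/n$ to $\hat L_{S'}(d)$. Since the adversary leaves negatives untouched, the negative contribution to $\hat L_{S'}(d)$ matches that to $\hat L_S(d)$, and the baseline positive contribution to $\hat L_S(d)$ equals $\bar n_+/n$ under the natural reading of the theorem's majority-label hypothesis, which places every positive outside leaves $1,\ldots,I-1$ in a leaf predicting $1$. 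Subtracting,
\[
\hat L_{S'}(d) - \hat L_S(d) \ \ge \ \frac{n_+}{n} - \frac{\bar n_+}{n} \ = \ \frac{n_+ - \bar n_+}{n}.
\]

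The main obstacle is the single-bit reroute. For single-literal antecedents, which match the binary-feature, $\eta = 1$ perturbation regime of the theorem, the construction is automatic. For conjunctive antecedents, a single flip may fail to satisfy $p_1$; the proof must then target whichever earlier antecedent $p_j$ ($j < I$) lies within Hamming distance one of $x_i$ and appeal to the majority-label hypothesis on leaves $1, \ldots, I-1$ (which forces those leaves to sit mostly on the negative portion of the input space) to guarantee that such a $p_j$ always exists. The cleanest way to close this gap is probably to work directly with single-literal antecedents and treat the conjunctive case as a straightforward generalization.
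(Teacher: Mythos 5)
There is a genuine gap, and it is in the baseline accounting, not just in the single-flip reroute you flag. Your argument needs the claim that the positive contribution to $\hat L_S(d)$ is exactly $\bar n_+/n$, i.e.\ that every positive outside leaves $1,\dots,I-1$ sits in a leaf predicting $1$. The majority-label hypothesis does not give you this: $I$ is only the \emph{first} index with $q_i=1$, and rules with index larger than $I$ (or the default $q_0$) may still predict $0$ and capture positives. Those positives are already misclassified in $S$ but are not counted in $\bar n_+$. Since your attack perturbs only positives and leaves negatives untouched, its loss increase is exactly $(n_+ - P_{\mathrm{err}})/n$, where $P_{\mathrm{err}} \ge \bar n_+$ is the number of positives misclassified by $d$ on $S$; whenever some $0$-predicting rule past $I$ (or a $0$-default) captures a positive, this falls strictly short of the claimed $(n_+-\bar n_+)/n$. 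Concretely, take single-literal antecedents $p_1,p_2,p_3$ with $q_1=0$, $q_2=1$, $q_3=0$, where rule 1 captures 3 negatives and 1 positive, rule 2 captures 3 positives and 1 negative, and rule 3 captures 2 negatives and 1 positive (so $n=11$, $n_+=5$, $\bar n_+=1$, all majorities respected): your attack raises the loss from $3/11$ to $6/11$, an increase of $3/11$, below the required $4/11$.

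The paper's proof closes exactly this case by attacking \emph{every} point captured by a rule of index at least $I$, negatives included, and accounting per rule: with $a_k,b_k$ the negatives and positives captured by rule $k$, the increase on rule $k\ge I$ is $b_k$ when $q_k=1$ and $a_k$ when $q_k=0$, and the majority hypothesis is used precisely there, since $q_k=\mathbb{1}_{[a_k\le b_k]}$ gives $a_k\ge b_k$ on $0$-rules, so every rule past $I-1$ contributes at least $b_k$ and the total increase is at least $\sum_{k\ge I} b_k = n_+-\bar n_+$. So the hypothesis you spend on ``all later positives sit in $1$-leaves'' is in fact needed for a different step, and your attack would have to also flip negatives in later $0$-leaves (e.g.\ into rule $I$) to reach the stated bound. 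On your acknowledged obstacle: the paper's proof likewise asserts, without detail, that all points captured by rules of index at least $I$ are vulnerable to a single flip, so the conjunctive-antecedent issue is shared rather than a divergence; but your proposed fix—that the majority hypothesis guarantees an earlier antecedent within Hamming distance one—does not follow, so restricting to single-literal antecedents is the honest scope for that part of either argument.
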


From Theorem \ref{th:rule_list_attack}, we can see that the gap between the error and adversarial error increases with the number of positive points, meaning that a rule list with low robust error must have both low error and a high class imbalance. In other words, balanced datasets are unlikely to permit robust models. Note that this theorem provides a minimum guaranteed impact, irrespective of the specific attack strength $\eta$. We also note that a similar vulnerability exists for linear models under $L_2$ attack on datasets where much of the data is not well-separated, as the attack can simply push the data across the decision boundary. For both rule lists and linear models, a strong assumption on the data must be made in order for single models to be robust.

While a single model is vulnerable to adversarial perturbations, an attack designed for one model may not transfer to others in the Rashomon set. These limitations of single-model robustness naturally motivate analyzing the full set of near-optimal models next.

\section{The Duality of the Rashomon Set}\label{section:multiple_models}
The existence of multiple, equally accurate models in the Rashomon set can present a fundamental trade-off. This section explores it, demonstrating how the existence of a large, diverse Rashomon set enables robustness and stability,  but can also be exploited to create risks to data privacy. We provide formally stated results, including adversarial risk bounds, stability theorems, and a KL-based privacy bound, that together illustrate how the Rashomon set diversity shapes these trustworthy properties.

\subsection{Diversity of the Rashomon Set Helps Robust Model Selection} \label{section:robustness}

In real deployments, vulnerabilities can be discovered reactively through performance changes on recent data, audits that show systematic inconsistencies, red-teaming exercises, or domain-expert complaints about specific failure cases. These issues often affect a local region of the input space rather than the entire distribution. When such a problematic region is identified, retraining a model from scratch is the default response in many ML pipelines. However, retraining can be slow and computationally expensive. A more efficient alternative would be to keep a diverse approximation of the Rashomon set gathered either naturally as part of modern machine learning pipelines or through Rashomon set estimation techniques. Then, instead of retraining, an institution can switch to another near-optimal model in the set whose behavior differs on the problematic points while maintaining comparable predictive performance. Moreover, having a diverse Rashomon set might enable an option to apply a moving target defense, which defends against attacks by rotating through a set of diverse models or decision boundaries during deployment \cite{amich2021morphence,10962442}.

In this section, we show that when the Rashomon set is sufficiently diverse, such reactive robustness becomes possible. Here, diversity refers to the presence of multiple near-optimal models that make meaningfully different decisions, while the precise definition of such difference is problem-dependent. For discrete hypothesis spaces such as decision trees or rule lists, we will capture diversity through prediction differences (e.g., Hamming distance between classifications). For continuous models such as linear classifiers, we will look at the diversity through geometric differences in the parameter space, such as the angle between weight vectors.  Despite these specific formalizations, the takeaway on the reactive robustness is the same:
if the Rashomon set contains models that disagree with the model on the attacked points, \textit{the adversarial vulnerability of one model will not transfer to the others}.

Our theoretical results below formalize this intuition. For hypothesis spaces that optimize 0–1 loss, consider a dataset $S$ with ERM $\hat{f}$.  For theoretical clarity, we consider the worst case where every example in $S$ is perturbed 
to form an adversarial dataset, which upper-bounds the impact of any localized attack. Let $S'$ be an adversarial dataset constructed by modifying each sample $(x_i,y_i)$ to attack $\hat{f}$.
Intuitively, models that are similar to $\hat{f}$ should be similarly vulnerable to this attack and thus should perform poorly on $S'$. To formalize this, we measure diversity between $f$ and $\hat{f}$ through their weighted prediction difference, $H(f,\hat{f})=\frac{1}{n}\sum_{i=1}^n \1_{f(x_i) \neq \hat{f}(x_i)}$.  This Hamming distance can be considered as a diversity measure for discrete hypothesis spaces such as decision trees, rule lists, or scoring systems. Models with higher disagreement have more distinct decision boundaries and therefore may fail differently. Then, under 0–1 loss, the triangle inequality immediately gives a robustness bound:
\begin{equation}\label{eq:intuition_pattern}
\begin{aligned}
 L_{S'}(f)
= \frac{1}{n}\sum_{i=1}^n \1_{f(x'_i) \neq y'_i}
\leq \frac{1}{n}\sum_{i=1}^n \1_{f(x'_i) \neq \hat{f}(x'_i)}   
+ \frac{1}{n}\sum_{i=1}^n \1_{\hat{f}(x'_i) \neq y'_i}
= H(f,\hat{f}) + L_{S'}(\hat{f}).
\end{aligned}
\end{equation}

The inequality in \eqref{eq:intuition_pattern} highlights the core mechanism behind reactive robustness: any model $f$ can only outperform the attacked model $\hat{f}$ on the adversarial dataset if it disagrees with $\hat{f}$ sufficiently often. If $H(f,\hat{f})$ is small, then the bound forces $L_{S'}(f)$ to be close to 
$L_{S'}(\hat{f})$, meaning that $f$ inherits the same vulnerability as $\hat{f}$. Conversely, a model that disagrees with $\hat{f}$ on the attacked points can achieve 
better performance on $S'$.
In other words, robustness against an attack targeted at $\hat{f}$ requires  diversity as models that mimic $\hat{f}$'s decisions will fail 
in the same way.
This intuition applies broadly to any hypothesis space trained under 0-1 loss. Next, we provide more detailed theoretical evidence in the linear setting, where diversity, captured through geometry, similarly enables robustness through non-transferability of adversarial attacks.

Consider the hypothesis space of linear models $f(x)=w^Tx$ where $w \in \R^p$ and  let $\hat{w}_S$ be ERM model for some dataset $S$. As before, let $S'$ be an adversarial dataset generated from $S$ using the $L_2$ norm and targeted to maximize the classification error of $\hat{w}_S$. That is, $x' = x + \delta$, where $\|\delta\|_2 \le \eta$ and $\delta$ is chosen to make $y \cdot (\hat{w}_S)^T x'$ as small or negative as possible (e.g., $\delta \approx -\eta y \frac{\hat{w}_S}{\|\hat{w}_S\|_2}$). Here, we use the angle between weight vectors as our measure of diversity. 
We will show  that models whose weight vectors form a larger angle have more distinct decision 
boundaries and are therefore less likely to share the same adversarial 
vulnerabilities. In this sense, angular distance plays the same role for linear 
models that prediction disagreement played for 0-1 loss in the discussion above.
For margin-based losses (i.e. losses that depend on functional margin $yf(x)$, such as exponential loss), we can compute the loss of an arbitrary linear model on the adversarial dataset as follows:

\newcommand{\TheoremMarginAttack}{
    Suppose that $\hat L_S(w)=\frac{1}{n}\sum_{i=1}^n \phi(y_i \cdot w^Tx_i)$ where $\phi$ is a loss that is a function of the margin ($y_if(x_i)$). For an $L_2$ attack on the optimal model $\hat{w}_S$ with budget $\eta$, the loss of $w$ on the adversarial dataset  $S'$ is
    $\hat L_{S'}(w)
    = \frac{1}{n}\sum_{i=1}^n \phi(y_i \cdot w^Tx_i - \eta \norm{w}_2\cos(w, \hat{w}_S)).$
}

\begin{theorem} [Risk on adversarial dataset]\label{th:margin_attack}
    \TheoremMarginAttack
\end{theorem}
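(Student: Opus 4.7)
The plan is to write the adversarial input $x'_i$ explicitly in terms of the attack direction chosen against $\hat{w}_S$, substitute into the margin $y_i w^T x'_i$ of the arbitrary model $w$, and simplify using the definition of the angle between $w$ and $\hat{w}_S$. There is no optimization over $f$ left to do; once the attacker's perturbation is fixed, the statement reduces to a direct algebraic identity that is then averaged through $\phi$.

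First I would fix a single example $(x_i, y_i)$ with $y_i \in \{-1, +1\}$ and determine the worst-case $L_2$ perturbation against $\hat{w}_S$. Since $\phi$ is a function of the margin, and a standard assumption in this setting is that $\phi$ is decreasing in its argument, the attacker wants to minimize the functional margin $y_i \hat{w}_S^T (x_i + \delta_i)$ over $\|\delta_i\|_2 \le \eta$, i.e.\ to make $y_i \hat{w}_S^T \delta_i$ as small as possible. By Cauchy--Schwarz, the minimum value is $-\eta \|\hat{w}_S\|_2$, attained at $\delta_i = -\eta\, y_i\, \hat{w}_S / \|\hat{w}_S\|_2$, using $y_i^2 = 1$. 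This matches the perturbation described in the theorem's preamble.

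Next I would compute the functional margin of the alternative model $w$ on this perturbed point: $y_i w^T x'_i = y_i w^T x_i + y_i w^T \delta_i = y_i w^T x_i - \eta\, y_i^2\, \tfrac{w^T \hat{w}_S}{\|\hat{w}_S\|_2} = y_i w^T x_i - \eta\, \|w\|_2 \cos(w, \hat{w}_S)$, where the last step uses $y_i^2 = 1$ and the definition $w^T \hat{w}_S = \|w\|_2 \|\hat{w}_S\|_2 \cos(w, \hat{w}_S)$. Applying $\phi$ and averaging over the $n$ samples yields the stated formula for $\hat{L}_{S'}(w)$.

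The only subtle point, and therefore the ``hard part'' here, is conceptual rather than technical: the shift in the margin of $w$ is $\eta\, \|w\|_2 \cos(w, \hat{w}_S)$ rather than $\eta\, \|w\|_2$, precisely because the perturbation is optimal against $\hat{w}_S$ and not against $w$. One must resist the temptation to bound the shift by the $w$-optimal attack size; the $\cos(w, \hat{w}_S)$ factor is exactly what quantifies non-transferability, since when $w$ is nearly orthogonal to $\hat{w}_S$ the attack on $\hat{w}_S$ leaves $w$'s margins essentially unchanged, so $\hat{L}_{S'}(w) \approx \hat{L}_S(w)$. I would close by remarking that this converts the angular diversity between $w$ and $\hat{w}_S$ into a quantitative robustness statement whenever $\phi$ is monotone (e.g.\ exponential loss), linking directly to the reactive-robustness intuition developed earlier in the section.
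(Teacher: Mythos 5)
Your proposal is correct and follows essentially the same route as the paper's proof: fix the perturbation $\delta_i = -\eta\, y_i\, \hat{w}_S/\lVert\hat{w}_S\rVert_2$, substitute into the margin of the arbitrary model $w$, use $y_i^2 = 1$ and the cosine identity to get the shift $\eta\,\lVert w\rVert_2\cos(w,\hat{w}_S)$, and average through $\phi$. The only addition is your Cauchy--Schwarz justification that this $\delta_i$ is indeed the worst-case attack on $\hat{w}_S$, which the paper simply takes as given from the attack's definition.
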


For any reasonable objective, the loss $\phi$ should decrease with the margin. Thus, we intuitively have that, as the angle between a given model $w$ and the optimal model increases (as measured in $\cos(w, \hat{w}_S)$), the margin should decrease faster, so the loss should increase faster. The next corollary formalizes this intuition for the exponential loss.

\newcommand{\CorollaryExponentialLoss}{
    Suppose that we have the exponential loss $\phi(y \cdot w^Tx)=e^{-y\cdot w^Tx}$. Then, for any unit weights $w_1$ and $w_2$ satisfying $\cos(w_1,\hat{w}_S) > \cos(w_2,\hat{w}_S)$, we have that
    $\frac{\hat L_{S'}(w_1)}{\hat L_S(w_1)}
    > \frac{\hat L_{S'}(w_2)}{\hat L_S(w_2)}.$
    In other words, the adversarial attack is most effective on models more similar to $\hat{w}_S$.
}

\begin{corollary} \label{cor:exponential_loss}
    \CorollaryExponentialLoss
\end{corollary}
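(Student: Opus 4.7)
The plan is to substitute the exponential loss into Theorem \ref{th:margin_attack} and exploit the multiplicative factorization the exponential enjoys, then conclude via strict monotonicity of $e^{(\cdot)}$. The key structural observation is that, with $\phi(z) = e^{-z}$, the perturbation term inside $\phi$ from Theorem \ref{th:margin_attack} does not depend on the sample index $i$ and therefore factors out of the sum as a common multiplicative constant. This collapses the ratio $\hat L_{S'}(w)/\hat L_S(w)$ to a single exponential in the cosine similarity.

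More concretely, I would first write
\[
\hat L_{S'}(w) = \frac{1}{n}\sum_{i=1}^n e^{-y_i \cdot w^T x_i + \eta \|w\|_2 \cos(w, \hat{w}_S)} = e^{\eta \|w\|_2 \cos(w, \hat{w}_S)} \cdot \hat L_S(w),
\]
using that the shift term is $i$-independent. Second, applying the unit-norm hypothesis $\|w_j\|_2 = 1$, the ratio reduces to $\hat L_{S'}(w_j) / \hat L_S(w_j) = e^{\eta \cos(w_j, \hat{w}_S)}$. Third, since the exponential is strictly increasing and the attack budget $\eta > 0$, the assumption $\cos(w_1, \hat{w}_S) > \cos(w_2, \hat{w}_S)$ yields the claimed strict inequality on the ratios.

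There is no substantive obstacle here: the corollary is essentially a direct consequence of the fact that the exponential is (up to scaling) the unique margin loss for which an additive margin shift becomes a data-independent multiplicative factor. The only minor points worth flagging are that $\hat L_S(w) > 0$ automatically since $e^{-z}>0$, so the ratio is well-defined, and that we are implicitly using $\eta > 0$ from the interpretation of $\eta$ as an attack budget in Theorem \ref{th:margin_attack}.
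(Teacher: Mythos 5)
Your proposal is correct and follows essentially the same route as the paper's proof: plug the exponential loss into Theorem \ref{th:margin_attack}, observe that the $i$-independent shift factors out as $\exp(\eta\cos(w,\hat{w}_S))$ for unit-norm $w$, and conclude by strict monotonicity of the exponential. Your added remarks on positivity of $\hat L_S(w)$ and on $\eta>0$ are fine but do not change the argument.
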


Further, as the strength of the adversarial attack grows, models that are of greater angle with the optimal model will eventually surpass the performance of lower angle models.

\newcommand{\CorollaryMarginLimit}{
    Suppose that $\phi(\cdot)$ is decreasing with respect to the margin, and let $w_1$ and $w_2$ be unit weights so that $\cos(w_1, \hat{w}_S)>\cos(w_2,\hat{w}_S)$. Then, for large enough $\eta$ (e.g. when $\eta \to \infty$), $\hat L_{S'}(w_1)>\hat L_{S'}(w_2)$.
}

\begin{corollary} \label{cor:margin_limit}
    \CorollaryMarginLimit
\end{corollary}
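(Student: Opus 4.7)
The plan is to substitute the closed-form expression for $\hat L_{S'}(w)$ given by Theorem \ref{th:margin_attack} and then compare the two adversarial losses pointwise by examining how the arguments to $\phi$ behave as $\eta$ grows.

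First I would write, for unit $w_1$ and $w_2$, the expressions
$$\hat L_{S'}(w_j) = \frac{1}{n}\sum_{i=1}^n \phi\bigl(y_i w_j^T x_i - \eta \cos(w_j,\hat w_S)\bigr), \qquad j=1,2,$$
using $\|w_j\|_2=1$. Let $c_j = \cos(w_j, \hat w_S)$, so by hypothesis $c_1 - c_2 > 0$. For each fixed sample $i$, the difference of the two arguments is
$$\bigl(y_i w_2^T x_i - \eta c_2\bigr) - \bigl(y_i w_1^T x_i - \eta c_1\bigr) = y_i(w_2 - w_1)^T x_i + \eta(c_1 - c_2).$$
The first term is bounded in absolute value by $\|w_1 - w_2\|_2 \cdot \max_i \|x_i\|_2 \le 2 \max_i \|x_i\|_2$, which does not depend on $\eta$. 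The second term grows linearly in $\eta$ with positive coefficient $c_1 - c_2$, so for any $\eta$ exceeding the threshold $\eta^* := 2 \max_i \|x_i\|_2/(c_1 - c_2)$, the displayed difference is strictly positive simultaneously for every $i$. Equivalently, the argument fed into $\phi$ for $w_1$ is strictly smaller than the one for $w_2$ at every training point.

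Next I would invoke the hypothesis that $\phi$ is decreasing in the margin: a strictly smaller margin yields a strictly larger loss value, so
$$\phi\bigl(y_i w_1^T x_i - \eta c_1\bigr) \;>\; \phi\bigl(y_i w_2^T x_i - \eta c_2\bigr)$$
for every $i$ whenever $\eta > \eta^*$. Averaging over $i$ gives $\hat L_{S'}(w_1) > \hat L_{S'}(w_2)$, which is the desired conclusion. The case $\eta \to \infty$ follows immediately since the inequality holds for all $\eta$ large enough.

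The main obstacle, minor but worth being careful about, is ensuring that the dominating effect of $\eta(c_1 - c_2)$ uniformly overtakes the data-dependent term $y_i(w_2 - w_1)^T x_i$ across all $i$. This requires a uniform bound on $\|x_i\|_2$, which is standard in this setting (and implicit in the empirical risk framework of Section \ref{section:notation}); if $\phi$ were merely non-increasing rather than strictly decreasing one would only get the weak inequality, so I would state the result under strict monotonicity of $\phi$ as written.
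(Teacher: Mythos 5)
Your proposal is correct and follows essentially the same route as the paper's proof: substitute the closed form from Theorem \ref{th:margin_attack}, observe that the $\eta(\cos(w_1,\hat w_S)-\cos(w_2,\hat w_S))$ term eventually dominates the bounded data-dependent term pointwise, and conclude by strict monotonicity of $\phi$ and averaging. The only cosmetic difference is that you derive a single uniform threshold via $\max_i\|x_i\|_2$ (which is automatically finite because $S$ is finite, so no extra assumption is needed), whereas the paper uses per-sample thresholds $N_i$ and finiteness of $S$.
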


From Corollaries \ref{cor:exponential_loss} and \ref{cor:margin_limit}, we see that models that make a greater angle with the optimal model are more robust to adversarial attacks. Therefore, if the Rashomon set is diverse enough to contain these models, they will perform well on both datasets $S$ and $S'$. Our results also indicate that the type of adversarial attack and the definition of the Rashomon set diversity have to go hand in hand for a robust model to exist in the Rashomon set. For example, if the Rashomon set is diverse only through parallel shifts of the boundary, keeping $w$ aligned with $\hat{w}$, then this diversity will likely not guarantee the existence of a robust model in the Rashomon set.
Note that we observe similar results in the setting of least-square regression as we discuss  in Appendix \ref{app:proofs_sec5}. 

\subsection{Rashomon Sets are Stable to Small Changes in Distribution}\label{section:small_changes}

After establishing reactive robustness benefits, we now will show that the Rashomon set as a whole remains stable under a small distribution shift. More specifically, we consider the scenario where the underlying data distribution changes slightly under covariate shift. If this shift is small, models that were good under the original distribution remain good under the new distribution, within a slightly relaxed performance threshold as we show in the next theorem.

\newcommand{\TheoremDistShiftGeneral}
{
Consider bounded loss function $\phi$, $\phi \in [0,1]$, and
two data distributions $\mathcal{D}$ and $\mathcal{D}'$, such that $\mathcal{D}(x) \neq \mathcal{D}'(x)$ and $\mathcal{D}(y|x) = \mathcal{D}'(y|x)$. If $KL(\mathcal{D}\| \mathcal{D}')\leq \frac{\epsilon^2}{8}$, 
then if a function $f$ is in the true Rashomon set for the data distribution $\mathcal{D}$, $f\in \mathcal{R}_{z\sim\mathcal{D}}(\frac{\epsilon}{2})$, it is also in the true Rashomon set for the data distribution $\mathcal{D}'$, $f\in \mathcal{R}_{z\sim \mathcal{D}'}(\epsilon)$. 
}

\begin{theorem}[Rashomon set is robust under small distribution shift]\label{th:dist_shift}
\TheoremDistShiftGeneral
\end{theorem}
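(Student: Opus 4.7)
The plan is to turn the KL bound into a uniform pointwise bound on the risk difference $|L_\mathcal{D}(f) - L_{\mathcal{D}'}(f)|$ via Pinsker's inequality, and then chain this uniform bound through the definitions of the two Rashomon sets together with the optimality of the respective risk minimizers. Because the regularizer $\lambda\Omega(f)$ does not depend on the distribution, the same uniform bound transfers immediately from risks to objectives, so we never have to reason separately about the penalty term.

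First, I would apply Pinsker's inequality to the KL hypothesis to get $TV(\mathcal{D}, \mathcal{D}') \le \sqrt{KL(\mathcal{D}\|\mathcal{D}')/2} \le \epsilon/4$. Since $\phi$ is bounded in $[0,1]$, the standard variational characterization of total variation gives, for every $f \in \mathcal{F}$,
\[
|L_\mathcal{D}(f) - L_{\mathcal{D}'}(f)| = \bigl|\mathbb{E}_\mathcal{D}[\phi(f(x),y)] - \mathbb{E}_{\mathcal{D}'}[\phi(f(x),y)]\bigr| \le TV(\mathcal{D}, \mathcal{D}') \le \epsilon/4,
\]
and hence $|obj_\mathcal{D}(f) - obj_{\mathcal{D}'}(f)| \le \epsilon/4$ uniformly in $f$. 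The covariate-shift assumption $\mathcal{D}(y|x) = \mathcal{D}'(y|x)$ is not needed in this step beyond ensuring that the KL between the joint laws reduces to the KL between the marginals in $x$, which is how one typically verifies the $\epsilon^2/8$ hypothesis in practice.

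Second, I would chain the uniform $\epsilon/4$ gap through four inequalities. Let $f^*_\mathcal{D}$ and $f^*_{\mathcal{D}'}$ denote the respective true objective minimizers. For any $f \in \mathcal{R}_{z \sim \mathcal{D}}(\epsilon/2)$,
\begin{align*}
obj_{\mathcal{D}'}(f)
&\le obj_\mathcal{D}(f) + \epsilon/4 && \text{(uniform transfer)} \\
&\le obj_\mathcal{D}(f^*_\mathcal{D}) + \epsilon/2 + \epsilon/4 && \text{(Rashomon membership)} \\
&\le obj_\mathcal{D}(f^*_{\mathcal{D}'}) + 3\epsilon/4 && \text{(optimality of $f^*_\mathcal{D}$ under $\mathcal{D}$)} \\
&\le obj_{\mathcal{D}'}(f^*_{\mathcal{D}'}) + \epsilon && \text{(uniform transfer)},
\end{align*}
which is exactly the statement $f \in \mathcal{R}_{z \sim \mathcal{D}'}(\epsilon)$.

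There is no major obstacle: the only analytic content is Pinsker's inequality combined with the observation that a $[0,1]$-bounded loss is a valid test function for total variation. The constant $\epsilon^2/8$ in the hypothesis is tuned precisely so that Pinsker yields a gap of $\epsilon/4$ on each side, which, together with the $\epsilon/2$ slack in the input Rashomon set, sums to the $\epsilon$ slack in the output Rashomon set. The only mild care point is to note that $\Omega$ drops out of the uniform transfer step because it does not depend on the sampling distribution, so the bound applies to $obj$ and not merely to $L$.
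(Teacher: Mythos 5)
Your proof is correct and follows essentially the same route as the paper's: Pinsker's inequality gives $d_{TV}(\mathcal{D},\mathcal{D}')\le \epsilon/4$, the $[0,1]$-bounded loss yields a uniform $\epsilon/4$ transfer bound on risks (and hence objectives, since $\Omega$ is distribution-free), and chaining through the minimizers yields the $\epsilon/2+\epsilon/4+\epsilon/4=\epsilon$ budget. The only cosmetic difference is that you invoke the optimality of $f^*_{\mathcal{D}}$ under $\mathcal{D}$ while the paper invokes the optimality of $f^*_{\mathcal{D}'}$ under $\mathcal{D}'$; the decomposition and constants are otherwise identical.
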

Theorem \ref{th:dist_shift} shows that the Rashomon set is stable: if a model performs well on one data distribution, it will still perform well, relative to the new best model, even after a small shift in the distribution. This doesn’t mean the model’s absolute accuracy is guaranteed as if the shift makes the task itself harder, performance may drop for all models. But the theorem ensures the drop is smooth because the set of good models changes gradually, so models that were strong before the shift remain among the best options after the shift. In other words, one doesn't need to start the search for a good model all over again and can simply look for it within the existing Rashomon set.

Interestingly, the same stability occurs empirically, if the two datasets, $S$ and $S'$, differ by only a small number of data points. In this case, the empirical Rashomon sets defined on these two datasets are similar. Specifically, any model belonging to the Rashomon set for one dataset is guaranteed to belong to the Rashomon set of the other dataset if we slightly increase the tolerance threshold by an amount proportional to the number of differing samples ($K/n$).

\newcommand{\TheoremTwoRsetsIndistinguishable}
{
        For 0-1 loss, let $S$ and $S'$ be two datasets, each of size $n$. Let $\hat{\mathcal{R}}_S(\epsilon):=\{f\in \mathcal{F}| \hat{obj}(f,S) \leq \hat{obj}(\hat{f}, S) + \epsilon\}$ and $\hat{\mathcal{R}}_{S'}(\epsilon):=\{f\in \mathcal{F}| \hat{obj}(f,S') \leq \hat{obj}(\hat{f}', S') + \epsilon\}$. Suppose $S$ and $S'$ differ in at most $K$ samples. Then $\hat{\mathcal{R}}_S(\epsilon)\subseteq \hat{\mathcal{R}}_{S'}(\epsilon+\frac{2K}{n})$ and $\hat{\mathcal{R}}_{S'}(\epsilon)\subseteq \hat{\mathcal{R}}_{S}(\epsilon+\frac{2K}{n})$.
}

\begin{theorem}[Two Rashomon sets constructed on neighboring datasets are indistinguishable]\label{th:two_rsets_indistinguishable}
\TheoremTwoRsetsIndistinguishable
\end{theorem}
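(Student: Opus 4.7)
The plan is to exploit the fact that under 0-1 loss, changing a single labeled sample perturbs the empirical risk by at most $1/n$, and the regularizer $\lambda\Omega(f)$ is data-independent. The first step is to establish a \emph{per-model} shift bound: for any fixed $f \in \mathcal{F}$, since $S$ and $S'$ agree on at least $n-K$ indices and $\phi(f(x),y) \in \{0,1\}$, each disagreeing index contributes at most $1/n$ in absolute value. Hence $|\hat{L}_S(f) - \hat{L}_{S'}(f)| \le K/n$, and adding the identical regularization term to both sides gives $|\hat{obj}(f,S) - \hat{obj}(f,S')| \le K/n$. Note that this bound is applied to one fixed model at a time; I explicitly avoid a supremum over $\mathcal{F}$, which would instead require uniform convergence machinery.

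With this in hand, I would chain inequalities to prove the first containment $\hat{\mathcal{R}}_S(\epsilon) \subseteq \hat{\mathcal{R}}_{S'}(\epsilon + 2K/n)$. Fix any $f \in \hat{\mathcal{R}}_S(\epsilon)$, so $\hat{obj}(f,S) \le \hat{obj}(\hat{f},S) + \epsilon$. Applying the per-model bound to $f$ and then to $\hat{f}'$ yields
\begin{align*}
\hat{obj}(f,S')
&\le \hat{obj}(f,S) + \tfrac{K}{n}
\le \hat{obj}(\hat{f},S) + \epsilon + \tfrac{K}{n} \\
&\le \hat{obj}(\hat{f}',S) + \epsilon + \tfrac{K}{n}
\le \hat{obj}(\hat{f}',S') + \epsilon + \tfrac{2K}{n},
\end{align*}
where the third inequality uses that $\hat{f}$ is the ERM on $S$, so $\hat{obj}(\hat{f},S) \le \hat{obj}(\hat{f}',S)$. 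This places $f$ in $\hat{\mathcal{R}}_{S'}(\epsilon + 2K/n)$. The reverse containment follows by symmetry, swapping the roles of $(S,\hat{f})$ and $(S',\hat{f}')$.

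I do not anticipate a substantial technical obstacle; the proof is essentially two applications of boundedness of 0-1 loss glued together via the optimality of the two ERMs. The main subtlety to watch for is to apply the per-model deviation bound only to \emph{individual} models ($f$ and $\hat{f}'$ separately), rather than comparing $\hat{f}$ and $\hat{f}'$ on the same dataset in a way that would inflate the constant. The factor of $2$ in $2K/n$ is tight under this argument because the shift bound must be invoked twice: once to transport $f$ from $S$ to $S'$, and once to transport the reference minimizer $\hat{f}'$ from $S$ to $S'$.
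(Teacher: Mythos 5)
Your proposal is correct and follows essentially the same route as the paper's proof: a per-model bound $|\hat{obj}(f,S)-\hat{obj}(f,S')|\le K/n$ from the boundedness of 0-1 loss, combined with ERM optimality of $\hat{f}$ on $S$ and one more application of the shift bound to $\hat{f}'$, then symmetry for the reverse containment. The only difference is presentational — you write the argument as a single chain of inequalities where the paper splits it into two displayed bounds before combining them.
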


Theorem \ref{th:two_rsets_indistinguishable}  establishes the stability of the Rashomon set when the dataset undergoes minor modifications, such as adding, removing, or changing a few examples. Note that while Theorem \ref{th:rule_list_attack} suggests that such perturbations may decrease the absolute performance of the optimal model(s), Theorems \ref{th:dist_shift} and \ref{th:two_rsets_indistinguishable} show that the Rashomon set as a whole can remain stable. 
We also verify this dataset-level stability empirically. Using four datasets and pre-computed
Rashomon sets, we modify $K$ samples in each dataset and recompute the Rashomon set on the
modified data. When we increase the Rashomon tolerance from
$\varepsilon$ to $\varepsilon + 2K/n$, the new Rashomon sets remain highly
overlapping with the original. Even under a 6\% dataset modification, more than 80\% of
the models remain, and for smaller perturbations (e.g., 2\%), the sets are nearly
identical (Figure~\ref{fig:theorem4_experiment_result}). This confirms that small
perturbations to the dataset do not radically change the pool of near-optimal models, providing
practical evidence for Rashomon-set stability.

For an auditor, our observation means that if empirical Rashomon sets remain overlapping 
under small dataset perturbations, the institution’s choice among near-optimal 
interpretable models is reproducible as re-running the learning pipeline on new 
data will not radically change the pool of plausible models.
The near-invariance in Theorem \ref{th:two_rsets_indistinguishable} also has implications for privacy as we discuss next.

\subsection{Larger Rashomon Sets Increase Information Leakage}\label{section:privacy}

While the diversity of the Rashomon set can be a powerful defense, it also creates new risks. Imagine an organization that shares several of its top-performing interpretable models to promote transparency or comply with regulations. Release of each model on its own may seem safe, especially given privacy-preserving property of sparse models. But together, they can reveal more than intended. In this section, we explore how an adversary might combine together information from the released set of ``safe'' models to construct a privacy attack that is capable of reconstructing sensitive information from the original training data. We consider the Rashomon set for the arbitrary model class and focus on binary classification.

\newcommand{\TheoremRsetPrivacy}
{
Let $S = \{(x_i, y_i)\}_{i=1}^n$ be a dataset of $n$ i.i.d. samples from distribution $\mathcal{D}$ over $\mathcal{X} \times \mathcal{Y}$, where $\mathcal{Y} = \{0, 1\}$, and define $p(x) := P(y=1 | x)$ based on $S$. 
    Let $\mathcal{R} (\epsilon)$ be the Rashomon set trained on $S$, containing $N = |\mathcal{R}(\epsilon)|$ models $\{f_1, \dots, f_N\}$. Assume each model outputs a probability $f_i(x) \in [\delta, 1-\delta]$ for some constant $\delta \in (0, 1/2]$ for the given input $x$.
    Let $\mu(x) = \frac{1}{N}\sum_{i=1}^N f_i (x)$ be the mean prediction and $\sigma^2(x) = \frac{1}{N}\sum_{i=1}^N (f_i(x) - \mu(x))^2$ be the variance of predictions in the Rashomon set for input $x$. 
    Suppose we sample $m$ models without replacement from $\mathcal{R}(\epsilon)$, where $m \le N$, with sample indices $\Pi=(\pi_1, \pi_2,...,\pi_m)$ chosen uniformly without replacement from $\{1,2,...N\}$. Define the ensemble prediction as $q_{\Pi}(x) := \frac{1}{m}\sum_{i=1}^m f_{\pi_i}(x)$. 
    Then, the expected KL divergence between $p(x)$ and the ensemble prediction $q_\Pi(x)$ is bounded by:
    \[ E_{\Pi}[KL(p(x)||q_{\Pi}(x))] \leq \frac{(p(x)-\mu(x))^2 + \frac{(N-m)\sigma^2(x)}{(N-1)m}}{\delta(1-\delta)}. \]
}

\begin{theorem}[KL divergence bound for random ensembles from the Rashomon set]\label{thm:KL-div}
\TheoremRsetPrivacy    
\end{theorem}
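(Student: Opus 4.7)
The plan is to reduce the KL term to a squared-error quantity via a chi-squared style bound, apply the bias--variance decomposition to the random ensemble, and then substitute the classical finite-population variance formula for sampling without replacement.

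First I would bound the KL divergence pointwise. Viewing both $p(x)$ and $q_\Pi(x)$ as parameters of Bernoulli distributions and applying $\log t \le t-1$ termwise to the two-point sum defining $KL$, a short calculation gives
\[ KL(p(x) \,\|\, q_\Pi(x)) \le \chi^2(p(x) \,\|\, q_\Pi(x)) = \frac{(p(x)-q_\Pi(x))^2}{q_\Pi(x)(1-q_\Pi(x))}. \]
Because each $f_i(x) \in [\delta,1-\delta]$, the convex combination $q_\Pi(x)$ also lies in $[\delta,1-\delta]$, so $q_\Pi(x)(1-q_\Pi(x)) \ge \delta(1-\delta)$. This yields a deterministic upper bound $KL(p(x)\,\|\,q_\Pi(x)) \le (p(x)-q_\Pi(x))^2/(\delta(1-\delta))$ valid for every realization of $\Pi$.

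Next I would take expectation over $\Pi$ and split the squared error into bias and variance. Since each $\pi_i$ is marginally uniform on $\{1,\dots,N\}$, $E_\Pi[q_\Pi(x)] = \mu(x)$, hence
\[ E_\Pi[(p(x)-q_\Pi(x))^2] = (p(x)-\mu(x))^2 + \mathrm{Var}_\Pi(q_\Pi(x)). \]
The variance of the sample mean of $m$ draws made uniformly without replacement from a population of size $N$ with population variance $\sigma^2(x)$ is
\[ \mathrm{Var}_\Pi(q_\Pi(x)) = \frac{N-m}{N-1}\cdot\frac{\sigma^2(x)}{m}, \]
which I would derive directly by writing $\mathrm{Var}_\Pi(q_\Pi(x)) = \frac{1}{m^2}\bigl[m\sigma^2(x) + m(m-1)\mathrm{Cov}(f_{\pi_1}(x),f_{\pi_2}(x))\bigr]$ and evaluating the pairwise covariance as $-\sigma^2(x)/(N-1)$ through the identity $\sum_{k\neq\ell} f_k(x)f_\ell(x) = \bigl(\sum_k f_k(x)\bigr)^2 - \sum_k f_k(x)^2$. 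Combining this with the KL bound from the first step and dividing by $\delta(1-\delta)$ gives the claimed inequality.

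The main obstacle is keeping the variance bookkeeping consistent: the finite-population variance identity is usually stated in terms of the unbiased sample variance $S^2 = \frac{1}{N-1}\sum_i(f_i(x)-\mu(x))^2$ rather than the population variance $\sigma^2(x) = \frac{1}{N}\sum_i(f_i(x)-\mu(x))^2$ used in the statement, so one must carry a factor of $N/(N-1)$ carefully in order to land on the exact coefficient $(N-m)/((N-1)m)$. Beyond that, the argument is entirely hypothesis-space agnostic: nothing about how the models in $\mathcal{R}(\epsilon)$ were trained enters except the boundedness of their outputs, and the resulting bound depends only on the bias $(p(x)-\mu(x))^2$, the intra-Rashomon-set variance $\sigma^2(x)$, and the sampling ratio $m/N$.
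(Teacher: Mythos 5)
Your proposal is correct and follows essentially the same route as the paper's proof: the pointwise bound $KL \le \chi^2 = (p-q_\Pi)^2/(q_\Pi(1-q_\Pi))$ with the $\delta(1-\delta)$ lower bound on the denominator, followed by the bias--variance decomposition around $\mu(x)$ and the finite-population variance $\frac{N-m}{N-1}\cdot\frac{\sigma^2(x)}{m}$ for sampling without replacement. The extra care you take with the population-versus-sample-variance convention is sound and lands on exactly the coefficient in the statement.
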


\begin{figure}[t]
\centering
\includegraphics[width=0.45\textwidth]{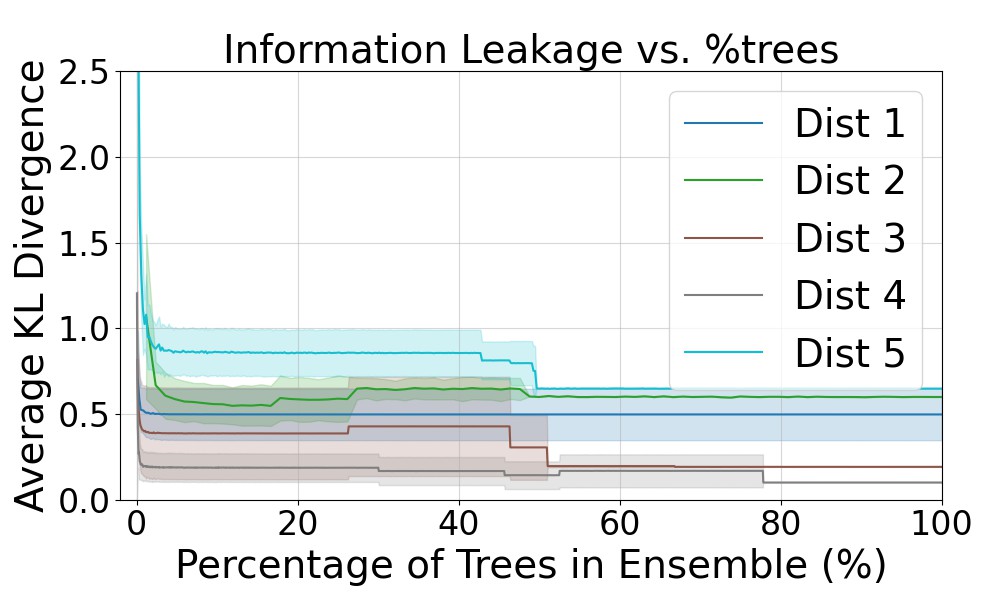}
  \caption{KL divergence between $p(x)$ and $q_{\Pi}(x)$ decreases as more trees are released into the ensemble.}
  \label{fig:kl_sim}
\end{figure}
Theorem~\ref{thm:KL-div} reveals a privacy trade-off induced by Rashomon multiplicity. 
Even if each model in the Rashomon set is sparse and therefore individually 
privacy-preserving, releasing many such models increases aggregate leakage. The 
ensemble of models provides a more accurate approximation of the data distribution, 
as reflected by the decreasing KL divergence between $p(x)$ and $q_{\Pi}(x)$ 
(Figure~\ref{fig:kl_sim}). In this sense, larger Rashomon sets offer more 
``views'' of the data, which collectively reveal more information about the 
underlying distribution.

It is important to distinguish this distributional leakage from 
\emph{membership privacy}, which concerns whether an individual training example 
can be inferred from the released models. The distributional leakage does not automatically imply stronger 
membership-inference risk for individual data points. Membership privacy concerns 
whether releasing a model reveals whether a specific example was contained in the 
training set. Theorem~\ref{th:two_rsets_indistinguishable} shows that Rashomon sets 
constructed on neighboring datasets (differing in only a few samples) are nearly 
identical as any model in one set is very likely to appear in the other. For large 
datasets, releasing the Rashomon set therefore reveals almost the same information 
for two neighboring datasets, limiting an adversary’s ability to infer the presence 
or absence of individual records. This aligns with the intuition 
behind differential privacy.

Finally, we note that diversity in the Rashomon set also has implications for
ensemble-based robustness. Appendix~\ref{app:proofs_sec5} shows that forming an
ensemble from sufficiently diverse models can improve robustness by reducing the
chance that an adversarial perturbation affects all models simultaneously. This form
of robustness is different from the reactive robustness studied in
Section~\ref{section:robustness}. Instead of switching to a new near-optimal model
after an attack is detected, ensemble methods combine predictions from multiple
diverse models to limit the transferability of the attack. This provides a mechanism
for robustness that complements the reactive robustness view in
Section~\ref{section:robustness}. Importantly, the trade-off we discussed still
holds because releasing many diverse models, whether individually or as part of an
ensemble, increases distributional information leakage as characterized by 
Theorem~\ref{thm:KL-div}.

Next, we focus on our empirical findings. 

\section{Experiments}\label{section:experiments}
We now present experimental results that support our conceptual framework and theoretical findings. We focus on sparse decision trees and use TreeFARMS \cite{xin2022exploring} to construct the tree Rashomon set.
Our evaluation aims to answer the following questions: (1) How does the diversity within the Rashomon set benefit adversarial robustness? (2) How does such diversity affect privacy? (3) What does the robustness–privacy trade-off look like?

\subsection{Diversity in the Rashomon Set Benefits Adversarial Robustness} \label{section:robust_exp}

Given that sparse decision trees are inherently interpretable, we consider white-box attacks. We adopt the evasion algorithm proposed in \cite{pmlr-v48-kantchelian16} to generate adversarial examples by enumerating all possible perturbations that lead to incorrect predictions for the optimal tree in the Rashomon set, and study how other trees in the Rashomon set respond to these adversarial examples. We report the adversarial score defined as the accuracy on the adversarial dataset and the distance to the optimal tree calculated by the Hamming distance between the classification patterns. For binary classification, given a dataset $S$, a classification pattern $C_f(S)$ is a $n$-tuple of predicted labels $ C_f(S) = (f(x_1), f(x_2), \ldots, f(x_n))$. We use these  classification patterns to  measure diversity in \eqref{eq:intuition_pattern} in Section \ref{section:robustness} as well as in our experiments.

\begin{figure*}[h]
    \centering
\includegraphics[width=0.75\linewidth]{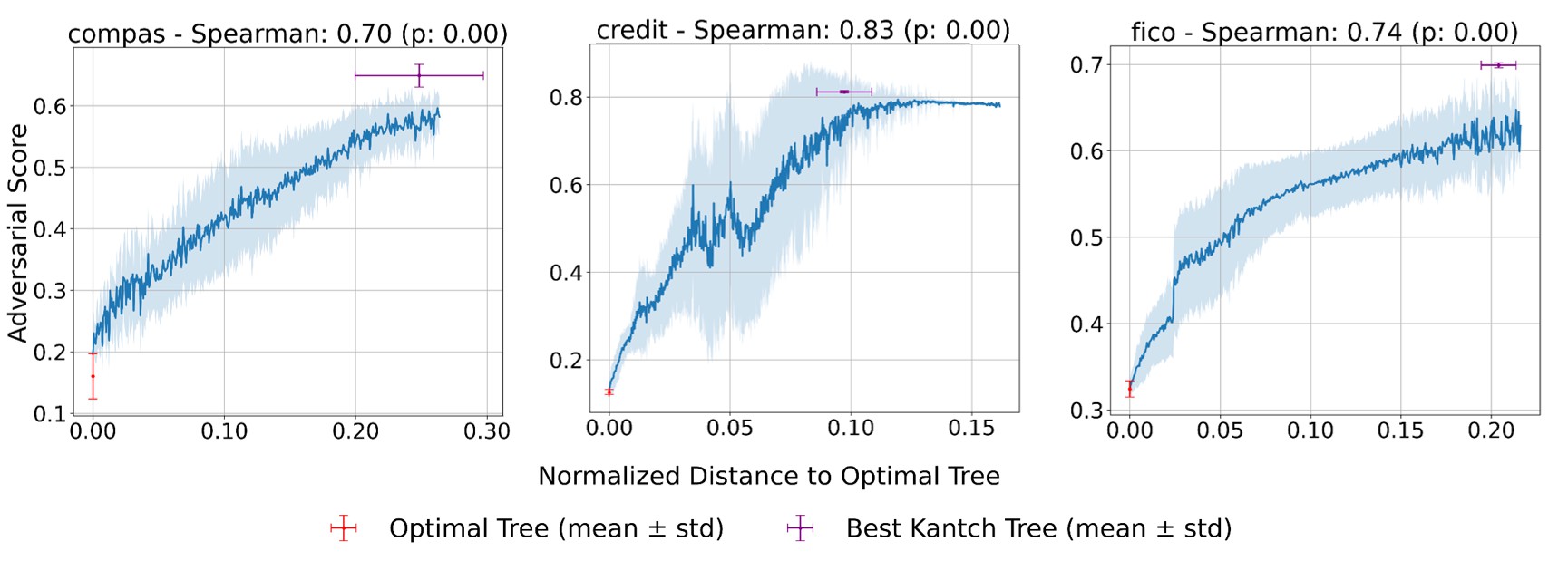}

    \caption{Adversarial score (accuracy) of trees in the Rashomon set vs. their distance to the optimal tree. Results are aggregated over five folds. The optimal trees (in red) are attacked. The most robust trees (in purple) are far from the optimal tree. Trees with the same distance to optimal trees are grouped, and mean and standard deviation of their adversarial score are shown as line plots with shaded uncertainty.}
    \label{fig:adv_score_pattern_line}
\end{figure*}

Figure \ref{fig:adv_score_pattern_line} shows a strong positive trend between a tree’s distance from the optimal model and its adversarial score. Trees with classification patterns similar to the optimal tree (bottom left) are more vulnerable to adversarial examples, while those that differ more in their predictions (top right) are more robust. 
This suggests that diversity of the Rashomon set benefits adversarial robustness, as a subset of trees can maintain high adversarial accuracy even when others fail.

\subsection{Diversity in the Rashomon Set Accelerates Information Leakage} \label{section:privacy_exp}

While our theoretical analysis quantifies privacy leakage via mutual information, computing it exactly is often infeasible in practice. In our experiments, we therefore adopt a more tractable and widely used proxy: reconstruction error from a dataset reconstruction attack. This approach reflects an adversary’s ability to recover training data from released models and serves as an operational measure of privacy risk.
DRAFT \cite{ferry2024trained} can reconstruct the dataset used by a random forest by solving a constraint programming problem. We use the reconstruction error from this attack as a proxy for information leakage, where a lower reconstruction error indicates greater leakage. Note that the dataset reconstruction error differs from the membership privacy, which is
protected by Rashomon-set stability (Theorem~\ref{th:two_rsets_indistinguishable}).

Following the setup in \cite{ferry2024trained}, we sample 100 data points to train a Rashomon set. Trees are then sequentially selected from the Rashomon set and passed to DRAFT. We run DRAFT multiple times as more trees are added. Specifically, DRAFT is trained after each additional tree from 1 to 50, every 5 trees from 50 to 150, and again at 175 and 200 trees. In total, we consider up to 200 trees from the Rashomon set. 
We run this process five times with different random seeds for sampling data points.
We consider two strategies to select trees. By default, the first tree selected is the optimal model. The \textit{closest} strategy then iteratively selects the tree whose classification pattern has the smallest Hamming distance to that of the optimal tree. The \textit{farthest} strategy greedily selects the tree whose classification pattern has the largest Hamming distance from those of the previously selected trees.

\begin{figure*}[t]
    \centering
\includegraphics[width=0.75\linewidth]{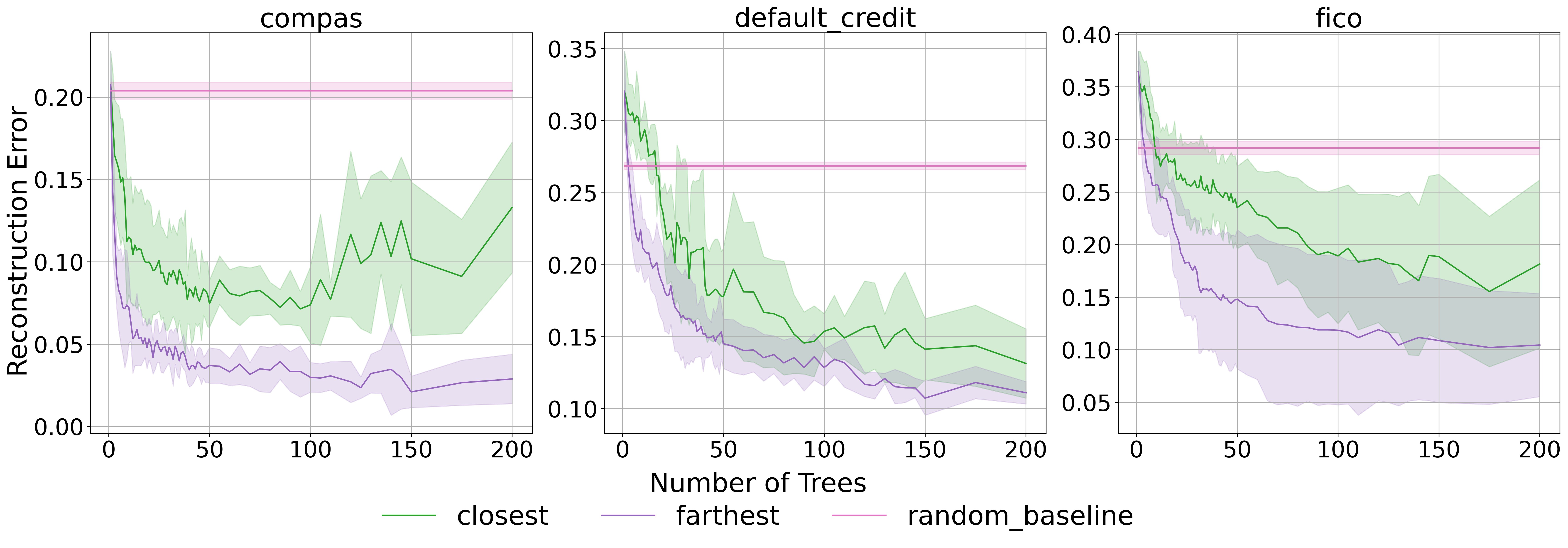}
    \caption{Comparison of reconstruction error between different selection strategies.
    The random baseline randomly guesses the feature values for each data point.}
    \label{fig:recon_error_diversity}
\end{figure*}

Figure \ref{fig:recon_error_diversity} shows that the  ``farthest'' strategy (in purple) has lower reconstruction error, indicating greater information leakage. The ``closest'' strategy (in green) has better privacy, as it has a higher reconstruction error. These results suggest that more diversity in the Rashomon set and disclosing more diverse models lead to increased privacy risk. 

\subsection{Robustness-Privacy Tradeoffs under Rashomon Set} 
\label{section:robust_privacy_exp}
To study the relationship between robustness and privacy, we design a new strategy to sample trees from the Rashomon set. 
We first sort trees by Hamming distance of their classification patterns to the optimal tree, then select trees at evenly spaced intervals.
For instance, given a Rashomon set of 1000 trees, an ensemble of 3 trees would include those ranked 1, 500, and 1000, while an ensemble of 100 trees would include every 10th tree in the sorted list.
To evaluate robustness-privacy tradeoffs, we report both the reconstruction error and the best adversarial accuracy among the selected trees.

\begin{figure*}[h]
    \centering
\includegraphics[width=0.8\linewidth]{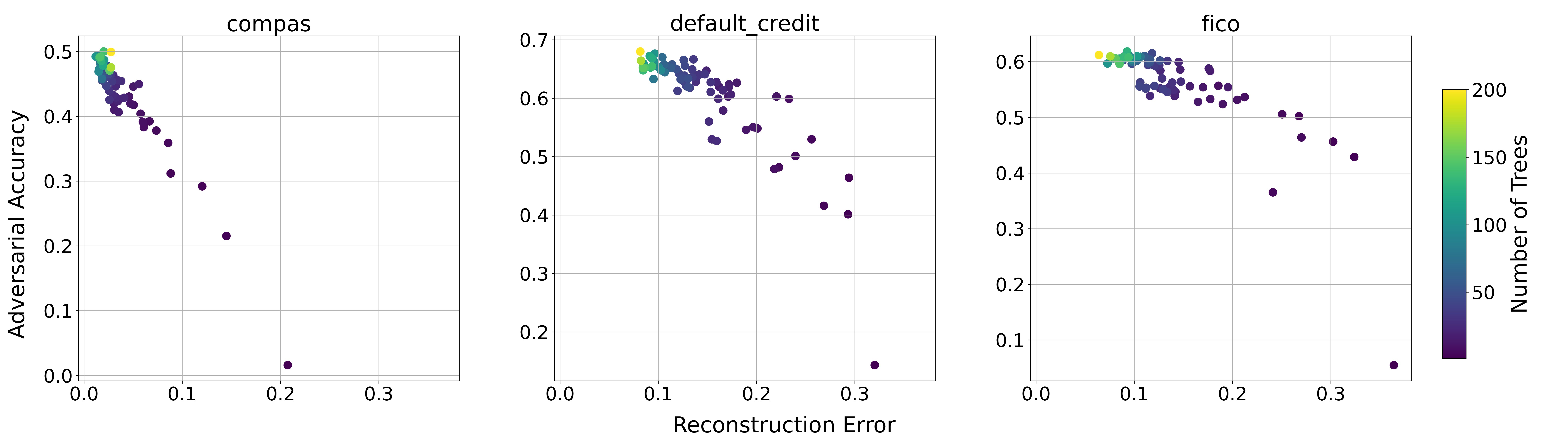}
    \caption{Reconstruction error vs. adversarial accuracy for ensembles constructed with different numbers of evenly sampled trees from the Rashomon set.}
    \label{fig:recon_adv_tradeoff}
\end{figure*}

Figure \ref{fig:recon_adv_tradeoff} shows the reconstruction error versus adversarial accuracy for ensembles constructed by selected trees. Each point represents an ensemble of a specific size, with color indicating the number of trees included. 
When only a limited number of models are released, the privacy is preserved but the models are not diverse enough to avoid an adversarial attack targeted on the optimal trees. As more trees are selected, robustness improves but at the cost of greater information leakage. 
This result aligns with our previous findings and provides direct evidence of a robustness-privacy trade-off when releasing the Rashomon set in the wild.  We provide more experimental insights in Appendix \ref{app:exps}.

The reconstruction–robustness curves often display an intermediate regime where robustness increases substantially while reconstruction error has not yet sharply decreased. This pattern indicates a potential ``sweet spot'' in which releasing a modest number of diverse models provides robustness benefits with only limited additional privacy risk. The location and width of this intermediate region vary across datasets, reflecting differences in the diversity structure of each Rashomon set. Understanding what determines this region is a promising direction for future work.

\section{Conclusions and Policy Implications}\label{section:conclusion}

This work shows that diversity within the Rashomon set has a dual effect: it can
provide robustness and stability, yet it also increases information leakage when many
near-optimal models are exposed. Across theory and experiments, we demonstrate that these
phenomena arise from the same mechanism. More specifically, models that differ in their predictions or
parameters fail differently under adversarial perturbations but collectively reveal more
about the underlying data distribution.

Our results highlight that trustworthiness is shaped not only by the
properties of a single deployed model but by the structure of the Rashomon set as a whole,
explored during model development. Even when only one model is ultimately selected,
the existence of many near-optimal alternatives affects opportunities for reactive
robustness, the reproducibility of model choice, and the aggregate privacy risk if these
alternatives are released or inspected. Our analysis opens a direction for studying
trustworthy ML at the level of sets of models rather than individual ones, particularly
focusing on how different notions of diversity influence robustness, stability, and
information leakage.

Our findings motivate a shift from single-model governance to set-level governance. 
Organizations may benefit from retaining a diverse Rashomon set internally to support
robustness and reproducibility, while limiting external disclosure of near-optimal models
to reduce privacy risk. Stability results such as Theorem~\ref{th:two_rsets_indistinguishable} 
can serve as indicators of reproducibility across retraining or data updates. Rather than
releasing entire Rashomon sets, institutions may disclose representative models or summary
statistics, balancing transparency with privacy. These observations motivate future work on
diversity-aware disclosure policies and practical tools for auditing Rashomon sets in real
deployments.

\newpage
\bibliography{refs}

\onecolumn
\newpage
\appendix
    
\section{Proofs for Theoretical Results in Section \ref{section:single_model}}\label{app:proofs_sec4}

In this appendix, we provide proofs for the theoretical results presented in Section \ref{section:single_model}.

\begingroup
\def\thetheorem{\ref{th:rule_list_attack}}
\begin{theorem}[Inherent vulnerability of single models]
\TheoremRuleList
\end{theorem}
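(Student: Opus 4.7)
The plan is to exhibit an explicit adversarial dataset $S'$ that differs from $S$ by at most one flipped feature per example and then count the extra misclassifications it forces on $d$. I would first decompose the $n_+$ positives into the $\bar n_+$ whose decision path ends in one of the rejecting leaves $1,\ldots,I-1$ and the remaining $n_+-\bar n_+$ whose path reaches rule $I$ or later. Because $q_1=\cdots=q_{I-1}=0$, every point in the first group is already misclassified by $d$ on $S$, and the optimal adversary leaves these points untouched; they contribute identically to $n\hat L_S(d)$ and $n\hat L_{S'}(d)$. Consequently the entire loss gap must come from the second group, and it suffices to manufacture at least one fresh misclassification for each of its $n_+-\bar n_+$ members.

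For each positive $x$ in the second group I would construct a single-feature perturbation $x'$ with $d(x')=0$, targeting the antecedent $p_I$ of the first accepting rule. If $x$ is captured by rule $I$ itself, then $x$ satisfies every literal of $p_I$, and flipping one such literal produces an $x'$ that falls past rule $I$; the structural assumption that each rule predicts the majority of its captured points is what ensures that the $0$-predicting rules $1,\ldots,I-1$ are ``dense enough'' around $p_I$ for $x'$ to land in a $0$-predicting branch rather than on a later accepting rule. If instead $x$ is captured by some later rule $j>I$, then $x$ already violates at least one literal of $p_I$, and flipping exactly that literal redirects $x'$ into rule $I$ or onto an earlier rejecting rule whose prediction is $0$. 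In either case a one-bit flip converts a positive into a misclassification.

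Summing these single-point contributions yields $n\hat L_{S'}(d)-n\hat L_S(d)\ge n_+-\bar n_+$, and dividing by $n$ gives the claimed bound. The main obstacle I foresee is justifying the construction step in full generality: arguing that for every positive point reaching rule $I$ or beyond, one can always find a single literal of $p_I$ whose flip lands the perturbed point in a $0$-predicting region. This is where the majority-label assumption becomes essential, since it controls the relative sizes of the accepting and rejecting regions and rules out pathological configurations in which every candidate flip simply shunts $x'$ onto another accepting rule. A secondary subtlety is the accounting for positives in the second group that were already misclassified by $d$ (those captured by a later rule with $q=0$): the adversary's one-flip budget must still yield the promised number of \emph{new} misclassifications, which can be handled by redirecting that budget onto correctly-classified negatives captured by the same rules, a redirection that is again underwritten by the majority-vote property.
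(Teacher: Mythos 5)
There is a genuine gap, and it sits exactly where you flagged your ``main obstacle'': the per-point construction for positives does not work, and the majority-label assumption cannot repair it. (a) For a correctly classified positive captured by rule $I$, flipping a literal of $p_I$ pushes the point \emph{past} rule $I$, where nothing prevents it from being caught by a later $1$-predicting rule or a $1$-predicting default; the majority assumption only constrains label counts inside each rule's captured set and says nothing about how antecedents overlap geometrically, so it cannot make the rejecting prefix ``dense enough.'' Concretely, take rule $1$: $f_1{=}1 \Rightarrow 0$, rule $2$: $f_2{=}1 \Rightarrow 1$, default $\Rightarrow 1$, and a positive $x=(f_1{=}0,f_2{=}1)$ captured by rule $2$: flipping $f_2$ lands in the default and the prediction stays $1$, while all majority conditions can easily hold. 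The flip that always works is different: flip the (single) feature tested by $p_1$. The point violates that test, rule $1$ is first so nothing intercepts the perturbed point, and the prediction becomes $q_1=0\neq 1$, a guaranteed new misclassification. (b) For positives captured by a later rule $j>I$, your proposed redirection \emph{into} rule $I$ yields prediction $1$, which \emph{matches} their label—this makes them correctly classified, the opposite of what the adversary wants. Routing into rule $I$ is the right attack only for correctly classified \emph{negatives} captured by later $0$-predicting rules (and even there one needs that no earlier antecedent tests the same feature--value pair as $p_I$, i.e., rule $I$ is not vacuous).

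This also clarifies the relation to the paper's argument, which takes a different division of labor: it asserts (without a per-point construction) that every point captured at rule $I$ or later is vulnerable, and puts all the work into the counting. Writing $a_k,b_k$ for negatives/positives captured by rule $k$ and using $q_k=\mathbb{1}[a_k\le b_k]$, it shows $n\hat L_{S'}(d)-n\hat L_S(d)\ge \sum_{k\ge I}\bigl(a_k(1-q_k)+b_kq_k\bigr)\ge \sum_{k\ge I}b_k$ because $\sum_{k\ge I}(1-q_k)(b_k-a_k)\le 0$. That inequality is the \emph{only} place the majority assumption enters: it trades each already-misclassified positive under a later $0$-rule for at least as many attackable negatives under the same rule—precisely the ``redirection'' you relegate to a secondary subtlety, which is in fact the heart of the count. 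So your accounting instinct in the last paragraph is right, but the main construction must be replaced: attack correctly classified positives by flipping them into rule $1$ (prediction $0$), attack correctly classified negatives under later $0$-rules by flipping them into rule $I$ (prediction $1$), and then invoke the majority condition exactly as in the paper's algebra.
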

\addtocounter{theorem}{-1}
\endgroup
\begin{proof}
   Let $a_k$ and $b_k$ denote the number of negative (zero) and positive points captured by rule $k$. Then, note that all data points captured by rules with index at least $I$ are vulnerable to the adversarial attack. Thus, we have the following:
    \begin{align*}
        K_+ 
        &= \sum_{k=1}^K b_k \\
        \hat{L}_S(d) 
        &= \frac{1}{n} \left[\sum_{k=1}^{I-1} b_k + \sum_{k=I}^K (a_kq_k+b_k(1-q_k))\right] \\
        \hat{L}_{S'}(d) 
        &\geq \frac{1}{n} \left[\sum_{k=1}^{I-1} b_k + \sum_{k=I}^K (a_k+b_k) \right]\\
        \bar{K}_+
        &= \sum_{k=1}^{I-1} b_k.
    \end{align*}
    From here, note that $q_k=\1_{a_k \leq b_k}$ since each rule predicts the majority label. Then, we can write
    \begin{align*}
        (K_+-\bar{K}_+) + (n\hat{L}_S(d)-\bar{K}_+)
        &= \sum_{k=I}^K b_k + \sum_{k=I}^K (a_kq_k+b_k(1-q_k)) \\
        &= \sum_{k=I}^K (a_k+b_k) + \sum_{k=I}^K (1-q_k)(b_k-a_k) \\
        &\leq \sum_{k=I}^K (a_k+b_k) \\
        &\leq n\hat{L}_{S'}(d) - \bar{K}_+,
    \end{align*}
    where $(1-q_k)(b_k-a_k)=\1_{a_k > b_k}(b_k-a_k) \leq 0$. Therefore, we get that $\hat{L}_{S'}(d) 
   - \hat{L}_S(d) \geq \frac{K_+ - \bar{K}_+}{n}$.
\end{proof}

The theorem above states that there is an inherent cost to robustness when attacking a single model. Therefore, having the whole Rashomon set can be useful for model selection if adversarial attacks are expected.

\begingroup
\def\thetheorem{\ref{thm:sparsity_mi}}
\begin{theorem}[Sparsity controls mutual information in a single tree]
\TheoremSparseTreePriacy
\end{theorem}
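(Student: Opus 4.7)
The plan is to bound $I(f;S)$ by the entropy $H(f)$ and then bound $H(f)$ by the logarithm of the cardinality of the hypothesis class of trees with $l_f$ leaves. First I would invoke the standard inequality $I(f;S) \le H(f)$, which holds whenever $f$ is discrete-valued; since the input space $\mathcal{X}=\{0,1\}^d$ is finite, any tree $f\in\mathcal{F}$ can be identified with its (finite) parameter description, so it is a discrete random variable and $H(f)$ is well defined. Next I would use the fact that $H(f)\le \log|\mathcal{F}_{l_f}|$, where $\mathcal{F}_{l_f}$ is the set of trees with exactly $l_f$ leaves, since the maximum entropy of a random variable supported on a finite set is the log-cardinality of that set.

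The core combinatorial step is to count $|\mathcal{F}_{l_f}|$. A binary tree with $l_f$ leaves has exactly $l_f-1$ internal split nodes. Its unlabeled shape is one of the Catalan-many full binary trees with $l_f$ leaves, and the Catalan number is bounded by $4^{l_f-1}$. At each of the $l_f-1$ internal nodes we pick one of $d$ binary features to split on, giving at most $d^{l_f-1}$ labelings, and each of the $l_f$ leaves is labeled with one of two classes, giving at most $2^{l_f}$ choices. Multiplying, $|\mathcal{F}_{l_f}|\le 4^{l_f-1}\cdot d^{l_f-1}\cdot 2^{l_f}$, so taking logs gives $\log|\mathcal{F}_{l_f}|\le (l_f-1)\log(4d)+l_f\log 2 = O(l_f\log d)$. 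Chaining the inequalities yields $I(f;S)\le H(f)\le \log|\mathcal{F}_{l_f}|=O(l_f\log d)$, which is the claim.

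I expect the step $I(f;S)\le H(f)\le \log|\mathcal{F}_{l_f}|$ to be straightforward; the only genuine content is the counting bound. The main subtlety is being careful about what ``trees with $l_f$ leaves'' means: I would state the count as an upper bound on labeled trees, ignoring overcounting from symmetric permutations (which only makes the bound tighter), and I would note that features may be reused along different branches, which is why the simple $d^{l_f-1}$ bound applies. Because the training algorithm is allowed to be randomized, the bound $I(f;S)\le H(f)$ is the right entry point (rather than a deterministic-function argument), and no further assumption on the learner is needed. The asymptotic hides only the additive $l_f\log 2$ and constants arising from the Catalan factor, which are absorbed into $O(l_f\log d)$ for $d\ge 2$.
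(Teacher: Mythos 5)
Your proposal is correct and follows essentially the same route as the paper's proof: bound $I(f;S)$ by $H(f)$, bound $H(f)$ by the log-cardinality of the class of trees with $l_f$ leaves, and count that class via the Catalan number of tree shapes times the per-node feature choices times the $2^{l_f}$ leaf labelings. The only cosmetic difference is that the paper allows $2d$ choices per internal node (feature plus split direction) and leaves the Catalan factor as ``at most exponential,'' whereas you use $d$ choices (fine for binary features) and the explicit $4^{l_f-1}$ bound; both yield the same $O(l_f\log d)$ conclusion.
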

\addtocounter{theorem}{-1}
\endgroup

\begin{proof}
The mutual information between the model $f$ and the dataset $S$ can be expressed as  $I(f, S) = H(f) - H(f|S)$, and the upper bound $I(f, S) \leq H(f)$. It is well known that entropy is bounded by the logarithm of the size of the alphabet, so we next bound the size of the hypothesis space, $|\mathcal{F}|$.

Let $T(l_f)$ denote the number of binary trees with $l_f$ leaves. 
A full binary tree with $l_f$ leaves has exactly $l_f - 1$ internal nodes and
$2l_f - 1$ total nodes. It is well known that the number of such tree shapes
is the $(l_f - 1)$-st Catalan number, so
$T(l_f) = C_{l_f - 1}.$

To bound $|\mathcal{F}|$, note that each decision tree can be obtained by taking a binary tree, assigning splits to each internal node, and assigning a prediction to each leaf. Each internal node has at most $d$ features to split on, and it selects a direction (less than or greater than). Each leaf node can likewise choose to either predict $0$ or $1$. Thus, since there are $l_f-1$ internal nodes with at most $2d$ choices at each internal node, and there are $l_f$ leaves with $2$ choices at each leaf, we have that $|\mathcal{F}| \leq T(l_f) \cdot (2d)^{l_f-1}2^{l_f} = C_{l_f-1} \cdot (2d)^{l_f-1}2^{l_f}$. Since $C_{l_f-1}$ grows at most exponentially in $l_f$, we have that $\log |\mathcal{F}| = O(l_f\log d)$. Thus, $I(f;S) = O(l_f \log d)$.

\end{proof}

\section{Proofs for Theoretical Results in Section \ref{section:multiple_models}}\label{app:proofs_sec5}

In this appendix, we provide proofs for the theoretical results presented in Section \ref{section:multiple_models}.

\subsection{Proof for Theorem \ref{th:dist_shift}}\label{appendix:proof_dist_shift}

We state and prove Theorem \ref{th:dist_shift} below. Note that for the purposes of the next theorem, the model belonging to the Rashomon set is based on the risk $L_{\mathcal{D}}$, meaning that $\mathcal{R}(\epsilon) = \{f \in \mathcal{F}: L(f)\leq L(f^*) + \epsilon\}$, where $f^*$ is optimal model.

\begingroup
\def\thetheorem{\ref{th:dist_shift}}
\begin{theorem}[Rashomon set is robust under small distribution shift]
\TheoremDistShiftGeneral
\end{theorem}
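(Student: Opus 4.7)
The plan is to reduce the distributional shift to a bound on the loss difference $|L_{\mathcal{D}}(f) - L_{\mathcal{D}'}(f)|$ that holds uniformly over the hypothesis space, and then chain this bound through the optimality conditions for $f^*$ (optimal under $\mathcal{D}$) and $f'^*$ (optimal under $\mathcal{D}'$).

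First I would observe that since $\mathcal{D}(y \mid x) = \mathcal{D}'(y \mid x)$, the chain rule for KL divergence collapses the joint divergence to the divergence on the marginal in $x$, i.e.\ $KL(\mathcal{D} \| \mathcal{D}') = KL(\mathcal{D}(x) \| \mathcal{D}'(x))$. Then I would rewrite the risk as $L_{\mathcal{D}}(f) = \mathbb{E}_{x \sim \mathcal{D}(x)} g_f(x)$ with $g_f(x) := \mathbb{E}_{y \sim \mathcal{D}(y\mid x)} \phi(f(x), y)$, and the same $g_f$ works for $L_{\mathcal{D}'}(f)$ because the conditionals agree. Since $\phi \in [0,1]$ we have $g_f \in [0,1]$, so
\[
|L_{\mathcal{D}}(f) - L_{\mathcal{D}'}(f)| \;=\; \bigl|\mathbb{E}_{\mathcal{D}(x)} g_f - \mathbb{E}_{\mathcal{D}'(x)} g_f\bigr| \;\leq\; \mathrm{TV}(\mathcal{D}(x), \mathcal{D}'(x)),
\]
using the standard bound for expectations of $[0,1]$-valued functions against total variation.

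Next I would invoke Pinsker's inequality, $\mathrm{TV}(\mathcal{D}, \mathcal{D}') \leq \sqrt{\tfrac{1}{2} KL(\mathcal{D}\|\mathcal{D}')}$, together with the hypothesis $KL(\mathcal{D}\|\mathcal{D}') \leq \epsilon^2 / 8$ to obtain $\mathrm{TV} \leq \epsilon/4$. Combining with the previous step yields the uniform bound $|L_{\mathcal{D}}(f) - L_{\mathcal{D}'}(f)| \leq \epsilon/4$ for every $f \in \mathcal{F}$.

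Finally I would assemble the pieces. Assume $f \in \mathcal{R}_{z \sim \mathcal{D}}(\epsilon/2)$, so $L_{\mathcal{D}}(f) \leq L_{\mathcal{D}}(f^*) + \epsilon/2$. Applying the uniform bound to both $f$ and $f'^*$, and using optimality $L_{\mathcal{D}}(f^*) \leq L_{\mathcal{D}}(f'^*)$,
\[
L_{\mathcal{D}'}(f) \;\leq\; L_{\mathcal{D}}(f) + \tfrac{\epsilon}{4} \;\leq\; L_{\mathcal{D}}(f^*) + \tfrac{3\epsilon}{4} \;\leq\; L_{\mathcal{D}}(f'^*) + \tfrac{3\epsilon}{4} \;\leq\; L_{\mathcal{D}'}(f'^*) + \epsilon,
\]
which is exactly the statement $f \in \mathcal{R}_{z \sim \mathcal{D}'}(\epsilon)$. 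The main substantive step is the uniform risk bound; everything else is bookkeeping. I do not anticipate a major obstacle because $\phi$ is assumed bounded and the conditional-invariance assumption is precisely what allows Pinsker's on the $x$-marginal to transfer cleanly to the risks. The only subtle choice is calibrating constants: the factor $1/8$ in the hypothesis and the $\epsilon/2$ slack in the initial Rashomon membership are tuned so that the two TV-slack terms ($\epsilon/4$ each, one on $f$ and one on $f'^*$) add up with the original $\epsilon/2$ to exactly $\epsilon$.
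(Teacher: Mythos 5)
Your proof is correct and follows essentially the same route as the paper: bound the uniform risk deviation $|L_{\mathcal{D}}(f)-L_{\mathcal{D}'}(f)|$ by total variation (using boundedness of $\phi$), convert the KL hypothesis into $\mathrm{TV}\le \epsilon/4$ via Pinsker, and chain through the two optimal models so the slack adds to $\epsilon/2 + 2\cdot\epsilon/4 = \epsilon$. One small point in your favor: your final chain uses optimality of $f^*$ under $\mathcal{D}$ (i.e., $L_{\mathcal{D}}(f^*)\le L_{\mathcal{D}}(f'^*)$) and then transfers $f'^*$ across distributions, which is the cleanly correct ordering, whereas the paper's corresponding step, annotated ``since $f_{\mathcal{D}'}^*$ minimizes for $\mathcal{D}'$,'' replaces $L_{\mathcal{D}'}(f_{\mathcal{D}'}^*)$ by $L_{\mathcal{D}'}(f_{\mathcal{D}}^*)$ in a negated term, which that optimality actually bounds in the opposite direction; your bookkeeping sidesteps this slip.
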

\addtocounter{theorem}{-1}
\endgroup
\begin{proof}
Given the bounded loss function $\phi(f(x), y)$ for a data point $z = (x, y)$, we will overload definition of the loss and also consider $\phi(f, z):\mathcal{F}\times(\mathcal{X}\times \mathcal{Y})\rightarrow [0,1]$.
Let $f_{\mathcal{D}}^* = \arg\inf_{g \in \mathcal{F}} \mathbb{E}_{z\sim\mathcal{D}}[\phi(g, z)]$ and $f_{\mathcal{D}'}^* = \arg\inf_{g \in \mathcal{F}} \mathbb{E}_{z\sim\mathcal{D}'}[\phi(g, z)]$ be optimal models for distributions $\mathcal{D}$ and $\mathcal{D}'$ respectfully.
Also, let $d_{TV}(\mathcal{D}, \mathcal{D}')$ be the total variational distance defined as:
\[d_{TV}\left(\mathcal{D}, \mathcal{D}^{\prime}\right)=\sup _{B \in \mathcal{B}}\left|\operatorname{Pr}_{\mathcal{D}}[B]-\operatorname{Pr}_{\mathcal{D}^{\prime}}[B]\right| = \frac{1}{2} \int |p_\mathcal{D}(z) - p_{\mathcal{D}'}(z)| dz, \]
where $\mathcal{B}$ is the set of measurable subsets under $\mathcal{D}$ and $\mathcal{D}^{\prime}$. Then since $\phi(\cdot, \cdot) \in [0,1]$, we have $|\phi(g,z) - 1/2| \le 1/2$ for any $g \in \mathcal{F}$. Thus:
\begin{align} \label{eq:corrected_tv_bound}
|\mathbb{E}_{z\sim \mathcal{D}}\phi(g, z) - \mathbb{E}_{z\sim \mathcal{D}'}\phi(g, z)| 
&= \left|\int_z \phi(g,z)(p_\mathcal{D}(z)-p_{\mathcal{D}'}(z))dz\right| \nonumber \\
&= \left|\int_z (\phi(g,z)-\frac{1}{2})(p_\mathcal{D}(z)-p_{\mathcal{D}'}(z))dz\right| \nonumber \\
&\leq \int_z |\phi(g,z) - \frac{1}{2}| \cdot |p_\mathcal{D}(z)-p_{\mathcal{D}'}(z)|dz \nonumber \\
&\leq \int_z \frac{1}{2} \cdot |p_\mathcal{D}(z)-p_{\mathcal{D}'}(z)|dz \nonumber \\
&= \frac{1}{2} \int |p_\mathcal{D}(z)-p_{\mathcal{D}'}(z)|dz \nonumber \\
&= d_{TV}(\mathcal{D}, \mathcal{D}') 
\end{align}
Note that \eqref{eq:corrected_tv_bound} holds for $g=f$ and $g=f_{\mathcal{D}}^*$ as well.

By Pinsker's inequality $d_{TV}(\mathcal{D}, \mathcal{D}') \leq \sqrt{\frac{1}{2}KL(\mathcal{D}\| \mathcal{D}')}$. Since $f\in \mathcal{R}_{z\sim\mathcal{D}}(\frac{\epsilon}{2})$ and $KL(\mathcal{D}\| \mathcal{D}')\leq \frac{\epsilon^2}{8}$, 
we have that $d_{TV}(\mathcal{D}, \mathcal{D}') \leq \sqrt{\frac{1}{2} KL(\mathcal{D}\| \mathcal{D}')} \leq \sqrt{\frac{1}{2} \frac{\epsilon^2}{8}} = \sqrt{\frac{\epsilon^2}{16}} = \frac{\epsilon}{4}$. Therefore, we can bound the expected risks' difference for distribution $\mathcal{D}'$:

\begin{equation*}
\begin{split}
\mathbb{E}_{z\sim \mathcal{D}'} \phi(f, z) - \mathbb{E}_{z\sim \mathcal{D}'} \phi(f_{\mathcal{D}'}^*, z) &= \left(\mathbb{E}_{z\sim \mathcal{D}} \phi(f, z)- \mathbb{E}_{z\sim \mathcal{D}} \phi(f_{\mathcal{D}}^*, z)\right)  \\
&\quad + \left(\mathbb{E}_{z\sim \mathcal{D}'} \phi(f, z) - \mathbb{E}_{z\sim \mathcal{D}} \phi(f, z) \right) \\
&\quad + \left(\mathbb{E}_{z\sim \mathcal{D}} \phi(f_{\mathcal{D}}^*, z) - \mathbb{E}_{z\sim \mathcal{D}'} \phi(f_{\mathcal{D}'}^*, z)\right)  \\
&\leq \frac{\epsilon}{2} \quad (\text{Since } f\in \mathcal{R}_{z\sim\mathcal{D}}\left(\frac{\epsilon}{2}\right)) \\
&\quad + |\mathbb{E}_{z\sim \mathcal{D}'} \phi(f, z) - \mathbb{E}_{z\sim \mathcal{D}} \phi(f, z)|  + (\mathbb{E}_{z\sim \mathcal{D}} \phi(f_{\mathcal{D}}^*, z) - \mathbb{E}_{z\sim \mathcal{D}'} \phi(f_{\mathcal{D}'}^*, z))  \\
&\leq \frac{\epsilon}{2} + |\mathbb{E}_{z\sim \mathcal{D}'} \phi(f, z) - \mathbb{E}_{z\sim \mathcal{D}} \phi(f, z)| \\
&\quad + (\mathbb{E}_{z\sim \mathcal{D}} \phi(f_{\mathcal{D}}^*, z) - \mathbb{E}_{z\sim \mathcal{D}'} \phi(f_{\mathcal{D}}^*, z)) \quad (\text{Since } f_{\mathcal{D}'}^* \text{ minimizes for } \mathcal{D}') \\
&\leq \frac{\epsilon}{2} + |\mathbb{E}_{z\sim \mathcal{D}'} \phi(f, z) - \mathbb{E}_{z\sim \mathcal{D}} \phi(f, z)| + |\mathbb{E}_{z\sim \mathcal{D}} \phi(f_{\mathcal{D}}^*, z) - \mathbb{E}_{z\sim \mathcal{D}'} \phi(f_{\mathcal{D}}^*, z)| \\
&\leq \frac{\epsilon}{2} + d_{TV}(\mathcal{D}, \mathcal{D}') + d_{TV}(\mathcal{D}, \mathcal{D}') \quad (\text{Using result from \eqref{eq:corrected_tv_bound}}) \\
&= \frac{\epsilon}{2} + 2 d_{TV}(\mathcal{D}, \mathcal{D}') \leq \frac{\epsilon}{2} + 2 \left( \frac{\epsilon}{4} \right) \quad (\text{Since } d_{TV}(\mathcal{D}, \mathcal{D}') \leq \frac{\epsilon}{4}) \\ 
&= \frac{\epsilon}{2} + \frac{\epsilon}{2} = \epsilon. 
\end{split}
\end{equation*}

Therefore, $f\in \mathcal{R}_{z\sim \mathcal{D}'}(\epsilon)$. 
\end{proof}

Note that for ridge regression, we can derive a more specific condition on the total variational distance as we show in Lemma \ref{lem:least_squares_shift}.

\subsection{Proof for Theorem \ref{th:two_rsets_indistinguishable}}\label{appendix:proof_two_rsets_indistinguishable}

We state and prove Theorem \ref{th:two_rsets_indistinguishable} below. Note that while the theorem is proved for the objective defined Rashomon set, the statement holds for the risk defined Rashomon set when the regularization parameter is zero.

\begingroup
\def\thetheorem{\ref{th:two_rsets_indistinguishable}}
\begin{theorem}[Two Rashomon sets constructed on neighboring datasets are indistinguishable]
\TheoremTwoRsetsIndistinguishable
\end{theorem}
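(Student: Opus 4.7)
The plan is to exploit the fact that for the $0$-$1$ loss, swapping a single labeled example changes the empirical risk by at most $1/n$, and that the regularizer $\lambda\Omega(f)$ depends on $f$ but not on the dataset. Together these give a Lipschitz-type bound on $\hat{obj}(f,\cdot)$ with respect to dataset perturbations, uniformly in $f$, which then feeds into a short chain of inequalities through the two different ERMs.

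Concretely, the first step is a stability lemma: for any fixed $f \in \mathcal{F}$, since $S$ and $S'$ agree on at least $n-K$ indices and each disagreeing index contributes a term of magnitude at most $1/n$ to $\hat L_S(f)-\hat L_{S'}(f)$, we have
\[
\bigl|\hat L_S(f)-\hat L_{S'}(f)\bigr| \;\leq\; \tfrac{K}{n},
\qquad\text{hence}\qquad
\bigl|\hat{obj}(f,S)-\hat{obj}(f,S')\bigr| \;\leq\; \tfrac{K}{n},
\]
since the regularization term cancels. I would state this as a one-line observation at the start of the proof.

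The second step is the inclusion argument. Pick any $f\in\hat{\mathcal{R}}_S(\epsilon)$. I chain four bounds:
\[
\hat{obj}(f,S') \;\leq\; \hat{obj}(f,S)+\tfrac{K}{n}
\;\leq\; \hat{obj}(\hat f,S)+\epsilon+\tfrac{K}{n}
\;\leq\; \hat{obj}(\hat f',S)+\epsilon+\tfrac{K}{n}
\;\leq\; \hat{obj}(\hat f',S')+\epsilon+\tfrac{2K}{n},
\]
where the first and last use the stability lemma, the second uses $f\in\hat{\mathcal{R}}_S(\epsilon)$, and the third uses optimality of $\hat f$ on $S$ (so $\hat{obj}(\hat f,S)\leq\hat{obj}(\hat f',S)$). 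This proves $\hat{\mathcal{R}}_S(\epsilon)\subseteq\hat{\mathcal{R}}_{S'}(\epsilon+\tfrac{2K}{n})$. The reverse inclusion follows by symmetry, swapping the roles of $S$ and $S'$ (and of $\hat f$ and $\hat f'$).

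There is no real obstacle here beyond careful bookkeeping; the only subtlety is remembering to route through both ERMs in the middle of the chain (rather than comparing $\hat{obj}(\hat f,S')$ directly to $\hat{obj}(\hat f',S')$, which would cost another $K/n$ and give the wrong constant). If one wanted to generalize beyond $0$-$1$ loss, the same proof would go through for any loss bounded by $1$ on the support of the data, with the single replacement of the per-sample bound $1/n$ by the appropriate range-over-$n$ quantity; worth noting in a remark but not needed for the statement as written.
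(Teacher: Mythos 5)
Your proof is correct and follows essentially the same route as the paper's: the per-sample $1/n$ stability bound on the 0-1 empirical objective (with the regularizer canceling), a chain through the optimality of $\hat f$ on $S$ and stability applied to $\hat f'$, yielding $\epsilon + \tfrac{2K}{n}$, and the reverse inclusion by symmetry. The only difference is presentational — you compress the paper's two intermediate inequalities into a single four-step chain.
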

\addtocounter{theorem}{-1}
\endgroup

    \begin{proof}
        For any model $f\in \mathcal{F}$, the difference in 0-1 loss due to change of the dataset from $S$ to $S'$ that differ in at most $K$ samples is at most $\frac{K}{n}$. Hence, 
        \begin{equation}\label{eq:abs_obj}
        |\hat{obj}_{S}(f) - \hat{obj}_{S'}(f)| = |\hat{L}_S(f) - \hat{L}_{S'}(f)|\leq \frac{K}{n}.
        \end{equation}
        By definition, since $\hat{f}$ is ERM, $\hat{obj}_S(\hat{f})\leq \hat{obj}_S(\hat{f}')$. 
        Plugging in $\hat{f}'$ in \eqref{eq:abs_obj}, we get that:

        $\hat{obj}_{S}(\hat{f}') \leq \hat{obj}_{S'}(\hat{f}') + \frac{K}{n}$. Therefore, we get that

        \begin{equation}\label{eq:tprime_lb}
            \hat{obj}_{S'}(\hat{f}') \geq \hat{obj}_{S}(\hat{f}') - \frac{K}{n} \geq 
            \hat{obj}_S(\hat{f}) - \frac{K}{n}.
        \end{equation}
        Next, we show for the empirical Rashomon sets that $\hat{\mathcal{R}}_S(\epsilon) \subseteq \hat{\mathcal{R}}_{S'}(\epsilon+ \frac{2K}{n})$. For any model  $f\in \hat{\mathcal{R}}_S(\epsilon)$, by definition, $\hat{obj}_S(f) \leq \hat{obj}_S(\hat{f})+\epsilon$. Based on  \eqref{eq:abs_obj} we get that, 
        \begin{equation}\label{eq:t_ub}
            \hat{obj}_{S'}(f) \leq \hat{obj}_{S}(f) + \frac{K}{n} \leq \hat{obj}_S(\hat{f}) + \epsilon + \frac{K}{n}.
        \end{equation}
        Finally, combining  \eqref{eq:tprime_lb} and \eqref{eq:t_ub} together, we see that 
        \begin{equation}
            \hat{obj}_{S'}(f) - \hat{obj}_{S'}(\hat{f}') \leq \hat{obj}_S(\hat{f})+ \epsilon +\frac{K}{n} - \hat{obj}_S(\hat{f}) + \frac{K}{n}
            =\epsilon + \frac{2K}{n},
        \end{equation}
        which means that $f \in \hat{\mathcal{R}}_{S'}(\epsilon+ \frac{2K}{n})$ and correspondingly, 
        $\hat{\mathcal{R}}_S(\epsilon) \subseteq \hat{\mathcal{R}}_{S'}(\epsilon+ \frac{2K}{n})$. Following similar logic we can show that $\hat{\mathcal{R}}_{S'}(\epsilon) \subseteq \hat{\mathcal{R}}_{S}(\epsilon+ \frac{2K}{n})$ yielding the statement of the theorem.
    \end{proof}

\subsection{Proofs for Theorem \ref{th:margin_attack} and Corollaries \ref{cor:exponential_loss} and \ref{cor:margin_limit}}

We state and prove Theorem \ref{th:margin_attack} as well as two corollaries from it below.

\begingroup
\def\thetheorem{\ref{th:margin_attack}}
\begin{theorem}
    \TheoremMarginAttack
\end{theorem}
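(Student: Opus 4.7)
The plan is to prove this by direct calculation, tracking how the adversarial perturbation designed for $\hat{w}_S$ shifts the margin seen by an arbitrary model $w$. First I would make the adversarial perturbation explicit. Because the attack maximizes the loss of $\hat{w}_S$ (which is assumed to be decreasing in the margin $y_i \hat{w}_S^T x_i$) subject to $\norm{\delta_i}_2 \le \eta$, the optimal perturbation is the one that reduces the margin of $\hat{w}_S$ the most, namely $\delta_i = -\eta\, y_i\, \hat{w}_S/\norm{\hat{w}_S}_2$. This is essentially the closed-form $L_2$ attack already foreshadowed in the paragraph before the theorem.

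Next I would substitute $x'_i = x_i + \delta_i$ into the margin for the model $w$:
\begin{equation*}
y_i\, w^T x'_i \;=\; y_i\, w^T x_i \;+\; y_i\, w^T \delta_i
\;=\; y_i\, w^T x_i \;-\; \eta\, y_i^2 \,\frac{w^T \hat{w}_S}{\norm{\hat{w}_S}_2}.
\end{equation*}
Since $y_i \in \{-1,+1\}$ we have $y_i^2 = 1$, and the cosine identity $w^T \hat{w}_S = \norm{w}_2 \norm{\hat{w}_S}_2 \cos(w, \hat{w}_S)$ collapses the second term to $\eta\, \norm{w}_2 \cos(w,\hat{w}_S)$. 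Plugging this shifted margin into the margin-based loss $\phi$ and averaging over $i$ yields exactly the stated expression for $\hat L_{S'}(w)$.

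The only real obstacle is justifying that the worst-case $L_2$ attack on $\hat{w}_S$ indeed takes the closed form $\delta_i = -\eta\, y_i\, \hat{w}_S/\norm{\hat{w}_S}_2$, since the statement of the theorem does not itself specify the attack. I would handle this with a short Cauchy–Schwarz argument: for any $\delta_i$ with $\norm{\delta_i}_2 \le \eta$, the shift $y_i \hat{w}_S^T \delta_i$ lies in $[-\eta \norm{\hat{w}_S}_2, \eta \norm{\hat{w}_S}_2]$, and the lower endpoint (maximizing the loss when $\phi$ is decreasing in the margin) is attained exactly by the stated $\delta_i$. Everything else is algebraic manipulation with no subtlety, so the proof should be short.
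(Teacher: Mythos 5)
Your proposal is correct and follows essentially the same route as the paper's proof: substitute the closed-form attack $\delta_i = -\eta y_i \hat{w}_S/\norm{\hat{w}_S}_2$, compute the shifted margin $y_i w^T x_i - \eta\norm{w}_2\cos(w,\hat{w}_S)$ using $y_i^2=1$ and the cosine identity, and average over the data. Your added Cauchy--Schwarz justification that this perturbation is indeed the worst-case $L_2$ attack on $\hat{w}_S$ is a small extra that the paper leaves implicit, but it does not change the argument.
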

\addtocounter{theorem}{-1}
\endgroup

\begin{proof}
    For each sample $x_i$, under the adversarial attack, we have that $x'_i=x_i+\delta_i$ where $\delta_i=-\eta y_i\frac{\hat w_S}{\norm{\hat w_S}_2}$. Then, the margin for a given model $w$ on the adversarial input $x_i$ is:
    \begin{align*}
        y_i \cdot w^T x'_i &= y_i \cdot w^T (x_i + \delta_i) \\
        &= y_i \cdot w^T x_i + y_i \cdot w^T \delta_i \\
        &= y_i \cdot w^T x_i + y_i \cdot w^T \left(-\eta y_i\frac{\hat w_S}{\norm{\hat w_S}_2}\right) \\
        &= y_i \cdot w^T x_i - \eta y_i^2 \frac{w^T \hat w_S}{\|\hat w_S\|_2} \\
        &= y_i \cdot w^T x_i - \eta \frac{w^T \hat w_S}{\|\hat w_S\|_2} \\
        &= y_i \cdot w^T x_i - \eta \norm{w}_2 \cos(w, \hat w_S).
    \end{align*}
    Therefore, we get that $\hat L_{S'}(w)=\frac{1}{n}\sum_{i=1}^n \phi(y_i \cdot w^Tx_i - \eta \norm{w}_2\cos(w, \hat w_S)$.
\end{proof}

In the next corollary, we show that if the model $w^Tx$ is closer to the ERM model, then the adversarial attack has more effect on it in terms of the loss.  

\begingroup
\def\thecorollary{\ref{cor:exponential_loss}}
\begin{corollary}
    \CorollaryExponentialLoss
\end{corollary}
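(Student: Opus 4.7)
The plan is to apply Theorem~\ref{th:margin_attack} directly and exploit the multiplicative structure of the exponential loss. Substituting $\phi(t) = e^{-t}$ into the theorem, each adversarial term becomes
$\phi\bigl(y_i w^T x_i - \eta \|w\|_2 \cos(w, \hat{w}_S)\bigr) = e^{-y_i w^T x_i}\cdot e^{\eta \|w\|_2 \cos(w, \hat{w}_S)}.$
The key observation is that the second factor does not depend on the index $i$, so it pulls outside the average.

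Next, because $w_1$ and $w_2$ are unit vectors, $\|w\|_2 = 1$, and I obtain the clean identity
$\hat L_{S'}(w) = e^{\eta \cos(w, \hat{w}_S)} \cdot \hat L_S(w)$
for any unit weight $w$. Dividing by $\hat L_S(w)$, which is strictly positive since the exponential loss is strictly positive, yields
$\frac{\hat L_{S'}(w)}{\hat L_S(w)} = e^{\eta \cos(w, \hat{w}_S)}.$
The corollary then follows by strict monotonicity of $t \mapsto e^{\eta t}$ applied to the hypothesis $\cos(w_1, \hat{w}_S) > \cos(w_2, \hat{w}_S)$, as long as $\eta > 0$ (for $\eta = 0$ there is no attack and both ratios equal one).

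There is essentially no main obstacle here: the result reduces to a one-line algebraic identity enabled by the fact that the exponential loss turns an additive shift in the margin into a multiplicative rescaling. The only minor subtlety is the unit-norm assumption on $w_1, w_2$, which is what makes the prefactor $e^{\eta \|w\|_2 \cos(w, \hat{w}_S)}$ depend only on the angle and not also on the norm; without normalization one would instead compare $e^{\eta \|w_k\|_2 \cos(w_k, \hat{w}_S)}$ for $k=1,2$, and the angular comparison alone would no longer suffice.
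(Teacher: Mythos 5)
Your proposal is correct and follows essentially the same route as the paper's proof: substitute the exponential loss into Theorem~\ref{th:margin_attack}, use the unit-norm assumption to pull the index-independent factor $e^{\eta\cos(w,\hat w_S)}$ out of the average so that $\hat L_{S'}(w)=e^{\eta\cos(w,\hat w_S)}\hat L_S(w)$, and conclude by strict monotonicity of the exponential. Your explicit remark that strictness requires $\eta>0$ is a minor point the paper leaves implicit, but otherwise the two arguments coincide.
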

\addtocounter{corollary}{-1}
\endgroup

\begin{proof}
    For any unit vector $w$, we have by Theorem \ref{th:margin_attack} that
    $$\hat L_{S'}(w)
    = \frac{1}{n}\sum_{i=1}^n \exp(-y_i \cdot w^Tx_i + \eta \cos(w, \hat w_S))
    = \exp(\eta \cos(w, \hat w_S)) \hat L_S(w).$$
    Then, we can write
    \begin{align*}
        \frac{\hat L_{S'}(w_1)}{\hat L_S(w_1)}
        &= \frac{\exp(\eta \cos(w_1, \hat w_S))\hat L_S(w_1)}{\hat L_S(w_1)} \\
        &= \exp(\eta \cos(w_1, \hat w_S)) \\
        &> \exp(\eta \cos(w_2, \hat w_S)) \\
        &= \frac{\hat L_{S'}(w_2)}{\hat L_S(w_2)}
    \end{align*}
\end{proof}

Our next corollary makes this point even more explicit in the case of a strong attack, showing that if the diversity within the Rashomon set is higher in terms of angular distance, then models with higher adversarial risk can exist.

\begingroup
\def\thecorollary{\ref{cor:margin_limit}}
\begin{corollary}
    \CorollaryMarginLimit
\end{corollary}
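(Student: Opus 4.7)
The plan is to specialize Theorem~\ref{th:margin_attack} to unit weights and then reduce the claim to a sample-wise comparison of adversarial margins that becomes tight once $\eta$ is large. Writing $\alpha_j := \cos(w_j, \hat{w}_S)$ for $j=1,2$, Theorem~\ref{th:margin_attack} gives
\begin{equation*}
\hat{L}_{S'}(w_j) = \frac{1}{n}\sum_{i=1}^n \phi\bigl(y_i w_j^T x_i - \eta\,\alpha_j\bigr), \qquad j=1,2.
\end{equation*}
Since the dataset $S$ is fixed and finite and $w_1, w_2$ are unit vectors, the natural margins $y_i w_j^T x_i$ are uniformly bounded; let $M := \max_{i,j}|y_i w_j^T x_i|$.

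Next I would compare the two losses term by term. For each sample $i$, the gap between the adversarial margins of $w_2$ and $w_1$ is
\begin{equation*}
\bigl(y_i w_2^T x_i - \eta \alpha_2\bigr) - \bigl(y_i w_1^T x_i - \eta \alpha_1\bigr) = y_i (w_2 - w_1)^T x_i + \eta (\alpha_1 - \alpha_2).
\end{equation*}
By hypothesis $\alpha_1 > \alpha_2$, and the first summand is bounded in absolute value by $2M$, so choosing any $\eta > 2M/(\alpha_1 - \alpha_2)$ makes the right-hand side strictly positive for every $i$. Thus, for such $\eta$, the adversarial margin of $w_1$ is strictly less than that of $w_2$ on every training point. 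Because $\phi$ is decreasing in its argument, this yields $\phi(y_i w_1^T x_i - \eta \alpha_1) > \phi(y_i w_2^T x_i - \eta \alpha_2)$ for each $i$, and averaging over $i$ gives $\hat{L}_{S'}(w_1) > \hat{L}_{S'}(w_2)$, as desired.

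The only real subtlety is that the inequality must hold simultaneously on every training point, not merely on average, which is why boundedness of the natural margins combined with a sufficiently large $\eta$ is essential. The $\eta \to \infty$ language in the statement is therefore just a convenient way of saying ``for $\eta$ above the explicit threshold $2M/(\alpha_1 - \alpha_2)$.'' The main point requiring care is the monotonicity assumption on $\phi$: if we read ``decreasing'' as strictly decreasing, the argument above is clean, whereas a merely non-increasing loss that saturates on a tail would require an additional hypothesis to preserve the strict inequality.
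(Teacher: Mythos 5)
Your proof is correct and follows essentially the same route as the paper's: both compare adversarial margins pointwise on the finite dataset $S$, use the fact that $\eta\bigl(\cos(w_1,\hat{w}_S)-\cos(w_2,\hat{w}_S)\bigr)$ eventually dominates the bounded natural-margin terms, and then apply monotonicity of $\phi$ before averaging. Your version merely makes the threshold explicit ($\eta > 2M/(\alpha_1-\alpha_2)$) and flags the strict-versus-non-strict monotonicity point, both of which are consistent with the paper's implicit assumptions.
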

\addtocounter{corollary}{-1}
\endgroup

\begin{proof}
    Since $S$ is finite, it suffices to show that, for each $i$, there exists some $N_i$ such that 
    $$\phi(y_i\cdot w_1^Tx_i-\eta \cos(w_1,\hat w_S))
        > \phi(y_i\cdot w_2^Tx_i-\eta \cos(w_2,\hat w_S))$$
    for all $\eta \geq N_i$. However, this is true since $\eta\cos(w_1,\hat w_S)$ is arbitrarily larger than $\eta\cos(w_2,\hat w_S)$ as $\eta \to \infty$, so $y_i\cdot w_1^Tx_i-\eta \cos(w_1,\hat w_S)<y_i\cdot w_2^Tx_i-\eta \cos(w_2,\hat w_S)$ for sufficiently large $\eta$.
\end{proof}
Next, we focus on a different loss function for linear models, specifically the least-squares loss used in regression tasks.

\subsection{Proof for Theorem \ref{th:least_squares_robustness}}

For our second setting, we consider least-squares regression, where now the loss is $\phi(f(x),y) = (f(x)-y)^2$. Our hypothesis space is still linear models $w^Tx$. Then the Rashomon set is ellipsoid $(w-\hat{w})\mathbb{E}[xx^T](w-\hat{w})\leq \epsilon$, where singular values of matrix $\mathbb{E}[xx^T]$ determine its shape.
Let $TV (\mathcal{D}, \mathcal{D}')$ be the total variational distance between two distributions, then under label shift, we can bound the minimum singular value $\sigma_{min}$ of the matrix $\mathbb{E}[xx^T]$ as follows:

\newcommand{\TheoremLeastSquaresRobustness}{
    Let $\D$ and $\D'$ be two data distributions in $\X \times \Y$ such that $\D(x)=\D'(x)$ but $\D(y|x) \neq \D'(y|x)$. Furthermore, suppose that  $y \in [a,b]$ for all $y \in \Y$ and suppose that $\norm{x}_2 \leq C$ for all $x \in \X$. Then, there exists some model in both true Rashomon sets $\RR_\D(\epsilon)$ and $\RR_{\D'}(\epsilon)$ if
    \begin{equation*} \label{eq:ellipsoid_bound}
        TV(\D,\D')
        \leq \frac{2\sqrt{\epsilon}}{(b-a) \cdot C} \cdot \sqrt{\sigma_{min}(E[xx^T])}.
    \end{equation*}
}

\begin{theorem} \label{th:least_squares_robustness}
    \TheoremLeastSquaresRobustness
\end{theorem}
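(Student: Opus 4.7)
The plan is to exploit the quadratic structure of least-squares loss, which makes the Rashomon sets ellipsoids with a common shape matrix $A := \mathbb{E}[xx^T]$ but different centers. By standard computation, the optimal model under a distribution $\D$ is $\hat{w}_\D = A^{-1}\mathbb{E}_\D[xy]$, and a direct quadratic expansion gives $L_\D(w) - L_\D(\hat{w}_\D) = (w-\hat{w}_\D)^T A (w-\hat{w}_\D)$. Hence $\RR_\D(\epsilon)$ and $\RR_{\D'}(\epsilon)$ are ellipsoids of the same shape centered at $\hat{w}_\D$ and $\hat{w}_{\D'}$. Since the two ellipsoids share a shape matrix, a natural candidate that lies in both (when possible) is the midpoint $w^* = \tfrac12(\hat{w}_\D + \hat{w}_{\D'})$. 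Plugging $w^*$ into each ellipsoid condition reduces the statement to the single inequality
\[
(\hat{w}_\D - \hat{w}_{\D'})^T A (\hat{w}_\D - \hat{w}_{\D'}) \leq 4\epsilon.
\]

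The second step is to convert this $A$-norm inequality on the parameters into a Euclidean-norm bound on the gap vector $v := \mathbb{E}_\D[xy] - \mathbb{E}_{\D'}[xy]$. Writing $\hat{w}_\D - \hat{w}_{\D'} = A^{-1} v$, the left-hand side equals $v^T A^{-1} v$, which I bound from above by $\|v\|_2^2 / \sigma_{\min}(A)$. So it suffices to show $\|v\|_2 \leq 2\sqrt{\epsilon\,\sigma_{\min}(A)}$.

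The third step is to bound $\|v\|_2$ by $TV(\D,\D')$, using the assumption that the marginal on $x$ is shared. Conditioning on $x$, the integrand is $x\bigl(\mathbb{E}_{\D(\cdot|x)}[y] - \mathbb{E}_{\D'(\cdot|x)}[y]\bigr)$. Since $y\in[a,b]$, the standard coupling bound $|\mathbb{E}_\mu[y]-\mathbb{E}_\nu[y]|\leq (b-a)\,TV(\mu,\nu)$ applies pointwise in $x$, and $\|x\|_2 \leq C$ by hypothesis. Taking Euclidean norm under the integral and using $TV(\D,\D') = \int \D(x)\,TV(\D(\cdot|x),\D'(\cdot|x))\,dx$ (which holds because the $x$-marginals agree) yields $\|v\|_2 \leq C(b-a)\,TV(\D,\D')$. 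Combining this with the requirement $\|v\|_2 \leq 2\sqrt{\epsilon\,\sigma_{\min}(A)}$ gives exactly the stated condition on $TV(\D,\D')$.

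The only mild technical obstacle is the factor $\sigma_{\min}(A)$: the argument implicitly requires $A$ to be invertible, which is consistent with the theorem statement since $\sigma_{\min}(A)$ appears inside a square root. The midpoint choice is tight for ellipsoids of the same shape, so no cleverer candidate is needed; the rest is routine matrix algebra plus the conditional TV decomposition.
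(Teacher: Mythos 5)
Your proposal is correct and follows essentially the same route as the paper's proof: the midpoint $\tfrac12(\hat w_{\D}+\hat w_{\D'})$ as the common candidate, the shared ellipsoid shape matrix $E[xx^T]$, the reduction to bounding $v^T (E[xx^T])^{-1} v \le \|v\|_2^2/\sigma_{\min}$, and the bound $\|E_{\D}[xy]-E_{\D'}[xy]\|_2 \le (b-a)\,C\,TV(\D,\D')$, which is exactly the paper's Lemma~\ref{lem:least_squares_shift}. The only cosmetic differences are that you derive the ellipsoid characterization directly from the quadratic expansion rather than citing prior work, and you obtain the lemma via the conditional-TV decomposition instead of centering $y$ at $(a+b)/2$ against the joint TV, which are equivalent computations.
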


To prove Theorem \ref{th:least_squares_robustness}, we use Lemma \ref{lem:least_squares_shift} which we prove below.

\begin{lemma}\label{lem:least_squares_shift}
    Suppose that $y \in [a,b]$ for all $y \in \Y$, and suppose that $\norm{x}_2 \leq C$ for all $x \in \X$. Furthermore, suppose that we undergo a distribution shift to $\D'$ where $\D(x)=\D'(x)$ but $\D(y|x) \neq \D'(y|x)$. Then, if $\hat w_{\D'}$ is the optimal linear model for $\D'$ under the least-squares objective, we have that
    $$\norm{E_{x \sim \X}[xx^T](\hat w_\D - \hat w_{\D'})}_2
    \leq (b-a)\cdot C \cdot TV(\D, \D').$$
\end{lemma}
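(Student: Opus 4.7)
The plan is to combine the first-order optimality conditions for the two least-squares problems with the fact that $\D$ and $\D'$ share the same marginal on $x$. Since $\hat w_\D$ and $\hat w_{\D'}$ minimize the squared loss under their respective distributions, the normal equations give
\begin{equation*}
E_{(x,y)\sim\D}[xx^T]\,\hat w_\D = E_{(x,y)\sim\D}[xy], \qquad E_{(x,y)\sim\D'}[xx^T]\,\hat w_{\D'} = E_{(x,y)\sim\D'}[xy].
\end{equation*}
Because $\D(x)=\D'(x)$, the covariance matrix $E[xx^T]$ is identical under both distributions, so subtracting the two equations yields
\begin{equation*}
E[xx^T](\hat w_\D - \hat w_{\D'}) = E_\D[xy] - E_{\D'}[xy].
\end{equation*}
This reduces the problem to bounding the right-hand side in $\|\cdot\|_2$ in terms of $TV(\D,\D')$.

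Next, I would write the difference as an integral against the signed measure $p_\D(x,y)-p_{\D'}(x,y)$ and exploit a centering trick. A direct triangle inequality with $|y|$ would give only a $\max(|a|,|b|)$ factor, which is too loose. Instead, observe that for any constant $c$, because the $x$-marginals are equal,
\begin{equation*}
\int x\,c\,(p_\D(x,y)-p_{\D'}(x,y))\,dx\,dy = c\bigl(E_\D[x]-E_{\D'}[x]\bigr) = 0.
\end{equation*}
Choosing $c=(a+b)/2$ lets us replace $y$ by $y-c$, which satisfies $|y-c|\le (b-a)/2$.

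Finally, applying the triangle inequality to the resulting integral and using $\|x\|_2\le C$ together with the identity $\int |p_\D - p_{\D'}|\,dx\,dy = 2\,TV(\D,\D')$ gives
\begin{equation*}
\|E_\D[xy]-E_{\D'}[xy]\|_2 \le C\cdot \tfrac{b-a}{2}\cdot 2\,TV(\D,\D') = (b-a)\cdot C\cdot TV(\D,\D'),
\end{equation*}
which is the desired bound. The main (only mildly subtle) obstacle is noticing that the equal-marginals assumption allows us to recenter $y$ to the midpoint of $[a,b]$; without this observation one would obtain a strictly weaker constant. The remaining steps (normal equations, triangle inequality on the integral, and the standard identity relating $L^1$ distance of densities to total variation) are routine.
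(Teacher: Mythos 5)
Your proposal is correct and follows essentially the same route as the paper: normal equations under the shared $x$-marginal reduce the claim to bounding $\norm{E_\D[xy]-E_{\D'}[xy]}_2$, then recentering $y$ at $(a+b)/2$ and applying the triangle inequality with $\norm{x}_2\le C$ and the $L^1$--TV identity gives the constant $(b-a)\cdot C$. The only cosmetic difference is how the centering is justified (you integrate the constant against the joint difference and use equal $x$-marginals, while the paper uses that the conditional densities $\D(y|x)$ and $\D'(y|x)$ each integrate to one), which is the same underlying fact.
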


\begin{proof}
    For a data distribution $\D$, the optimal weights for a linear model under the least-squares objective is 
    \begin{equation} \label{eq:least_squares_limit}
        \hat w_\D=(E_{x \sim \X}[xx^T])^{-1}E_{(x,y) \sim \D}[xy].
    \end{equation}
    Then, we can write
    {\allowdisplaybreaks\begin{align*}
        \norm{E[xx^T](\hat w_\D - \hat w_{\D'})}_2
        &= \norm{E_{(x,y) \sim \D}[xy]-E_{(x,y) \sim \D'}[xy]}_2 \\
        &= \norm{\int_x x\D(x)\int_y y(\D(y|x)-\D'(y|x))}_2  \\
        &= \norm{\int_x x\D(x)\int_y \left(y-\frac{a+b}{2}\right)(\D(y|x)-\D'(y|x))}_2 
 \\
        &\leq \int_x \norm{x}_2\D(x)\int_y \abs{y-\frac{a+b}{2}}\abs{\D(y|x)-\D'(y|x)}  \\
        &\leq \frac{b-a}{2} \cdot \int_x \norm{x}_2\D(x)\int_y \abs{\D(y|x)-\D'(y|x)} \\
        &\leq \frac{(b-a) \cdot C}{2} \cdot \int_x \D(x)\int_y \abs{\D(y|x)-\D'(y|x)} \\
        &= \frac{(b-a) \cdot C}{2} \cdot \int_x \int_y \abs{\D(x,y)-\D'(x,y)} \\
        &= (b-a) \cdot C \cdot TV(\D, \D')
    \end{align*}}
    as desired. 
\end{proof}

Now we prove Theorem \ref{th:least_squares_robustness}, which bounds the total variational distance with the value of the minimum singular value of the expected data matrix.

\begin{proof}
    From Theorem 10 of \cite{SemenovaRuPa2022}, we know that $\mathcal{R}_\D(\epsilon)$ is the ellipsoid described by the equation
    $$(w-\hat w_\D)^T\frac{E[xx^T]}{\epsilon}(w-\hat w_\D) \leq 1.$$
    We have a similar equation for $R_{\D'}(\epsilon)$. Equivalently, if $M=(\frac{E[xx^T]}{\epsilon})^{\frac{1}{2}}$ and $S(0,1)$ is the unit ball centered at the origin, then $\mathcal{R}_\D(\epsilon) = M^{-1}S(0,1)+\hat w_\D = \{w \colon \norm{M(w-\hat w_\D)}_2 \leq 1\}$. 
    Consider $\bar{w}=\frac{\hat w_\D+\hat w_{\D'}}{2}$, then
    $$\norm{M(\bar{w}-\hat w_\D)}_2
    = \norm{\frac{M(\hat w_{\D'}-\hat w_\D)}{2}}_2 = \frac{1}{2} \norm{M(\hat w_{\D'}-\hat w_\D)}_2.$$
    Next we will show that $\bar{w} \in \mathcal{R}_\D(\epsilon)$.
    If the bound on the total variational distance in the theorem assumption holds and given Lemma \ref{lem:least_squares_shift}, we have that:
    \begin{align*}
        \norm{M(\hat w_{\D'}-\hat w_\D)}_2
        &= \norm{\frac{1}{\sqrt{\epsilon}}E[xx^T]^{-\frac{1}{2}}E[xx^T](\hat w_{\D'}-\hat w_\D)}_2 \\
        &\leq \frac{1}{\sqrt{\epsilon}}\norm{E[xx^T]^{-\frac{1}{2}}}_2\norm{E[xx^T](\hat w_{\D'}-\hat w_\D)}_2 \\
        &= \frac{1}{\sqrt{\epsilon}\sqrt{\sigma_{min}(E[xx^T])}} \norm{E[xx^T](\hat w_{\D'}-\hat w_\D)}_2 \\
        &\leq \frac{(b-a) \cdot C}{\sqrt{\epsilon}\sqrt{\sigma_{min}(E[xx^T])}} \cdot TV(\D,\D') \\
        &\leq 2.
    \end{align*}
    Therefore, 
    \[\norm{M(\bar{w}-\hat w_\D)}_2 = \frac{1}{2} \norm{M(\hat w_{\D'}-\hat w_\D)}_2 \leq 1,\]
    which means that $w \in \mathcal{R}_\D(\epsilon)$. Similar argument shows that $w \in R_{\D'}(\epsilon)$, which proves the theorem.
\end{proof}

Theorem \ref{th:least_squares_robustness} means that for a model to remain robust across larger distribution shifts (i.e., increased $TV(\mathcal{D},\mathcal{D}')$), the data's covariance matrix $E[\mathbf{x}\mathbf{x}^\top]$ must have a higher minimum singular value, $\sigma_{\min}(E[\mathbf{x}\mathbf{x}^\top])$. 
A relatively high $\sigma_{\min}(E[\mathbf{x}\mathbf{x}^\top])$, particularly when the overall spectrum of singular values for $E[\mathbf{x}\mathbf{x}^\top]$ is well-conditioned contributes to the ``roundness'' of the Rashomon set. In turn, this roundness can be viewed as a form of structural diversity, meaning that the set contains models whose parameters reflect more uniform importance across different feature directions. Therefore, such diverse sets are more likely to contain models that can be selected for their robustness.

\subsection{Diverse ensemble of models from the Rashomon set is more adversarially robust}

The proofs in Section \ref{section:robustness} shows that models that are diverse (e.g. rely on the different logic) are less vulnerable to the adversarial attack of the optimal model. We can generalize this intuition to an ensemble. More specifically, we consider a majority-vote ensemble of models from the Rashomon set. First, we consider independent models in Theorem \ref{th:independent_ensemble}. If the models are sufficiently diverse, their failures on a given adversarial input might be treated as largely independent events. We relax this assumption to allow weak correlations between models in Theorem \ref{th:dependent_ensemble_cantelli}.

\begin{theorem}[Independent ensemble]\label{th:independent_ensemble}
    Let $\{f_1, f_2, ..., f_k\}$ be a subset of models in the Rashomon set where $k$ is odd (to prevent ties). Let $\delta: \mathbb{R}^d \to \mathbb{R}^d$ be a function that takes in a data point and outputs a (possibly random) perturbation for that point. For a random data point $(x,y)$ drawn from the distribution $\mathcal{D}$, let $Z_i=\mathbf{1}_{[f_i(x+\delta(x))\neq y]}$ be a random variable indicating whether model $f_i$ predicts the perturbed data point incorrectly. Let $p_i=Pr_{(x,y)}(Z_i=1)$. Assume that there exists $p<\tfrac12$ such that $p_i\le p$ for all $i$. Let $S_k=\sum_{i=1}^k Z_i$ be the random count of individual models that the attack fools on a single input. Since $k$ is odd, the probability $Pr_{(x,y)}(S_k \geq k/2)$ is exactly the chance that at least half of the $k$ models are wrong, which means the majority-vote ensemble is also wrong on that adversarially-perturbed input. Suppose $Z_1, Z_2, \ldots, Z_k$ are independent. Then, 
    $$Pr_{(x,y)}(S_k \geq k/2)\leq e^{-k D_{KL}(\frac{1}{2}||p)}.$$
\end{theorem}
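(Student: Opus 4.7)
The plan is to use a Chernoff-style exponential-moment bound, which gives exactly the KL-form tail inequality for sums of independent Bernoullis. The heterogeneity of the $p_i$'s (we only know $p_i \le p$ rather than $p_i = p$) is handled either by stochastically dominating each $Z_i$ by an i.i.d.\ $\text{Bernoulli}(p)$ variable, or equivalently by using monotonicity of the moment generating function in the success probability.

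First I would apply Markov's inequality to $e^{tS_k}$ for $t > 0$:
$$\Pr(S_k \ge k/2) = \Pr(e^{tS_k} \ge e^{tk/2}) \le e^{-tk/2}\,\mathbb{E}[e^{tS_k}].$$
Using independence of the $Z_i$'s and the identity $\mathbb{E}[e^{tZ_i}] = 1 - p_i + p_i e^t$, together with the fact that for $t>0$ the map $q \mapsto 1 - q + qe^t$ is increasing in $q$, the assumption $p_i \le p$ gives
$$\mathbb{E}[e^{tS_k}] = \prod_{i=1}^k \bigl(1 - p_i + p_i e^t\bigr) \le \bigl(1 - p + p e^t\bigr)^k.$$
Thus $\Pr(S_k \ge k/2) \le \bigl[e^{-t/2}(1 - p + p e^t)\bigr]^k$, reducing the problem to optimizing a one-dimensional function of $t$.

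Next I would minimize the exponent $g(t) = -t/2 + \log(1 - p + p e^t)$ over $t > 0$. Setting $g'(t) = 0$ yields the optimizer $e^{t^*} = (1-p)/p$, which is strictly positive precisely because $p < 1/2$. Substituting this back and simplifying gives
$$g(t^*) = -\tfrac12 \log\tfrac{1/2}{p} - \tfrac12 \log\tfrac{1/2}{1-p} = -D_{KL}(\tfrac12\,\|\,p),$$
so the bound becomes $\Pr(S_k \ge k/2) \le e^{-k D_{KL}(1/2\|p)}$, as claimed.

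The only real subtlety is making sure that the optimal $t^*$ is positive, so that the monotonicity step (or equivalently the stochastic-domination coupling) is valid. This is exactly where the hypothesis $p < 1/2$ enters in an essential way: it guarantees that $1/2$ lies strictly above each $p_i$, so the event $\{S_k \ge k/2\}$ is a genuine upper-deviation event whose probability decays exponentially. Everything else is standard Chernoff-bound calculus, and I do not anticipate any further obstacles.
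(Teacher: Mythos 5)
Your proposal is correct and follows essentially the same route as the paper's proof: a Chernoff/Markov bound on $e^{tS_k}$, factoring the moment generating function by independence, using $p_i \le p$ (valid since $e^t-1>0$ for $t>0$) to reduce to the homogeneous case, and optimizing at $e^{t^*}=(1-p)/p$. The paper expresses the optimized exponent as $\ln\bigl(2\sqrt{p(1-p)}\bigr) = -D_{KL}\bigl(\tfrac12\|p\bigr)$, which matches your simplification exactly.
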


\begin{proof}
    Based on the Chernoff bound, we can get, for $t>0$,
    $$Pr(S_k \geq k/2)=Pr(e^{tS_k}\geq e^{tk/2}) \leq \mathbb{E}[e^{tS_k}]e^{-tk/2}.$$
    Since, $Z_i's$ are independent, 
    $$\mathbb{E}[e^{tS_k}] = \mathbb{E}[e^{t(\sum_{i=1}^k Z_i)}]=\mathbb{E}[\Pi_{i=1}^k e^{tZ_i}]=\Pi_{i=1}^k \mathbb{E}[e^{tZ_i}].$$
    Since $Z_i$ is a Bernoulli variable, $e^{tZ_i}=1+(e^t-1)Z_i$. Then,
    $$\mathbb{E}[e^{tS_k}]=\Pi_{i=1}^k\mathbb{E}[1+(e^t-1)Z_i]=\Pi_{i=1}^k (1+(e^t-1)\mathbb{E}[Z_i]) = \Pi_{i=1}^k (1+(e^t-1)p_i).$$
    Since $p_i \leq p, \forall i$, $$\mathbb{E}[e^{tS_k}] = \Pi_{i=1}^k (1+(e^t-1)p_i) \leq \Pi_{i=1}^k (1+(e^t-1)p)=(1+(e^t-1)p)^k.$$
    Then, $\mathbb{E}[e^{tS_k}]e^{-tk/2} \leq (1-p+pe^t)^k e^{-tk/2}$. Let $h(t)=\ln (1-p+pe^t)-\frac{t}{2}$. Then, 
    $$ (1-p+pe^t)^k e^{-tk/2}=e^{kh(t)}.$$
    Since the bound holds for any $t>0$, let's find the value of $t$ that gives us the tightest bound. 
    $$\frac{dh}{dt}=\frac{pe^t}{1-p+pe^t}-\frac{1}{2}.$$
    Set it to 0, we get $t^*= \ln \frac{1-p}{p}, e^{t^*}=\frac{1-p}{p}.$ Then we know, 
    \begin{align*}
        h(t^*)&=\ln (1-p+1-p)-\frac{1}{2}\ln \frac{1-p}{p}
        \\&=\ln 2+\ln (1-p)-\frac{1}{2}\ln(1-p) + \frac{1}{2}\ln p\\&=\ln 2 + \frac{1}{2}\ln ((1-p)p)\\&=\ln (2\sqrt{(1-p)p}).
    \end{align*}
    Now, we know $\mathbb{E}[e^{tS_k}]e^{-tk/2} \leq (1-p+pe^{t^*})^k e^{-t^*k/2}=e^{kh(t^*)}$. Since $p \in [0, 1/2]$, $(1-p)p \in [0, 1/4]$ and $h(t^*)\in (-\infty, 0)$. Hence, $e^{kh(t^*)}$ decreases as $k$ increases. 
    
    Also, note that $KL(\frac{1}{2}||p) = -\ln (2\sqrt{(1-p)p})$. Therefore, $e^{kh(t^*)}=e^{-k \cdot KL(\frac{1}{2}||p)}$.
\end{proof}

 Theorem \ref{th:independent_ensemble} shows that even if individual models have a non-trivial probability of being fooled by an attack, the probability that a majority-vote ensemble of these models fails decreases exponentially with the size of the ensemble. 

We can also consider the dependent case, when there is correlation between models in the ensemble to get similar bounds:

\begin{theorem}[Dependent ensemble - this is not helpful]\label{th:dependent_ensemble_cantelli}
    Let $\{f_1, f_2, ..., f_k\}$ be a subset of models in the Rashomon set, $k> 2$, and $k$ is odd (to prevent ties). Let $Z_i=\mathbf{1}_{[f_i(x+\delta)\neq y]}$ be a random variable whether model $f_i$ can predict a perturbed data incorrectly for fixed $\delta \in \mathbb{R}^d$. Let $p_i=Pr(Z_i=1)$. Assume that there exists $p<\tfrac12$ such that $p_i\le p$ for all $i$. Let $S_k=\sum_{i=1}^k Z_i$ be the random count of individual models that the attack fools on a single input. Since $k$ is odd, the probability $Pr(S_k \geq k/2)$ is exactly the chance that at least half of the $k$ models are wrong, which means the majority-vote ensemble is also wrong on that adversarially-perturbed input. Assume that pairwise correlation between models is bounded, $|corr(Z_i,Z_j)|\leq\rho$ for all $i\neq j$, where $0\leq\rho<1$. Then
    \[
    \Pr\big(S_k\ge k/2\big)
    \le
    \frac{p(1-p)\big[1+(k-1)\rho\big]}
         {p(1-p)\big[1+(k-1)\rho\big]+k\big(\tfrac12-p\big)^2}.
    \]
\end{theorem}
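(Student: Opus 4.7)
The statement matches the form of the one-sided Chebyshev (Cantelli) inequality $\Pr(X - E[X] \ge a) \le \sigma^2/(\sigma^2 + a^2)$, so my plan is to reduce the claim to a direct application of Cantelli after establishing tight enough bounds on the mean and variance of $S_k$. I would start by computing $E[S_k] = \sum_i p_i \le kp$, where the inequality uses $p_i \le p$. Since $p < 1/2$, this gives $E[S_k] < k/2$, so the shift $k/2 - E[S_k] > 0$ is strictly positive, which is exactly the precondition Cantelli needs.

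Next I would bound the variance. Writing $\mathrm{Var}(S_k) = \sum_i \mathrm{Var}(Z_i) + \sum_{i\ne j}\mathrm{Cov}(Z_i, Z_j)$, I would use two elementary observations: first, $\mathrm{Var}(Z_i) = p_i(1-p_i) \le p(1-p)$, because $x(1-x)$ is increasing on $[0, 1/2]$ and every $p_i \le p \le 1/2$; second, $\mathrm{Cov}(Z_i, Z_j) = \mathrm{corr}(Z_i,Z_j)\sqrt{\mathrm{Var}(Z_i)\mathrm{Var}(Z_j)} \le \rho\, p(1-p)$ by the hypothesis on pairwise correlations combined with the previous bound. Summing over the $k$ diagonal and $k(k-1)$ off-diagonal terms gives
\[
\mathrm{Var}(S_k) \le k p(1-p) + k(k-1)\rho p(1-p) = k p(1-p)\bigl[1 + (k-1)\rho\bigr].
\]

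With these in hand, I would apply Cantelli. Writing $\Pr(S_k \ge k/2) = \Pr\bigl(S_k - E[S_k] \ge k/2 - E[S_k]\bigr)$ and using $k/2 - E[S_k] \ge k(1/2 - p) > 0$, Cantelli yields
\[
\Pr(S_k \ge k/2) \le \frac{\mathrm{Var}(S_k)}{\mathrm{Var}(S_k) + k^2(1/2 - p)^2}.
\]
Since $V \mapsto V/(V+a^2)$ is increasing in $V$, substituting the variance upper bound preserves the inequality; dividing numerator and denominator by $k$ then produces exactly the bound stated in the theorem.

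The only subtlety I anticipate is getting the two monotonicity directions right in one place: Cantelli's ratio is \emph{decreasing} in the deviation $a$ (so the weaker lower bound $a \ge k(1/2-p)$ can be substituted to produce a valid upper bound on the tail probability) but \emph{increasing} in $\sigma^2$ (so the upper bound on the variance can be substituted in the same direction). Both substitutions go the correct way here, and once that is checked carefully the remainder of the argument is purely mechanical. No hypothesis about $k$ beyond $k$ being odd and $k > 2$ actually enters; the oddness is only used to justify the interpretation of $\{S_k \ge k/2\}$ as majority failure.
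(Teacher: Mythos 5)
Your proposal is correct and follows essentially the same route as the paper's proof: bound $E[S_k]\le kp$ and $\mathrm{Var}(S_k)\le kp(1-p)\bigl[1+(k-1)\rho\bigr]$, then apply Cantelli's inequality with deviation at least $k(\tfrac12-p)$ and divide through by $k$. Your explicit check of the two monotonicity directions (decreasing in the deviation, increasing in the variance) is a careful touch that the paper leaves implicit, but the argument is the same.
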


\begin{proof}
    The variance of $Z_i$ is at most $p(1-p)$. The variance of $S_k$ is at most $kp(1-p) + k(k-1)\rho \cdot p(1-p)=p(1-p)[k+k(k-1)\rho]$. The expectation of $S_k$ is at most $kp$. Then by Cantelli's Inequality,
    \begin{align*}
        P(S_k \geq k/2)
        &= P(S_k-E[S_k] \geq k/2 - E[S_k]) \\
        &\leq P(S_k-E[S_k] \geq k(0.5-p)) \\
        &\leq \frac{\sigma^2}{\sigma^2+k^2(\frac{1}{2}-p)^2} \\
        &\leq \frac{p(1-p)[k+k(k-1)\rho]}{p(1-p)[k+k(k-1)\rho]+k^2(\frac{1}{2}-p)^2} \\
        &= \frac{p(1-p)[1+(k-1)\rho]}{p(1-p)[1+(k-1)\rho]+k(\frac{1}{2}-p)^2} \\
        &= \frac{p(1-p)[1+(k-1)\rho]}{p(1-p)[1+(k-1)\rho+k]+\frac{k}{4}} \\
    \end{align*}
\end{proof}
As $k \to \infty$, we can see that the bound approaches $\frac{1}{1+\frac{(\frac{1}{2}-p)^2}{p(1-p) \cdot \rho}}$, which increases with both $\rho$ and $p$. Thus, as long as the subset of models sampled from the Rashomon set are diverse and therefore decorrelated, an ensemble created from these models is robust to perturbations.

\subsection{Proof for Theorem \ref{thm:KL-div}}
\begingroup
\def\thetheorem{\ref{thm:KL-div}}
\begin{theorem}[KL divergence bound for random ensembles from the Rashomon set]
    \TheoremRsetPrivacy
\end{theorem}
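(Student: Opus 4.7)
The plan is to reduce the expected KL divergence to a squared-error expectation via a chi-square type bound, then apply a bias-variance decomposition and the classical variance formula for sampling without replacement.

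First, I would upper bound $KL(p(x)\|q_\Pi(x))$ pointwise (viewing $p(x)$ and $q_\Pi(x)$ as parameters of two Bernoulli distributions). Writing $KL(p\|q) = p\log(p/q) + (1-p)\log((1-p)/(1-q))$ and applying $\log(1+t)\le t$ to $\log(p/q) = \log(1+(p-q)/q)$ and $\log((1-p)/(1-q))=\log(1+(q-p)/(1-q))$ yields
\[ KL(p\|q)\;\le\; p\,\tfrac{p-q}{q} + (1-p)\,\tfrac{q-p}{1-q} \;=\; \tfrac{(p-q)^2}{q(1-q)}. \]
Since each $f_i(x) \in [\delta,1-\delta]$, the ensemble average $q_\Pi(x)$ lies in $[\delta,1-\delta]$ as well, and on that interval $q(1-q)\ge \delta(1-\delta)$. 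Hence pointwise $KL(p(x)\|q_\Pi(x)) \le (p(x)-q_\Pi(x))^2/[\delta(1-\delta)]$.

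Next, I would take expectation over $\Pi$ and split into bias and variance. By the symmetry of uniform sampling without replacement, $E_\Pi[f_{\pi_i}(x)] = \mu(x)$ for each $i$, so $E_\Pi[q_\Pi(x)] = \mu(x)$. The bias-variance decomposition then gives
\[ E_\Pi[(p(x)-q_\Pi(x))^2] \;=\; (p(x)-\mu(x))^2 + \mathrm{Var}_\Pi[q_\Pi(x)]. \]

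Finally, I would compute $\mathrm{Var}_\Pi[q_\Pi(x)]$ using the standard finite-population correction from survey sampling. Writing $\mathrm{Var}_\Pi[q_\Pi(x)] = \tfrac{1}{m^2}\sum_i \mathrm{Var}[f_{\pi_i}(x)] + \tfrac{1}{m^2}\sum_{i\neq j}\mathrm{Cov}[f_{\pi_i}(x), f_{\pi_j}(x)]$, exchangeability gives $\mathrm{Var}[f_{\pi_i}(x)] = \sigma^2(x)$, and the constraint $\sum_{j=1}^N (f_j(x)-\mu(x)) = 0$ yields the pairwise covariance $\mathrm{Cov}[f_{\pi_i}(x),f_{\pi_j}(x)] = -\sigma^2(x)/(N-1)$. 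Combining these terms gives the classical expression $\mathrm{Var}_\Pi[q_\Pi(x)] = \tfrac{\sigma^2(x)}{m}\cdot\tfrac{N-m}{N-1} = \tfrac{(N-m)\sigma^2(x)}{(N-1)m}$, and plugging back into the chain of inequalities yields the stated bound.

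The main obstacle is the first step: one needs a KL upper bound whose denominator can be pushed below by a constant independent of the random ensemble draw. This is exactly where the assumption $f_i(x)\in[\delta,1-\delta]$ is essential, since without it the $1/[q_\Pi(x)(1-q_\Pi(x))]$ factor can blow up when the ensemble happens to concentrate near $0$ or $1$. The remaining steps (bias-variance decomposition and the without-replacement variance formula) are mechanical, though the pairwise negative covariance calculation is the one place where I would double-check signs.
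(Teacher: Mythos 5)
Your proposal is correct and follows essentially the same route as the paper's proof: bound $KL(p\|q_\Pi)$ by $\frac{(p-q_\Pi)^2}{q_\Pi(1-q_\Pi)}$, use $f_i(x)\in[\delta,1-\delta]$ to push the denominator below by $\delta(1-\delta)$, apply the bias--variance decomposition, and invoke the finite-population (without-replacement) variance formula. The only difference is cosmetic: you derive the chi-square-type bound via $\log(1+t)\le t$ and the variance formula via the $-\sigma^2(x)/(N-1)$ pairwise covariance, where the paper simply cites both standard facts.
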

\addtocounter{theorem}{-1}
\endgroup

\begin{proof}
The KL divergence between two Bernoulli distributions with parameters $p=p(x)$ and $q=q_\Pi(x)$ is given by $KL(p||q) = p\log\frac{p}{q} + (1-p)\log \frac{1-p}{1-q}$. 
Using the Chi-squared divergence upper bound on KL divergence ($KL(p||q) \leq \chi^2(p||q)$), we get:
\[ KL(p||q) \leq \frac{(p-q)^2}{q(1-q)}, \quad \text{provided } q \in (0,1). \]

By assumption, $f_i(x) \in [\delta, 1-\delta]$ for all $i=1, \dots, N$ and some $\delta \in (0, 1/2]$. This implies the ensemble prediction $q_\Pi(x) = \frac{1}{m}\sum_{i=1}^m f_{\pi_i}(x)$ also lies in $[\delta, 1-\delta]$. 
Therefore, the denominator $q_\Pi(x)(1-q_\Pi(x))$ is bounded below: $q_\Pi(x)(1-q_\Pi(x)) \ge \delta(1-\delta) > 0$.
        
Applying this bound to the KL inequality we get that:
    \[ KL(p(x)||q_\Pi(x)) \leq \frac{(p(x)-q_\Pi(x))^2}{q_\Pi(x)(1-q_\Pi(x))} \leq \frac{(p(x)-q_\Pi(x))^2}{\delta(1-\delta)}. \]
        
Now, we take the expectation over the random sampling $\Pi$ of $m$ models:
    \[ \mathbb{E}_{\Pi} [KL(p(x)||q_{\Pi}(x))] \leq \mathbb{E}_{\Pi} \left[ \frac{(p(x)-q_{\Pi}(x))^2}{\delta(1-\delta)} \right] = \frac{1}{\delta(1-\delta)} \mathbb{E}_{\Pi} [(p(x)-q_{\Pi}(x))^2]. \]
        
Let $\mu(x) = E_\Pi[q_\Pi(x)]$ be the mean prediction over the entire Rashomon set. We decompose the expected squared error term:
\[
\begin{split} 
    \mathbb{E}_{\Pi} [(p(x)-q_{\Pi}(x))^2] &= \mathbb{E}_{\Pi} [(p(x) - \mu(x) + \mu(x) - q_{\Pi}(x))^2]\\
    &= \mathbb{E}_{\Pi} [(p(x)-\mu(x))^2 + (\mu(x)-q_{\Pi}(x))^2 + 2(p(x)-\mu(x))(\mu(x)-q_{\Pi}(x))] \\
    & = (p(x)-\mu(x))^2 + \mathbb{E}_{\Pi}[(\mu(x)-q_{\Pi}(x))^2]\\
    & = (p(x)-\mu(x))^2 + Var_{\Pi}(q_\Pi(x)),
\end{split}
\]
where because of the linearity of expectation we used that $\mathbb{E}_\Pi[\mu(x) - q_\Pi(x)] = \mu(x) - \mathbb{E}_\Pi[q_\Pi(x)] = \mu(x) - \mu(x) = 0$ and  $\mathbb{E}_{\Pi}[(\mu(x)-q_{\Pi}(x))^2]$ is the variance of the sample mean $q_\Pi(x)$, denoted as $Var_{\Pi}(q_\Pi(x))$.  For sampling $m$ items without replacement from a finite population of size $N$ with variance $\sigma^2(x) = \frac{1}{N}\sum_{i=1}^N (f_i(x) - \mu(x))^2$, the variance of the sample mean is:

        \[ Var_{\Pi}(q_\Pi(x)) = \frac{\sigma^2(x)}{m} \left( \frac{N-m}{N-1} \right) = \frac{(N-m)\sigma^2(x)}{(N-1)m}. \]
        
        Substituting this variance back into the expression for the expected squared error:
        \[ \mathbb{E}_{\Pi} [(p(x)-q_{\Pi}(x))^2] = (p(x)-\mu(x))^2 + \frac{(N-m)\sigma^2(x)}{(N-1)m}, \]
        which gives us the inequality for the expected KL divergence:
        \[ \mathbb{E}_{\Pi} [KL(p(x)||q_{\Pi}(x))] \leq \frac{(p(x)-\mu(x))^2 + \frac{(N-m)\sigma^2(x)}{(N-1)m}}{\delta(1-\delta)}. \]
    \end{proof}

From the theorem above we know that as $m$ increases towards $N$, the variance term in the numerator decreases (since $N-m \ge 0$), thus reducing the upper bound on the expected KL divergence between the empirical probability $p(x)$ and the ensemble prediction $q_\Pi(x)$. Note that this bound is not limited to the Rashomon set of decision trees. It applies to the Rashomon set of other hypothesis spaces.

\section{Experimental Setup and Results}\label{app:exps}

\subsection{Computation Resources}
We performed experiments on a 2.7Ghz (768GB RAM 48 cores) Intel Xeon Gold 6226 processor. Each model is trained individually on one core per dataset. We requested 32GB memory for each parallel run. 

\subsection{Dataset}
We present results for 6 datasets: four are from the UCI Machine Learning Repository \cite{Dua:2019} (Adult, Bank, Credit, and Diabetes), a recidivism dataset (COMPAS) \cite{LarsonMaKiAn16}, and the Fair Isaac (FICO) credit risk dataset \cite{competition} used for the Explainable ML Challenge. We predict which individuals are arrested within two years of release on the COMPAS dataset, and whether an individual will default on a loan for the FICO dataset. The detailed experimental setups are provided in Appendix \ref{app:more_exp_results}. 

\subsection{More Experimental Results}\label{app:more_exp_results}

In this appendix, we present additional experimental results for the robustness and privacy analysis.

\begin{table}[htbp]
\caption{Summary of parameters for adversarial robustness experiment and number of trees averaged over five fold.}
    \label{tab:adv_rob_trees_summary}
    \centering
    \begin{tabular}{|c||cccccc|}
    \hline
        Dataset & Adult & Bank & COMPAS & Credit & Diabetes & FICO \\
        \hline 
        Rashomon adder $\epsilon$ & 0.01 & 0.02 & 0.01 & 0.01 & 0.04 & 0.01 \\
        \hline
        Average \# Trees & 595508.0 & 1525531.2 & 846640.0 & 127601.4 & 99961.6 & 300473.2\\

    \hline
    \end{tabular}
\end{table}

\subsubsection{Diversity in the Rashomon Set Benefits Adversarial Robustness}
\textbf{Collection and Setup:} We ran this experiment on 6 datasets. Since TreeFARMS takes binary input, we binarized all datasets using the threshold guessing technique proposed in \cite{mctavish2022fast} with n\_estimator = 30, max\_depth = 2. To run TreeFARMS, we set regularization to 0.01 and depth\_budget to 5. The value of $\epsilon$ is tuned to ensure that the constructed Rashomon set contains a sufficient number of trees, usually more than 100,000. The exact $\epsilon$ values and the corresponding number of trees for each dataset are provided in Table \ref{tab:adv_rob_trees_summary}. We adopt the algorithm proposed in \cite{pmlr-v48-kantchelian16} to attack the optimal tree, the tree with the lowest objective value. The attack uses the $S_\infty$ set with $\eta=0.1$ as defined in section \ref{section:robustness}. We report the performance of other trees in the Rashomon set on the adversarial dataset. For visualization purposes, we group the trees based on their prediction patterns on the validation set and measure their Hamming distance to the prediction pattern of the optimal tree.

\noindent \textbf{Results:} Figure \ref{fig:adv_rob_line_plot_full}, a more comprehensive version of Figure \ref{fig:adv_score_pattern_line}, shows that trees with classification patterns similar to the optimal tree (bottom left corner) are more vulnerable to adversarial examples, while those that differ more in their predictions (top right corner) are more robust. The reported Spearman correlation coefficients are all positive, and for 4 out of 6 datasets, the coefficients are above 0.7, indicating a strong positive correlation between diversity and adversarial robustness. The relatively lower Spearman correlation for the Adult and Bank datasets may be due to the limited diversity within their Rashomon sets, which might be caused by the data distributions. 

Figure \ref{fig:adv_rob_scatter_plot_full} shows a scatterplot of each tree’s distance to the optimal tree versus its adversarial score. As we can see, for the COMPAS, Credit, Diabetes, and FICO datasets, we observe a strong positive trend. While for the Adult and Bank datasets, some trees with zero distance from the attacked tree still perform well on the adversarial examples, an overall increasing trend remains visible in the scatter plots, supporting our main conclusion.

\begin{figure}[tbp]
    \centering
    \includegraphics[width=0.95\linewidth]{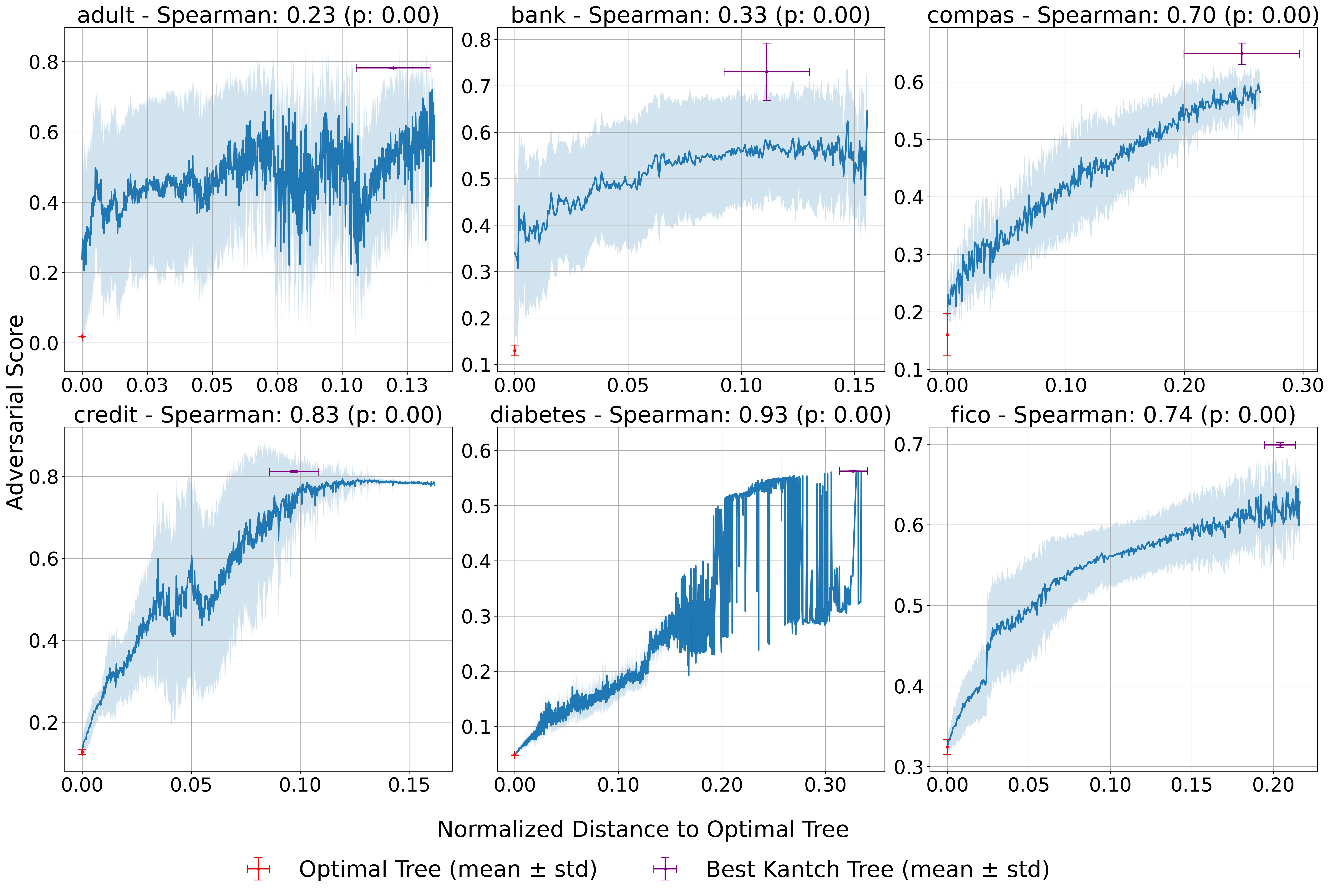}
    \caption{Adversarial score of trees in Rashomon set vs. their distance to optimal tree. Results are aggregated over five folds. The optimal trees (in red) are attacked. The most robust trees (in purple) are far from optimal tree. Trees with the same distance to optimal trees are grouped, and mean and standard deviation of their adversarial score are shown as line plots with shaded uncertainty.}
    \label{fig:adv_rob_line_plot_full}
\end{figure}

\begin{figure}[htbp]
    \centering
    \includegraphics[width=0.95\linewidth]{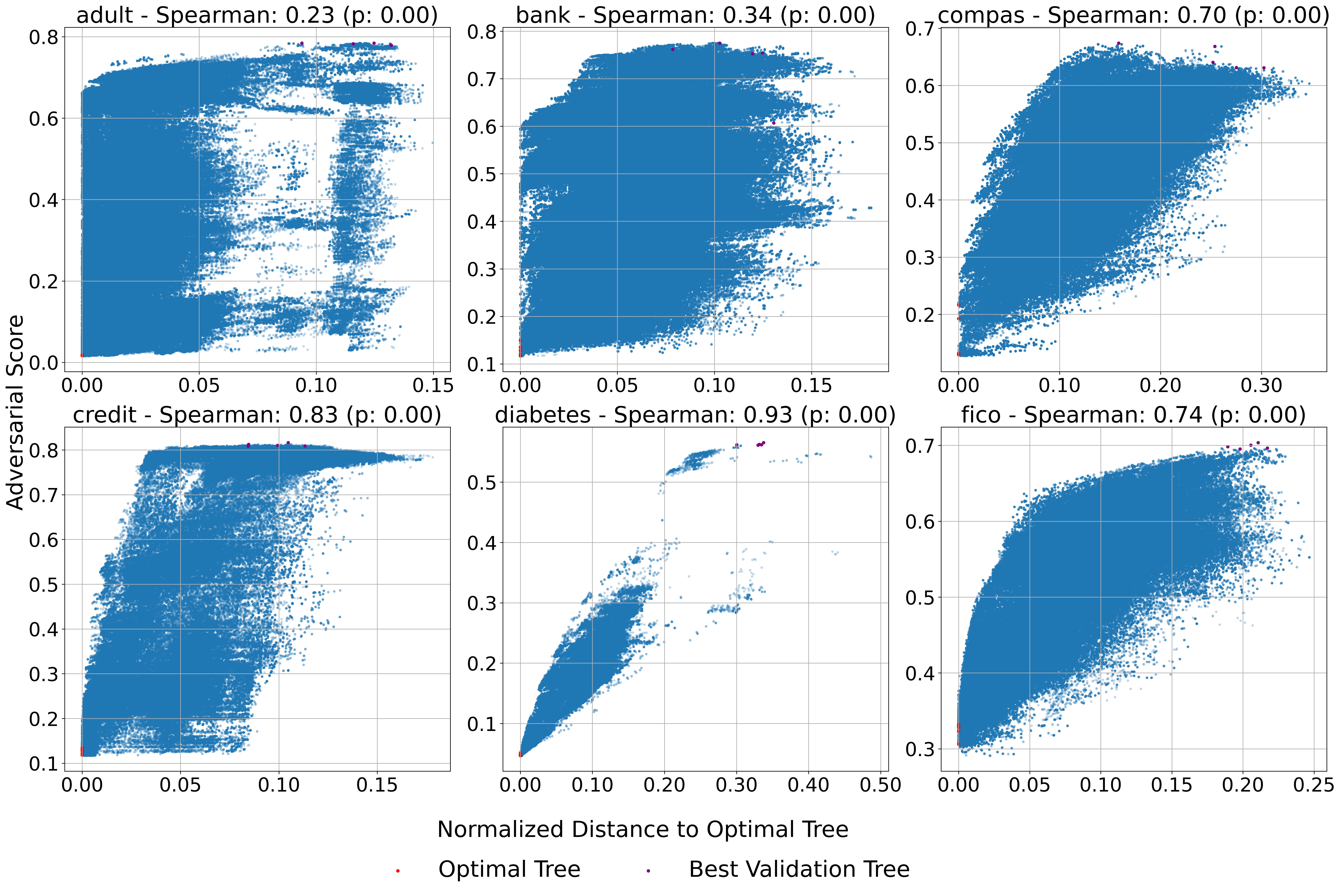}
    \caption{Adversarial score of trees in Rashomon set vs. their distance to optimal tree. Results are aggregated over five folds. The optimal trees (in red) are attacked. The most robust trees (in purple) are far from optimal tree.}
    \label{fig:adv_rob_scatter_plot_full}
\end{figure}

\subsubsection{Diversity in the Rashomon Set Accelerates Information Leakage}\label{app:privacy_experiments}

\noindent \textbf{Collection and Setup:} We ran this experiment on 6 datasets. Following the setup in \cite{ferry2024trained}, we binarized each dataset and subsampled 100 data points to form the training set. We ran TreeFARMS on these 100-sample datasets using $\epsilon = 0.02$ and depth\_budget= 5. We tuned the regularization parameter instead of $\epsilon$ to control the size of the Rashomon set, since the set size is less sensitive to changes in the regularizer. Controlling the size is necessary because the DRAFT algorithm can only feasibly compute ensembles with a few hundred estimators \cite{ferry2024trained}.

The exact values of the regularizer and the average number of trees are reported in Table \ref{tab:priv_tree_summary}. 
Once the Rashomon set is constructed, trees are sequentially selected from the Rashomon set and passed to DRAFT. We run DRAFT multiple times as more trees are added. Specifically, DRAFT is trained after each additional tree from 1 to 50, every 5 trees from 50 to 150, and again at 175 and 200 trees. In total, we consider up to 200 trees from the Rashomon set. 
We run this process five times with different random seeds for sampling data points.
We consider two strategies to select trees. By default, the first tree selected is the optimal model. The \textit{closest} strategy then iteratively selects the tree whose classification pattern has the smallest Hamming distance to that of the optimal tree. The \textit{farthest} strategy greedily selects the tree whose classification pattern has the largest Hamming distance from those of the previously selected trees.

\begin{table}[htbp]
\caption{Summary of parameters for the data reconstruction experiment and the average number of trees across five runs.}
    \label{tab:priv_tree_summary}
    \centering
    \begin{tabular}{|c||cccccc|}
    \hline
        Dataset & adult & bank & compas & credit & diabetes & fico \\
        \hline 
        Regularization $\lambda$ & 0.01 & 0.013 & 0.01 & 0.0165 & 0.0125 & 0.02 \\
        \hline
        Average \# Trees & 29428.6 & 12267.4 & 27432.8 & 6806.8 & 6418.4 & 7327.0 \\

        \hline
    \end{tabular}

\end{table}

\noindent \textbf{Results:}
Figure \ref{fig:privacy_combined_full} shows the comparison of reconstruction error between different selection strategies.  For all datasets, the “farthest” strategy (in purple) has lower reconstruction error, indicating greater information leakage. The “closest” strategy (in green) has better privacy, as it has higher reconstruction error. These results suggest that more diversity in the Rashomon set and disclosing more diverse models lead to increased privacy risk. The random baseline randomly guesses the feature values for each data point. It's a very conservative attack and usually results in high reconstruction error.

\begin{figure}[htbp]
    \centering
    \includegraphics[width=0.95\linewidth]{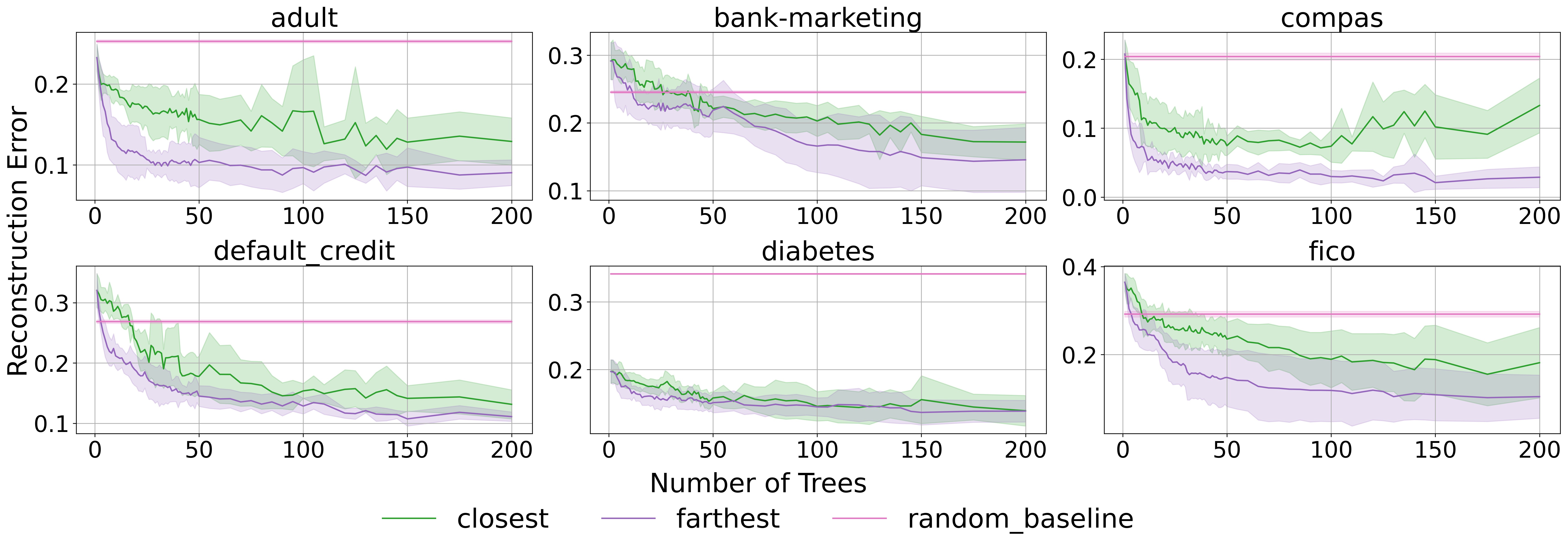}
    \caption{Comparison of reconstruction error between different selection strategies. The random baseline randomly guesses the feature values for each data point.}
    \label{fig:privacy_combined_full}
\end{figure}

\subsubsection{Robustness-Privacy Tradeoffs under the Rashomon Set}
\textbf{Collection and Setup:} We evaluate the robustness-privacy tradeoff directly in this experiment. First, we construct multiple groups of trees of varying sizes from a Rashomon set to represent different levels of diversity. Then, we assess each group's performance under both reconstruction and robustness attacks. To form these groups, we sort trees in the Rashomon set by Hamming distance of their classification patterns to the optimal tree, then select trees at evenly spaced intervals. For instance, given a Rashomon set of 1000 trees, an ensemble of 3 trees would include those ranked 1, 500, and 1000, while an ensemble of 100 trees would include every 10th tree in the sorted list. As in previous experiments, we use DRAFT to perform the reconstruction attack and report the reconstruction error for each group. For the robustness evaluation,  we apply an adversarial attack targeting the optimal tree in the Rashomon set using the $\mathcal{S}_0$ set with $\eta = 1$, which allows a single binary feature flip per data point. 
This setup is used because DRAFT requires binary features. We then evaluate all trees within each group and record the adversarial accuracy of the best-performing tree in the group.

\begin{figure}[htbp]
    \centering
    \includegraphics[width=0.95\linewidth]{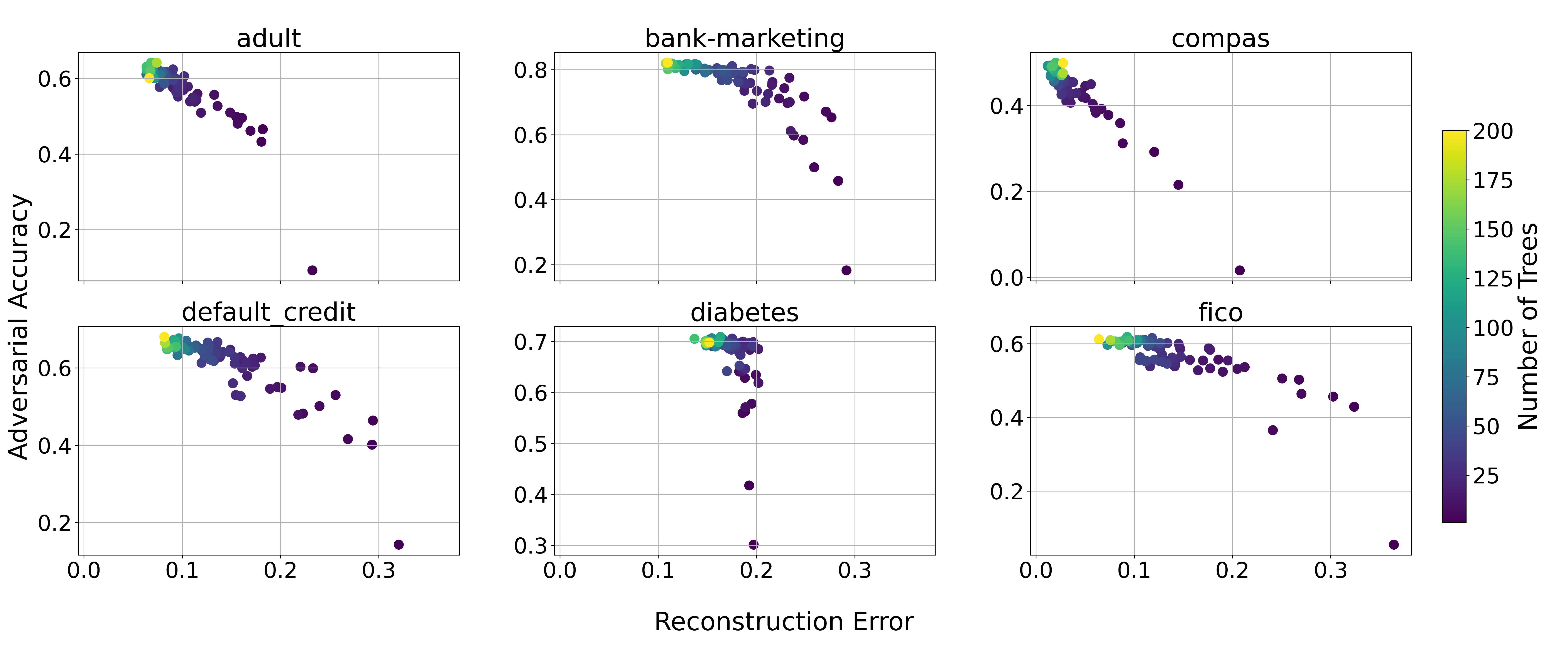}
    \caption{Reconstruction error vs. adversarial accuracy for ensembles constructed with different numbers of evenly sampled trees from the Rashomon set.}
    \label{fig:robust_privacy_combined_full}
\end{figure}

\noindent\textbf{Results:} Figure \ref{fig:robust_privacy_combined_full} 
shows the reconstruction error versus adversarial accuracy for ensembles constructed by selected trees. Each point represents an ensemble of a specific size, with color indicating the number of trees included.
When only a limited number of models are released, the privacy is preserved but the models are not diverse enough to avoid an adversarial attack targeted on the optimal trees. As more trees are selected, robustness improves but at the cost of greater information leakage. 

\subsection{Study for Theorem \ref{th:two_rsets_indistinguishable}}

We provide in this section empirical verification of the theoretical results of Theorem \ref{th:two_rsets_indistinguishable}. To do this, we compute the Rashomon sets for 4 binarized datasets using TreeFARMS with regularization equal to $0.01$, Rashomon parameter $\epsilon=0.03$, and $\text{depth\_budget} = 5$. For each dataset and $24$ values of $K$ uniformly ranging from $0.25\%$ to $6\%$ of the number of points of the dataset, we modify $K$ samples of the dataset. Specifically, the modification begins by using the $k$-means clustering algorithm to compute $5$ clusters of the dataset. We randomly select a cluster with over $K$ samples, and we uniformly and at random remove $K$ samples from that cluster. We then randomly select a different cluster, from which we randomly and with replacement select $K$ samples to duplicate. Lastly, we flip the label of each of the duplicate samples with probability $50\%$. The resulting dataset sees a targeted shift in both the feature distribution as well as the label distribution.

On the modified dataset, we compute the modified Rashomon set with the same value of regularization and depth\_budget as before, as well as Rashomon parameter $\epsilon'$. We use two different values of $\epsilon'$, where $\epsilon'=\epsilon$ reuses the same Rashomon parameter as before, and $\epsilon'=\epsilon+\frac{2K}{n}$ uses the Rashomon parameter stated in Theorem \ref{th:two_rsets_indistinguishable}. We lastly compute the percentage of models in the original Rashomon set that remain in the modified Rashomon set. We repeat this process on $5$ modified datasets in total and average the results.

\begin{figure}[hbtp]
    \centering
    \includegraphics[width=0.95\linewidth]{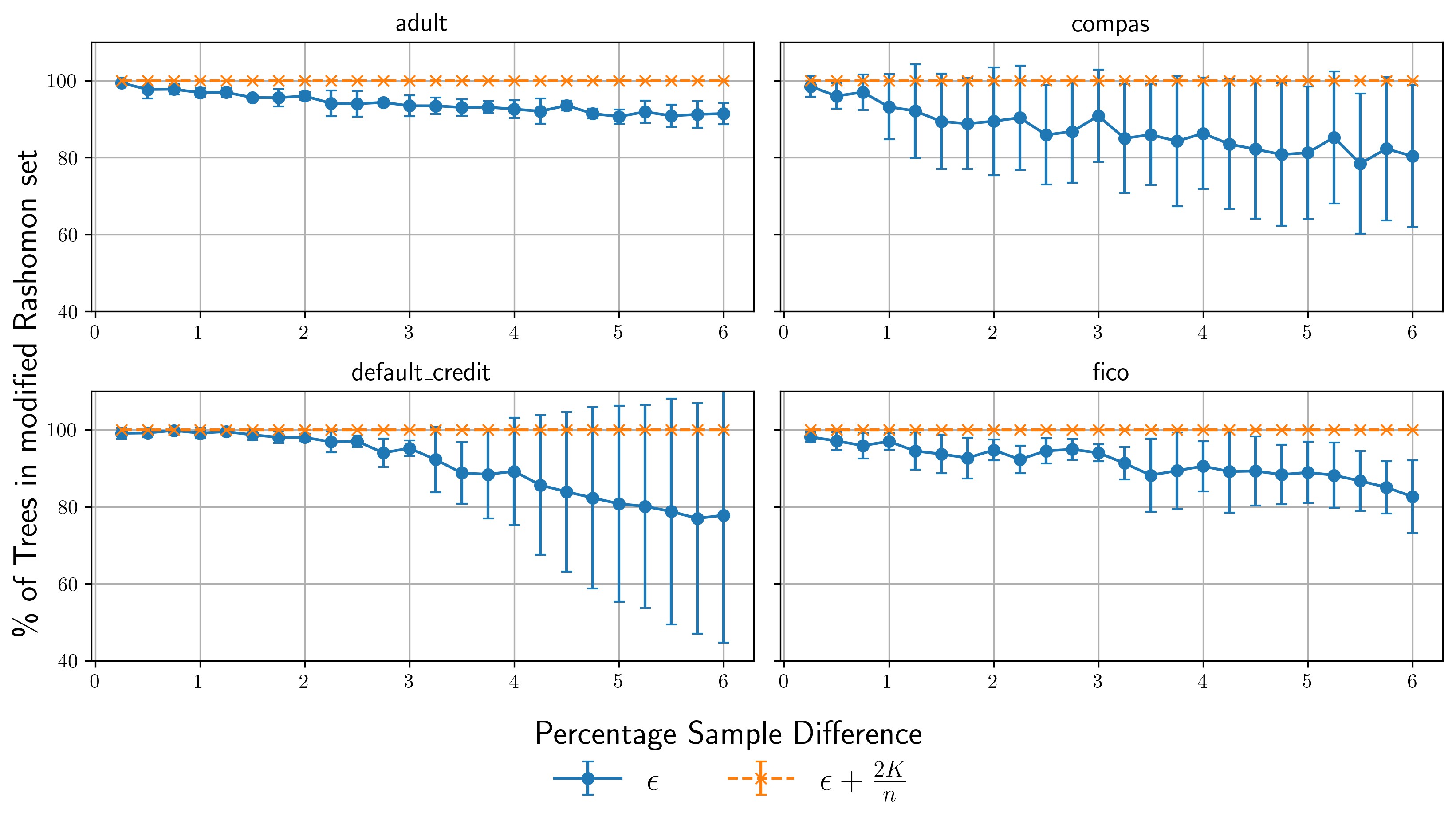}
    \caption{Percentage of trees in the Rashomon set that remain in the modified Rashomon set after modifying $K$ samples.}
    \label{fig:theorem4_experiment_result}
\end{figure}

As we observe in Figure \ref{fig:theorem4_experiment_result}, we can see that every model in the original Rashomon set remains in the modified Rashomon set when $\epsilon'=\epsilon=\frac{2K}{n}$. This agrees with Theorem \ref{th:two_rsets_indistinguishable}, which predicts that the original Rashomon set should be contained within the modified Rashomon set. When we fix $\epsilon'=\epsilon$, we instead see that some models exit the Rashomon set after we modify the dataset, with a greater change in the dataset resulting in a smaller overlap between the two Rashomon sets.

\subsection{Synthetic Study for Theorem \ref{thm:KL-div}}
In Section \ref{section:privacy}, to support analytical conclusions of Theorem \ref{thm:KL-div}, we provided  simulation experiment where we measure the KL divergence between the synthetic distribution and the predicted distribution. The details of the simulation setup are described in this section.

We use five different data distributions for this experiment. 
Let $\mathbf{x} = (x_0, x_1, \dots, x_{d-1}) \in \{0, 1\}^d$ be a binary feature vector of dimension $d$. We set $d=4$ in this experiment. Let $P(Y = 1 \mid \mathbf{x})$ denote the true conditional probability of label $Y = 1$ given $\mathbf{x}$. Also, let the sum of features be $s = \sum_{i=0}^{d-1} x_i$.  The five distributions are defined as follows:

\begin{itemize}
    \item Distribution 1 (Parity): $P(Y=1\mid \mathbf{x}) = 0.1 + 0.8 \cdot \mathbb{1}[s = 0]$. 
    \item Distribution 2: $P(Y=1\mid \mathbf{x}) = 0.15 + 0.7 \cdot \mathbb{1}[x_0 = 1 \land x_1 = 1] $.
    \item Distribution 3 (XOR): $P(Y=1\mid \mathbf{x}) = 0.2 + 0.6 \cdot \mathbb{1}[(x_0 \oplus x_1 = 1) \land (x_2 \oplus x_3 = 0)]$.
    \item Distribution 4 (Random): $P(Y=1\mid \mathbf{x}) = 0.3 + 0.4 \cdot r, r \sim \mathcal{U}(0, 1)$.
    \item Distribution 5: $P(Y=1\mid \mathbf{x}) = \begin{cases}
        0.05  & \text{if } s \leq 1 \\
        0.95  & \text{if } s \geq 3 \\
        0.5 + 0.4(x_0 - 0.5)  & \text{otherwise}.
    \end{cases}$
\end{itemize}

We set $d=4$, the distribution $P(\mathbf{x})$ is uniform. For each of these five true distributions $P(Y=1|\mathbf{x})$, we sample a dataset of 100 points 5 times.
These train sets are then used to train the Rashomon set. The TreeFARMS configuration includes a regularization parameter of $0.001$ and a Rashomon bound multiplier of $0.03$. 

For each training data, we consider ensembles of models from the Rashomon set. We start from one tree and increase counter $J$ that corresponds to the ensemble size  until we reach the ensemble that consists of all trees in the Rashomon set. We estimate the expected KL divergence between the true distribution and an ensemble's average prediction. Our procedure is as follows for each $J$:
    (1) We sample an ensemble of size $J$ 20 times without replacement from the models in the Rashomon set.
    (2) For each ensemble and for every data point, we estimate the ensemble's average predicted probability of $P(Y=1|\mathbf{x})$ by averaging predictions of models in the ensemble.
    (3) For each $\mathbf{x}$, we compute the pointwise KL divergence between the true conditional distribution $P_{\text{true}}(Y=1|\mathbf{x})$ and the ensemble's predicted conditional distribution $P(Y=1|\mathbf{x})$.
    (4) We compute the expected KL divergence  by taking the empirical mean of pointwise KL divergences over 20 samples of the ensembles.
    (5) Finally, we report the expected KL divergence averaged over 5 dataset sampled from the same true distribution.

As we observe in Figure \ref{fig:kl_sim}, the KL-divergences decreases as the number of trees in ensemble increases, verifying the results proved in Theorem \ref{thm:KL-div}.

\section{Additional Studies}

\subsection{Analysis of Ensemble Construction Strategies}
In Section \ref{section:experiments} and Appendix \ref{app:more_exp_results}, we have introduced several strategies for constructing ensembles using trees from the Rashomon set. Here, we briefly summarize each strategy along with its motivation. We also introduce a few additional strategies that enable further interesting analyses.

\noindent\textbf{Random versus Evenly-spaced Sampling:}
As mentioned in section \ref{section:robust_privacy_exp}, we introduce \textit{increment} sampling strategy, which selects trees from the Rashomon set at evenly spaced intervals based on their predicting patterns, depending on the size of the ensemble we want to construct. The goal is to evenly represent the Rashomon set in order to observe its default properties. We compare this strategy to the \textit{random} sampling method, where trees are randomly selected to form the ensemble. For both methods, we include the optimal tree as the first selected tree by default. Figure \ref{fig:combined_increment_random_line} shows this comparison. We can observe that both strategies produce closely aligned error values and follow similar trends. Therefore, using the increment sampling can help us capture the variation in the Rashomon set. 
\begin{figure}
    \centering
    \includegraphics[width=0.95\linewidth]{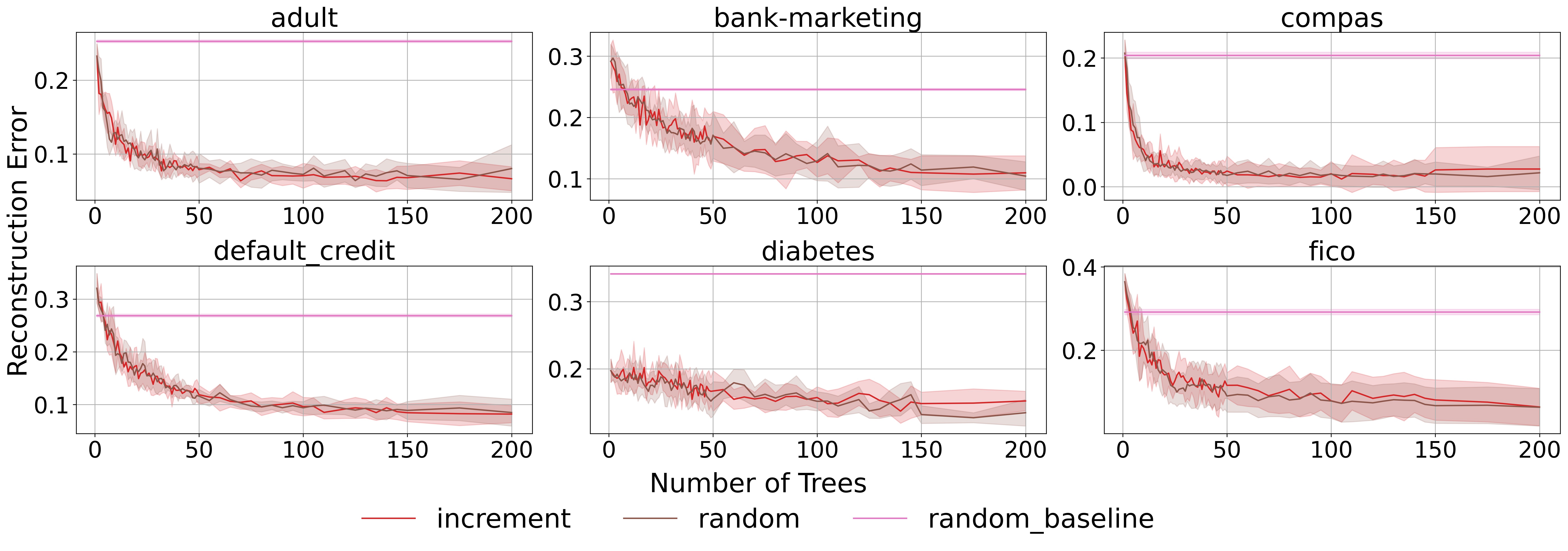}
    \caption{Comparison of reconstruction error between \textit{increment} and \textit{random} strategy. The random baseline guesses the feature values for each data point.}
    \label{fig:combined_increment_random_line}
\end{figure}

\noindent\textbf{Closest, Farthest, and Evenly-spaced Sampling:}
In section \ref{section:privacy_exp}, we introduced the \textit{closest} and \textit{farthest} sampling strategies. The closest strategy selects the next tree that is most similar to the optimal tree in terms of prediction patterns. In contrast, the \textit{farthest} strategy aims to maximize diversity by greedily selecting trees that have the highest average prediction distance from those already selected. We compare these strategies with the \textit{increment} strategy.

\begin{figure}
    \centering
    \includegraphics[width=0.95\linewidth]{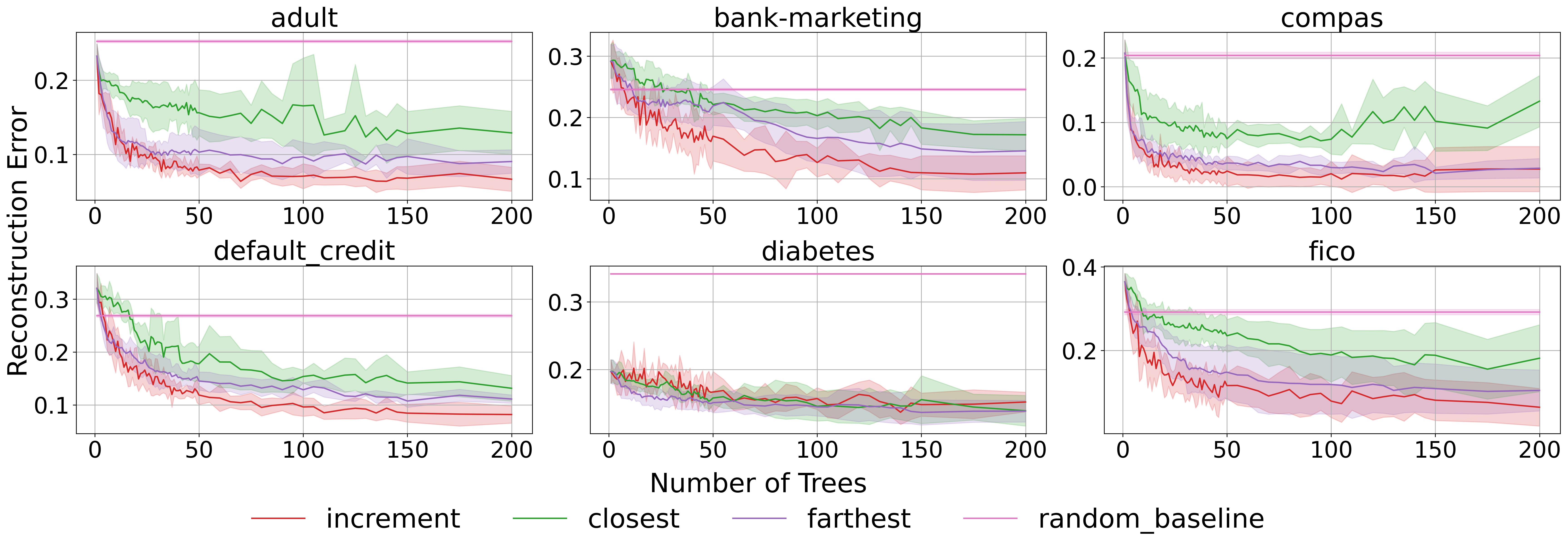}
    \caption{Comparison of reconstruction error between \textit{increment}, \textit{closest}, and \textit{farthest} strategy. The random baseline guesses the feature values for each data point.}
    \label{fig:combined_increment_closest_line}
\end{figure}

Figure \ref{fig:combined_increment_closest_line} shows that the increment strategy leaks more information than the other two strategies, indicating that evenly spaced sampling may pose a higher privacy risk. This is not particularly surprising, as selecting only similar or highly dissimilar trees tends to concentrate on specific regions of the Rashomon set, whereas the increment strategy more effectively captures the full landscape. This finding also motivates future research into alternative definitions of diversity beyond prediction patterns.

\textbf{Sampling Strategies Based on Tree Sparsity:}
In addition to diversity-based and evenly-spaced sampling strategies, we also explore other approaches based on sparsity. In decision trees, sparsity is usually measured by the number of leaves. We study two sparsity-based strategies: the \textit{sparsest} strategy selects trees with the fewest leaves in ascending order, while the \textit{densest} strategy selects trees with the most leaves in descending order.

\begin{figure}[htbp]
    \centering
    \includegraphics[width=0.95\linewidth]{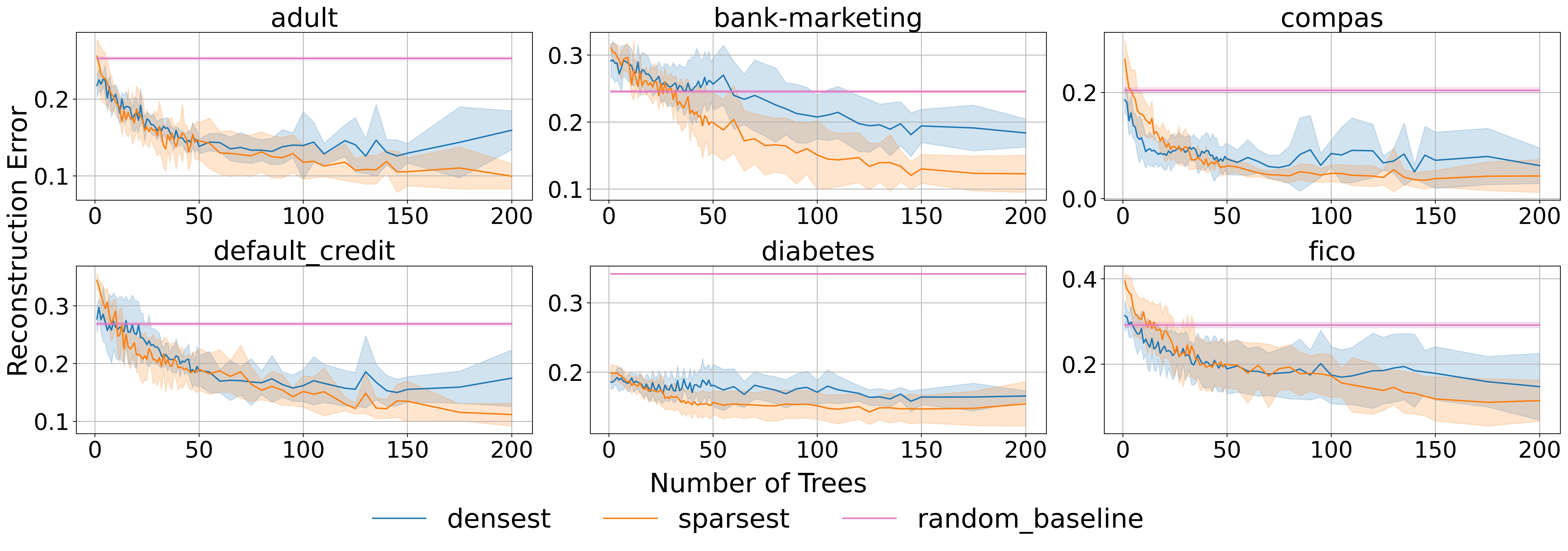}
    \caption{Comparison of the reconstruction error between \textit{sparsest} and \textit{densest} strategy. The random baseline guesses the feature values for each data point.}
    \label{fig:combined_densest_sparsest_line}
\end{figure}

Theorem \ref{thm:sparsity_mi} proves that individual sparse models leak less information. This is reflected in Figure \ref{fig:combined_densest_sparsest_line}, where the sparsest tree (in orange) has higher reconstruction error than the densest tree (in blue) when the x-axis is one. However, as more trees are added, the orange curve drops below the blue curve. This may be because sparse trees, while individually less expressive, offer greater structural diversity and better generalization when aggregated. As a result, ensembles of sparse trees can more effectively cover the input space without overfitting, leading to lower reconstruction error compared to ensembles composed of dense trees, which may be more redundant or over-specialized. This observation suggests that sparsity may have a more complex impact on privacy leakage and motivates further study of how structural factors influence privacy under the Rashomon set setting.

\textbf{Comparing Robustness-Privacy Trade-offs for Different Sampling Strategies}
In Section \ref{section:experiments} Figure \ref{fig:recon_adv_tradeoff}, we empirically observed the robustness-privacy trade-offs when the Rashomon set is present. In this figure, we used incremental sampling procedure. Here, we  verify that other sampling strategies lead to this trade-off as well. Experimentally, we use the  same setup as in Figure  \ref{fig:recon_adv_tradeoff}.

\begin{figure}[htbp]
    \centering
    \includegraphics[width=0.95\linewidth]{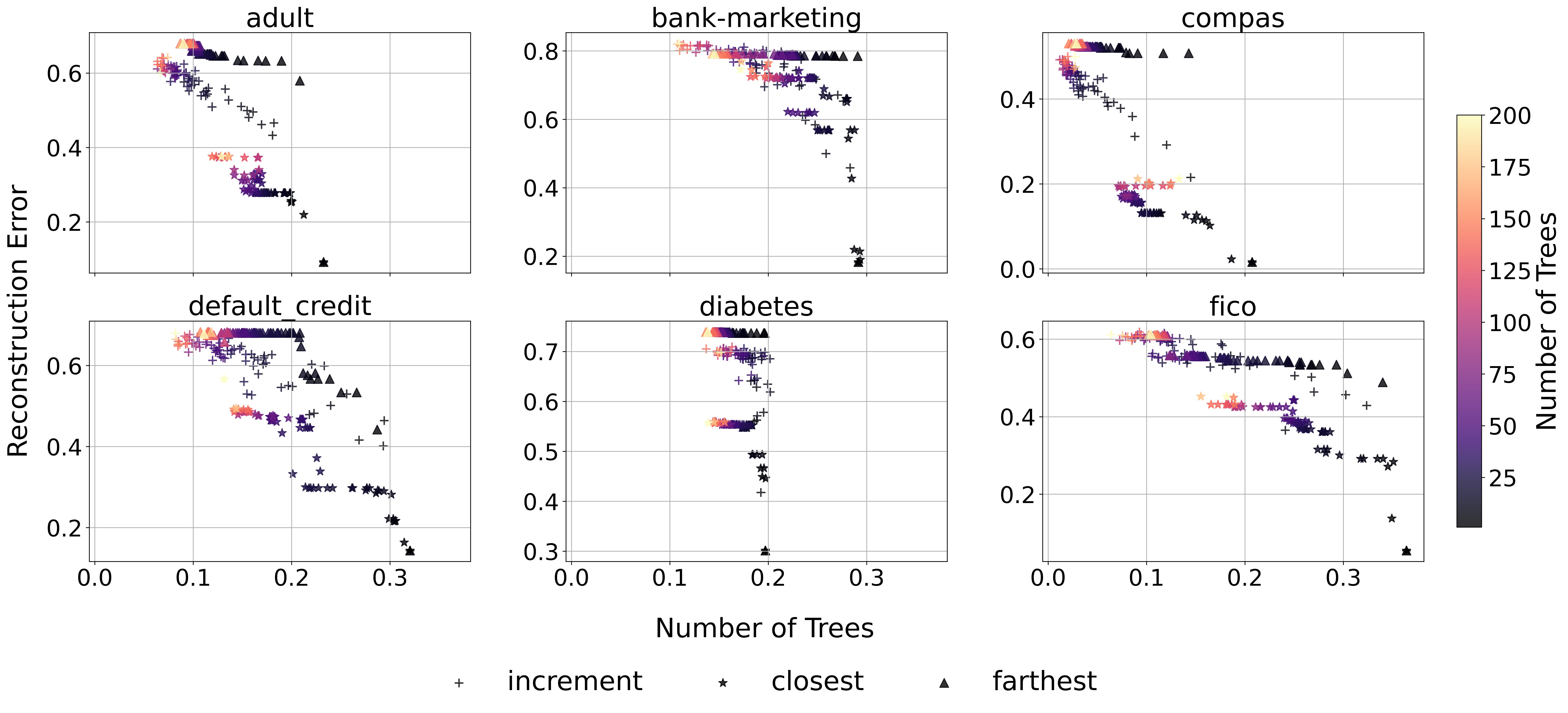}
    \caption{Reconstruction error vs. adversarial accuracy for ensembles constructed with \textit{increment}, \textit{farthest}, and \textit{closest} strategy. Each strategy is plotted with different markers as shown in the legend.}
    \label{fig:combined_stack_scatter}
\end{figure}

In Figure \ref{fig:combined_stack_scatter} we plot the robustness-privacy trade-off for six datasets under different sampling strategies. First of all, we notice that the trade-off is preserved for all three strategies. However, the extent of it differs, most likely due to the diversity of the constructed Rashomon sets (for example, we expect farthest strategy to produce less diverse sets as compared to incremental one). This difference in the extent of the trade-off between sampling strategies is especially evident in datasets like COMPAS and Adult.

Besides the trade-off, we also saw another interesting pattern in this figure.
For some datasets like bank-marketing and diabetes, we observe that one can construct a Rashomon set that contains models that can achieve low privacy risk and high robustness (as indicated by the presence of points in the top right of the plots).
Overall, our empirical findings show that this trade-off is an interesting phenomenon that is influenced by the diversity of the Rashomon set and can be a rich direction for future research.

\subsection{Single Tree is Vulnerable to Adversarial Attack}
In Section \ref{section:robustness} in Theorem \ref{th:rule_list_attack}, we discussed the inherent vulnerability of a single model, using rule lists, which is a one-sided decision tree. Here, we verify empirically this vulnerability for the Rashomon set of sparse decision trees, which include rule lists as well. Our setup is similar to the one in adversarial robustness experiments that we described above. For COMPAS, default-credit, and FICO, we used binarized datasets as described in Section \ref{section:privacy_exp}. 
We did not perform subsampling for the Rashomon set. 
We divided the data into five folds for cross-validation. We trained TreeFARMS with regularization of $0.01$, depth budget of $4$, and the Rashomon adder $\epsilon$  set to $0.01$. We performed $\ell_1$ attack with $\eta= 1$, allowing a single binary flip to each row. We attacked each tree separately.

\begin{figure}[htbp]
    \centering
    \includegraphics[width=0.95\linewidth]{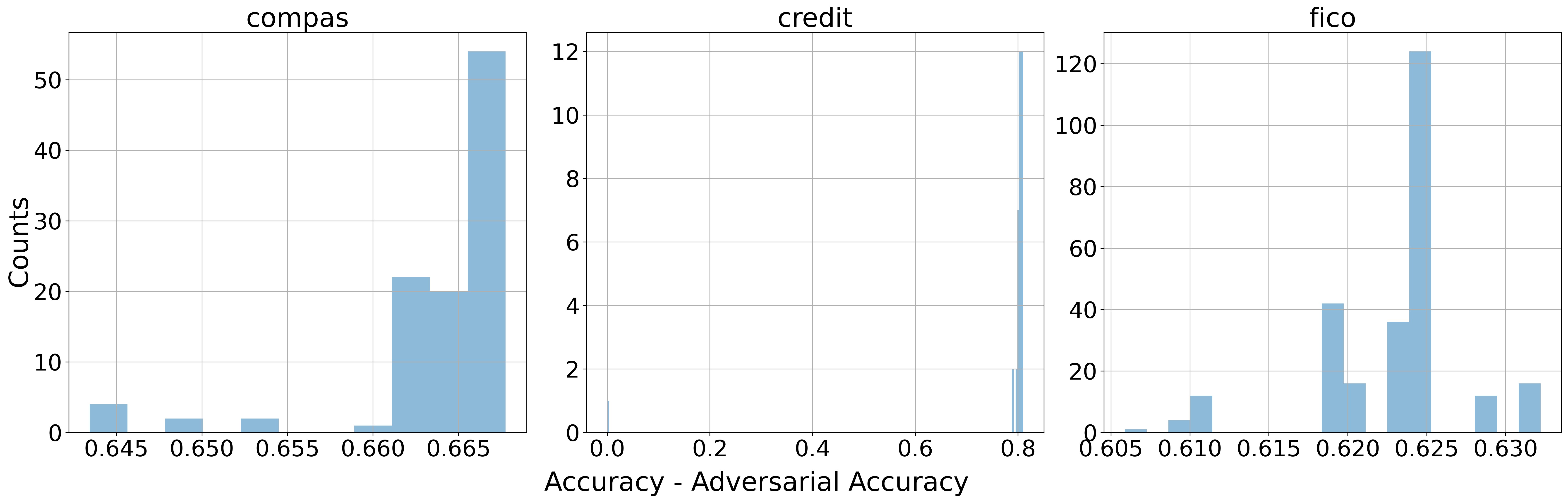}
    \caption{Accuracy gap ($\hat{L}_S(f)-\hat{L}_{S'} (f)$) under $\ell_1$ attack on the individual trees of the Rashomon set of sparse decision trees. The results presented in histogram are averaged over five folds.}
    \label{fig:binary_flip_exp}
\end{figure}

The results of this experiment are presented in Figure \ref{fig:binary_flip_exp}. There was only one tree in the credit dataset for which the attack was ineffective. For all other trees across the three datasets, we observed a steep drop in accuracy of at least 60\% between the original empirical risk, $\hat{L}_S(tree)$, and the adversarial risk, $\hat{L}_{S'}(tree)$, where $tree$ is a model from the Rashomon set. Therefore, within the hypothesis space of sparse decision trees, there is an inherent vulnerability when each tree is attacked individually (similar to Theorem \ref{th:rule_list_attack}). As we show in Section \ref{section:robustness}, diverse Rashomon sets allow for model selection that is more resilient to the attack, provided the notion of diversity aligns with the type of attack.

\end{document}